\newcommand*\LyXThinSpace{\,\hspace{0pt}}
\newcommand{\noun}[1]{\textsc{#1}}
\providecommand{\tabularnewline}{\\}
\providecommand{\algorithmname}{Algorithm}
\theoremstyle{plain}
\newtheorem{thm}{\protect\theoremname}
\theoremstyle{definition}
\newtheorem{defn}[thm]{\protect\definitionname}
\theoremstyle{plain}
\newtheorem{lem}[thm]{\protect\lemmaname}
\theoremstyle{plain}
\newtheorem{prop}[thm]{\protect\propositionname}
\theoremstyle{plain}
\newtheorem{cor}[thm]{\protect\corollaryname}
\theoremstyle{remark}
\newtheorem{rem}[thm]{\protect\remarkname}
\theoremstyle{definition}
\newtheorem{example}[thm]{\protect\examplename}
\providecommand{\corollaryname}{Corollary}
\providecommand{\definitionname}{Definition}
\providecommand{\examplename}{Example}
\providecommand{\lemmaname}{Lemma}
\providecommand{\propositionname}{Proposition}
\providecommand{\remarkname}{Remark}
\providecommand{\theoremname}{Theorem}
\begin{document}

\begin{frontmatter}{}

\title{Stochastic Global Optimization Algorithms: A Systematic Formal Approach}

\author[1]{Jonatan Gomez}

\fntext[]{Department of Computer Science and Engineering, Universidad Nacional
de Colombia, Bogotá, Colombia}

\ead{jgomezpe@unal.edu.co}
\begin{abstract}
As we know, some global optimization problems cannot be solved using
analytic methods, so numeric/algorithmic approaches are used to find
near to the optimal solutions for them. A stochastic global optimization
algorithm (\noun{SGoal}) is an iterative algorithm that generates
a new population (a set of candidate solutions) from a previous population
using stochastic operations. Although some research works have formalized
\noun{SGoal}s using Markov kernels, such formalization is not general
and sometimes is blurred. In this paper, we propose a comprehensive
and systematic formal approach for studying \noun{SGoal}s. First,
we present the required theory of probability ($\sigma$-algebras,
measurable functions, kernel, markov chain, products, convergence
and so on) and prove that some algorithmic functions like swapping
and projection can be represented by kernels. Then, we introduce the
notion of join-kernel as a way of characterizing the combination of
stochastic methods. Next, we define the optimization space, a formal
structure (a set with a $\sigma$-algebra that contains strict $\epsilon$-optimal
states) for studying \noun{SGoal}s, and we develop kernels, like sort
and permutation, on such structure. Finally, we present some popular
\noun{SGoal}s in terms of the developed theory, we introduce sufficient
conditions for convergence of a \noun{SGoal}, and we prove convergence
of some popular \noun{SGoal}s.\end{abstract}
\begin{keyword}
Stochastic Global Optimization; Markov model; Markov Kernel; $\sigma$-algebra;
Optimization space; Convergence
\end{keyword}

\end{frontmatter}{}

\section{\label{sec:Stochastic-Global-Optimization}Stochastic Global Optimization}

A global optimization problem is formulated in terms of finding a
point $x$ in a subset $\varOmega\subseteq\varPhi$ where a certain
function $f\vcentcolon\varPhi\rightarrow\mathbb{R}$, attains is \textbf{best/optimal
}value (minimum or maximum) \cite{liberti2008introduction}. In the
optimization field, $\varOmega$, $\varPhi$, and $f$ are called
the feasible region, the solution space, and the objective function,
respectively. The optimal value for the objective function (denoted
as $f^{*}\in\mathbb{R}$) is suppose to exist and it is unique ($\mathbb{R}$
is a total order). In this paper, the global optimization problem
will be considered as the minimization problem described by equation
\ref{eq:GlobalMin}.

\begin{equation}
min\left(f:\varPhi\rightarrow\mathbb{R}\right)=x\in\varOmega\subseteq\varPhi\mid\left(\forall y\in\varOmega\right)\left(f\left(x\right)\leq f\left(y\right)\right)\label{eq:GlobalMin}
\end{equation}

Since a global optimization problem cannot be solved, in general,
using analytic methods, numeric methods are largely applied in this
task \cite{Rangaiah:2010:SGO:1875054,opac-b1132570}. Some numeric
methods are deterministic, like Cutting plane techniques \cite{doi:10.1137/0108053},
and Branch and Bound \cite{Morrison:2016:BA:2899521.2899589} approaches
while others are stochastic like Hill Climbing and Simulated Annealing.
Many stochastic methods, like Evolutionary Algorithms and Differential
Evolution, are based on heuristics and metaheuristics \cite{Holland75,DeJong75,Eiben99,fleetwood2004introduction}.
In this paper, we will concentrate on Stochastic Global Optimization
Methods (algorithms). A Stochastic Global Optimization ALgorithm (\noun{SGoal})
is an iterative algorithm that generates a new candidate set of solutions
(called population) from a given population using a stochastic operation,
see Algorithm \ref{Alg:SGoal}.

\begin{algorithm}[htbp]
\noun{SGoal}($n$)

1.~~$t_{0}=0$

2.~~$P$ = \noun{InitPop}($n$) 

3.~~\textbf{while} \noun{\textlnot End}($P_{t}$ , $t$) \textbf{do}

4.~~~~$P_{t+1}$ = \noun{NextPop}( $P_{t}$ )

5.~~~~$t=t+1$

6.~~\textbf{return} \noun{best}($P_{t}$)

\caption{\label{Alg:SGoal}Stochastic Global Optimization Algorithm.}
\end{algorithm}

In Algorithm \ref{Alg:SGoal}, $n$ is the number of individuals in
the population\textit{ }\textit{\emph{(population's size)}}\textit{}\footnote{Although, parameters that control the stochastic process, like the
size of the population, can be adapted (adjusted) during the execution
of a \noun{SGoal}, in this paper we just consider any \noun{SGoal}
with fixed control parameters (including population's size).}, \textit{$P_{t}\in\varOmega^{n}$} is the population at iteration
\textit{$t\geq0$},\noun{ $\mbox{\textsc{InitPop}:}\mathbb{N}\rightarrow\varOmega^{n}$}
is a function that generates the initial population\textit{ }\textit{\emph{(according
to some distribution)}}\textit{,}\textit{\noun{ $\mbox{\textsc{NextPop}:}\varOmega^{n}\rightarrow\varOmega^{n}$}}
is a stochastic method that generates the next population from the
current one (the stochastic search), \textit{\noun{$\mbox{\textsc{End}:}\varOmega^{n}\times\mathbb{N}\rightarrow Bool$}}
is a predicate that defines when the \noun{SGoal($n$)} process is
stopped and \noun{$\mbox{\textsc{Best}:}\varOmega^{n}\rightarrow\varOmega$}
is a function that obtains the best candidate solution (individual)
in the population according to the optimization problem under consideration,
see equation \ref{eq:Best}.

\noun{
\begin{equation}
\mbox{\textsc{Best}}\left(x\right)=x_{i}\mid\forall_{k=1}^{n}f\left(x_{i}\right)\leq f\left(x_{k}\right)\land f\left(x_{i}\right)<\forall_{k=1}^{i-1}f\left(x_{k}\right)\label{eq:Best}
\end{equation}
} 

Although there are several different \noun{SGoal} models, such models
mainly vary on the definition of the \noun{NextPop }function. Sections
\ref{sub:Hill-Climbing} to \ref{sub:Differential-Evolution-(DE)}
present three popular \noun{SGoal}s reported in the literature.

\subsection{\label{sub:Hill-Climbing}Hill Climbing (HC)}

The hill climbing algorithm (\textbf{HC}), see Algorithm \ref{Alg:Hill-Climbing},
is a \noun{SGoal} that uses a single individual as population ($n=1$),
generates a new individual from it (using the stochastic method \noun{Variate:$\varOmega\rightarrow\varOmega$}),
and maintains the best individual among them (line 2). Notice that
HC allows to introduce neutral mutations\footnote{A neutral mutation is a variation in the individual that does not
change the value of the objective function \cite{Kimura83}. } if the greather or equal operator ($\geq$) is used in line 2. In
order to maintain more than one individual in the population, the
HC algorithm can be parallelized using Algorithm \ref{Alg:Parallel-Hill-Climbing}.

\begin{algorithm}[htbp]
\noun{NextPop}$_{\textsc{HC}}$($\left\{ x\right\} $)

1.~~$x'=$\noun{ Variate}($x$)

2.~~\textbf{if} $f\left(x'\right)\left\{ >,\geq\right\} f\left(x\right)$
\textbf{then} \textbf{$x'=x$} 

3.~~\textbf{return} $\left\{ x'\right\} $

\caption{\label{Alg:Hill-Climbing}Hill Climbing Algorithm - \noun{NextPop}
Method.}
\end{algorithm}

\begin{algorithm}[htbp]
\noun{NextPop}$_{\textsc{PHC}}$($P$)

1.~~$\left\{ Q_{i}\right\} =$ \noun{NextPop}$_{\mbox{HC}}$($\left\{ P_{i}\right\} $)
for all $i=1,2,\ldots,\left|P\right|$

2.~~\textbf{return} $Q$

\caption{\label{Alg:Parallel-Hill-Climbing}Parallel Hill Climbing Algorithm
(PHC) - \noun{NextPop} Method.}
\end{algorithm}

\subsection{\label{sub:Genetic-Algorithms-(Ga)}Genetic Algorithms (\noun{Ga})}

Genetic algorithms (\noun{Ga})s are optimization techniques based
on the principles of natural evolution \cite{Holland75}. Although
there are several different versions of \noun{Ga}s, such as Generational
Genetic (\textbf{\noun{GGa}}) algorithms and Steady State Genetic
(\textbf{\noun{SSGa}}) algorithms, in general, all G\noun{a}s have
the same structure. Major differences between them are in the encoding
scheme, in the evolution mechanism, and in the replacement mechanism.
Algorithms \ref{Alg:GGA} and \ref{Alg:SSGA} present the \noun{GGa}
and \noun{SSGa}, respectively. There, \noun{PickParents}:$\varOmega^{n}\rightarrow\mathbb{N}^{2}$
picks two individuals (indices) as parents, \noun{XOver:}$\varOmega^{2}\rightarrow\varOmega^{2}$
combines both of them and produces two new individuals, \noun{Mutate}:$\varOmega^{2}\rightarrow\varOmega^{2}$
produces two individuals (offspring) that are mutations of such two
new individuals, \noun{Best$_{2}$}:$\varOmega^{4}\rightarrow\varOmega^{2}$
picks the best two individuals between parents and offspring, and
\noun{Bernoulli}($r$) generates a \emph{true} value following a Bernoulli
distribution with probability $CR$.

\begin{algorithm}[htbp]
\noun{NextPop}$_{\textsc{GGa}}$($P$)

1.~~\textbf{for} $i=1$ \textbf{to} $\frac{n}{2}$

2.~~~~$\left\{ a,b\right\} =$\noun{PickParents}($P$)

3.~~~~\textbf{if} \noun{Bernoulli}($CR$)\noun{ }\textbf{then}
$\left\{ Q_{2i-1},Q_{2i}\right\} =$\noun{ Mutate(XOver}($P_{a}$,
$P{}_{b}$)) 

4.~~~~\textbf{else }$\left\{ Q_{2i-1},Q_{2i}\right\} =\left\{ P_{a},P_{b}\right\} $

5.~~\textbf{return} $Q$

\caption{\label{Alg:GGA}Generational Genetic Algorithm (\noun{GGa}) - \noun{NextPop}
Method.}
\end{algorithm}

\begin{algorithm}[htbp]
\noun{NextPop}$_{\textsc{SSGa}}$($P$)

1.~~$\left\{ a,b\right\} =$\noun{ PickParents}($P$)

2.~~$Q_{k}=P_{k}$ for all $k=1,2,\ldots\left|P\right|$, $k\neq a,b$

3.~~\textbf{if} \noun{Bernoulli}($CR$)\noun{ }\textbf{then} $\left\{ c_{1},c_{2}\right\} =$\noun{
Mutate(XOver}($P_{a}$, $P_{b}$))

4.~~\textbf{else }$\left\{ c_{1},c_{2}\right\} =$\noun{ Mutate(}$P_{a}$,
$P_{b}$)

5.~~\textbf{$\left\{ Q_{a},Q_{b}\right\} =$}\noun{ Best$_{2}$$\left(c_{1},c_{2},P_{a},P_{b}\right)$}

6.~~\textbf{return} $Q$

\caption{\label{Alg:SSGA}Steady State Genetic Algorithm (\noun{SSGa}) - \noun{NextPop}
Method.}
\end{algorithm}

\subsection{\label{sub:Differential-Evolution-(DE)}Differential Evolution (\noun{DE})}

Differential Evolution (\noun{DE}) algorithm is an optimization technique,
for linear spaces, based on the idea of using vector differences for
perturbing a candidate solution, see Algorithm \ref{Alg:DE}. Here,
$\varOmega$ is a $d$-dimensional linear search space, \noun{PickDifParents:$\mathbb{N}\times\mathbb{N}\rightarrow\mathbb{N}^{3}$
}gets three individuals (indices $a,$ $b$, and $c$) that are different
from each other and different from the individual under consideration
($i$), $0\leq CR\leq1$ is a crossover rate, and $0\leq F\leq2$
is the difference weight.

\begin{algorithm}[htbp]
\noun{NextInd}$_{\textsc{DE}}$($P$, $i$)

1.~~$\left\{ a,b,c\right\} =$\noun{ PickDifParents}($\left|P\right|$,
$i$) 

2.~~$R\sim\mathbb{N}\left[1,d\right]$

3.~~\textbf{for $k=1$ to $d$} 

5.~~\textbf{~~if} \noun{Bernoulli}($CR$)\noun{ }or $k=R$\noun{
}\textbf{then} $q_{k}=P_{a,k}+F*\left(P_{b,k}-P_{c,k}\right)$

6.~~~~\textbf{else} $q_{k}=P_{i,k}$

7.~~\textbf{return} $q$

~

\noun{NextPop}$_{\textsc{DE}}$($P$)

1.~~\textbf{for} $i=1$ \textbf{to} $n$

2.~~~~$Q_{i}=\mbox{\textsc{NextInd}}{}_{\textsc{DE}}\left(P,i\right)$

3.~~\textbf{return} $Q$

\caption{\label{Alg:DE}Differential Evolution Algorithm (\noun{DE}) - \noun{NextPop}
Method.}
\end{algorithm}

\section{Measure and Probability Theory }

In this section, we introduce the basic measure and probability theory
concepts that are required for a formal treatment of \noun{SGoal}s.
First, we will concentrate on the concept of family of sets, required
for formalizing concepts like event and random observation (subsections
\ref{sub:Family-of-Sets} and \ref{sub:sigma-algebras}). Next, we
will cover the concepts of measurable, measure, and probability measure
functions (subsection \ref{sub:Functions}) that are required for
defining and studying the notion of Kernel (subsection \ref{sub:Kernel}),
notion that will be used as formal characterization of stochastic
methods used by \noun{SGoal}s. Then, we will present the concept of
Markov chains (subsection \ref{sub:Markov-Chains}), concept that
is used for a formal treatment of \noun{SGoal}s. Finally, we introduce
two concepts of random sequence convergence (subsection \ref{sub:Convergence})
for studying the convergence properties of a \noun{SGoal}.

\subsection{\label{sub:Family-of-Sets}Family of Sets}

Probability theory starts by defining the space of elementary events
(a nonempty set $\varOmega$) and the system of observable events
(a family of subsets of $\Omega$). In the case of a formal treatment
of a \noun{SGoal}, the space of elementary events is the set of possible
populations while the system of observable events is defined by any
subset of populations that can be generated, starting from a single
population, through the set of stochastic methods used by the \noun{SGoal}.
In the rest of this paper, let $\varOmega\neq\textrm{Ø}$ be a non-empty
set (if no other assumption is considered).
\begin{defn}
(\textbf{Power Set}) Let $\varOmega$ be a set, the power set of $\varOmega$,
denoted as $2^{\varOmega}$, is the family of all subsets of $\varOmega$,
i.e. $2^{\varOmega}=\left\{ A\mid A\subseteq\varOmega\right\} $.
\end{defn}
Clearly, the system of observable events is a subset $\mathcal{A}$
of $2^{\varOmega}$ that satisfies some properties. Here, we introduce
families of sets with some properties that are required for that purpose.
Then, we stablish a relation between two of them.
\begin{defn}
\label{def:families}Let $\mathcal{A}\subseteq2^{\varOmega}$ be a
family of subsets of $\varOmega$. 
\begin{enumerate}
\item[\noun{$\mbox{\textsc{df}}$}]  (\textbf{disjoint family}) $\mathcal{A}$ is a disjoint family if
$A\bigcap B=\emptyset$ for any pair of $A\neq B\in\mathcal{A}$.
\item[\noun{$\mbox{\textsc{cf}}$}]  (\textbf{countable family}) $\mathcal{A}$ is a countable family
if $\mathcal{A}=\left\{ A_{i}\right\} _{i\in I}$ for some countable
set $I$.
\item[\noun{$\mbox{\textsc{cdf}}$}]  (\textbf{countable disjoint family}) $\mathcal{A}$ is a countable
disjoint family if $\mathcal{A}$ is \noun{$\mbox{\textsc{cf}}$}
and .
\item[\noun{$\overline{\mbox{\textsc{c}}}$}]  (\textbf{close under complements}) $\mathcal{A}$ is close under
complements if $A\in\mathcal{A}$ then $A^{c}\equiv\varOmega\setminus A\in\mathcal{A}$. 
\item[\noun{$\overline{\mbox{\textsc{pd}}}$}]  (\textbf{close under proper differences}) $\mathcal{A}$ is close
under proper differences if $A,B\in\mathcal{A}$ and $A\subset B$
then $B\setminus A\in\mathcal{A}$.
\item[\noun{$\overline{\mbox{\textsc{cdu}}}$}]  (\textbf{close under countable disjoint unions}) $\mathcal{A}$
is close under countable disjoint unions if $\bigcup_{i\in I}A_{i}\in\mathcal{A}$
for all $\left\{ A_{i}\in\mathcal{A}\right\} _{i\in I}$ \noun{$\mbox{\textsc{cdf}}$}. 
\item[\noun{$\overline{\mbox{\textsc{cu}}}$}]  (\textbf{close under countable unions}) $\mathcal{A}$ is close
under countable disjoint unions if $\bigcup_{i\in I}A_{i}\in\mathcal{A}$
for all $\left\{ A_{i}\in\mathcal{A}\right\} _{i\in I}$ \noun{$\mbox{\textsc{cf}}$}.
\item[\noun{$\overline{\mbox{\textsc{ci}}}$}]  (\textbf{close under countable intersections}) $\mathcal{A}$ is
close under countable intersections if $\bigcap_{i\in I}A_{i}\in\mathcal{A}$
for all $\left\{ A_{i}\in\mathcal{A}\right\} _{i\in I}$ \noun{$\mbox{\textsc{cf}}$}.
\item[\textbf{$\pi$}]  (\textbf{$\pi$-system}) $\mathcal{A}$ is a \textbf{$\pi$}-system
if it is close under finite intersections, i.e. if $A,B\in\mathcal{A}$
then $A\bigcap B\in\mathcal{A}$.
\item[\noun{$\lambda$}]  (\textbf{$\lambda$-system}) $\mathcal{A}$ is called\textbf{ $\lambda$}-system
iff ($\lambda.1$) $\textrm{Ø}\in\mathcal{A}$, ($\lambda.2$) $\mathcal{A}$
is \noun{$\overline{\mbox{\textsc{pd}}}$} and ($\lambda.3$) $\mathcal{A}$
is \noun{$\overline{\mbox{\textsc{cdu}}}$.}
\end{enumerate}
\end{defn}
\begin{lem}
\label{lem:lambda-cupd}Let $\mathcal{A}\subseteq2^{\varOmega}$ 
\begin{enumerate}
\item (\noun{$\lambda\rightarrow\overline{\mbox{\textsc{pd}}}$}) If $\mathcal{A}$
is $\lambda$-system then $\mathcal{A}$ is \noun{$\overline{\mbox{\textsc{pd}}}$}.
\item (\noun{$\overline{\mbox{\textsc{c}}}\rightarrow\left(\overline{\mbox{\textsc{cu}}}\leftrightarrow\overline{\mbox{\textsc{ci}}}\right)$})
If $\mathcal{A}$ is \noun{$\overline{\mbox{\textsc{c}}}$} then $\mathcal{A}$
is \noun{$\overline{\mbox{\textsc{cu}}}$ }iff $\mathcal{A}$ is \noun{$\overline{\mbox{\textsc{ci}}}$.}
\end{enumerate}
\end{lem}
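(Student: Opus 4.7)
The plan is to split the work along the two parts of the lemma, which concern unrelated closure properties.

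For part 1, a glance at Definition \ref{def:families} shows that $\overline{\mbox{\textsc{pd}}}$ is clause $\lambda.2$ of the defining properties of a $\lambda$-system. The implication therefore unwinds directly from the definition, and the proof amounts to citing that clause; I would write one sentence pointing to $\lambda.2$ and move on.

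For part 2, the natural tool is De Morgan's law for countable families. Assume $\mathcal{A}$ is $\overline{\mbox{\textsc{c}}}$. To establish the direction $\overline{\mbox{\textsc{cu}}} \Rightarrow \overline{\mbox{\textsc{ci}}}$, I would take an arbitrary countable family $\{A_i\}_{i \in I} \subseteq \mathcal{A}$, invoke $\overline{\mbox{\textsc{c}}}$ to pass to the complementary family $\{A_i^c\}_{i \in I}$ (still countable, with the same index set), apply $\overline{\mbox{\textsc{cu}}}$ to produce $\bigcup_{i \in I} A_i^c \in \mathcal{A}$, and then apply $\overline{\mbox{\textsc{c}}}$ once more to conclude $\bigcap_{i \in I} A_i = \bigl(\bigcup_{i \in I} A_i^c\bigr)^c \in \mathcal{A}$. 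The reverse implication is completely symmetric, using instead the identity $\bigcup_{i \in I} A_i = \bigl(\bigcap_{i \in I} A_i^c\bigr)^c$.

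There is no serious obstacle here: both parts rest on definitional unfolding and on the duality between countable unions and intersections via complementation. The only item worth flagging is that forming the family of complements preserves the countability of the index set, which is immediate because the index set is unchanged, so that the hypotheses $\overline{\mbox{\textsc{cu}}}$ and $\overline{\mbox{\textsc{ci}}}$ apply to the transformed families exactly as they do to the originals.
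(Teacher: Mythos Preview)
Your proof of part~2 is correct and is exactly the argument the paper gives: pass to the complementary family using $\overline{\mbox{\textsc{c}}}$, apply the assumed closure, and complement once more via De~Morgan.

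For part~1 there is a genuine discrepancy worth noting. You observe, correctly, that Definition~\ref{def:families} lists $\overline{\mbox{\textsc{pd}}}$ itself as clause~$\lambda.2$, so the implication is immediate. The paper's own proof, however, does not argue this way: it treats $\lambda.2$ as closure under \emph{complements} (writing ``$\lambda.2$: $\mathcal{A}$ is $\overline{\mbox{\textsc{c}}}$'') and then derives $\overline{\mbox{\textsc{pd}}}$ by showing $B\setminus A=(A\cup B^{c})^{c}$ using $\overline{\mbox{\textsc{c}}}$ and $\overline{\mbox{\textsc{cdu}}}$. In other words, the paper's proof is written against the alternative (and more common) axiomatization of a $\lambda$-system in which $\lambda.2=\overline{\mbox{\textsc{c}}}$, not against the definition actually printed a few lines above. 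Relative to the stated definition your one-line argument is the right one; the paper's argument is the one you would need if $\lambda.2$ had been taken to be closure under complements. You may want to flag this inconsistency if you are writing up the result.
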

\begin{proof}
\textbf{{[}1. }\noun{$\lambda\rightarrow\overline{\mbox{\textsc{pd}}}$}\textbf{{]}}
If $A,B\in\mathcal{A}$ then $B^{c}\in\mathcal{A}$ ($\lambda.2$:$\mathcal{A}$
is \noun{$\overline{\mbox{\textsc{c}}}$}). Clearly, $A\bigcap B^{c}=\textrm{Ø}$
($A\subset B$) and $A\bigcup B^{c}\in\mathcal{A}$ ($\lambda.3$:$\mathcal{A}$
is \noun{$\overline{\mbox{\textsc{cdu}}}$}). So, $\left(A\bigcup B^{c}\right)^{c}\in\mathcal{A}$
($\lambda.2$:$\mathcal{A}$ is \noun{$\overline{\mbox{\textsc{c}}}$}),
i.e., $A^{c}\bigcap B\in\mathcal{A}$ (Morgan's law). Therefore, $B\setminus A\in\mathcal{A}$
(def. proper difference). \textbf{{[}2. }\noun{$\overline{\mbox{\textsc{c}}}\rightarrow\left(\overline{\mbox{\textsc{cu}}}\leftrightarrow\overline{\mbox{\textsc{ci}}}\right)$}\textbf{{]}}
If $\left\{ A_{i}\right\} _{i\in I}$ is \noun{$\mbox{\textsc{cf}}$}
in \noun{$\mathcal{A}$, }then $\left\{ A_{i}^{c}\right\} _{i\in I}$
is \noun{$\mbox{\textsc{cf}}$} in $\mathcal{A}$ ($\mathcal{A}$
is \noun{$\overline{\mbox{\textsc{c}}}$})\noun{. }Now,\noun{ }if\noun{
}$\mathcal{A}$ is \noun{$\overline{\mbox{\textsc{cu}}}$} then \noun{${\displaystyle {\textstyle \bigcap}_{i\in I}}A_{i}=\left({\displaystyle {\textstyle \bigcup}_{i\in I}A_{i}^{c}}\right)^{c}\in\mathcal{A}$
(}Morgan's law and\noun{$\mathcal{A}$ }is \noun{$\overline{\mbox{\textsc{c}}}$}),
so $\mathcal{A}$ is \noun{$\overline{\mbox{\textsc{ci}}}$}. Finally,
if\noun{ }$\mathcal{A}$ is \noun{$\overline{\mbox{\textsc{ci}}}$}
then \noun{${\displaystyle {\textstyle \bigcup}_{i\in I}}A_{i}=\left({\displaystyle {\textstyle \bigcap}_{i\in I}A_{i}^{c}}\right)^{c}\in\mathcal{A}$
(}Morgan's law and\noun{ $\mathcal{A}$ }is \noun{$\overline{\mbox{\textsc{c}}}$}),
so $\mathcal{A}$ is \noun{$\overline{\mbox{\textsc{cu}}}$.}
\end{proof}

\subsection{\label{sub:sigma-algebras}$\sigma$-algebras}

Although each family of sets, in definition \ref{def:families}, is
very interesting on its own, none of them allows by itself to define,
in a consistent manner, a notion of probability. As we will see, $\sigma$-algebras
play this role in a natural way.
\begin{defn}
(\textbf{$\sigma$-algebra}) A family of sets $\Sigma\subseteq2^{\varOmega}$
is called a \textbf{$\sigma$}-algebra over $\varOmega$, iff ($\sigma.1$)
$\varOmega\in\Sigma$, ($\sigma.2$) $\Sigma$ is \noun{$\overline{\mbox{\textsc{c}}}$},
and ($\sigma.3$) $\Sigma$ is \noun{$\overline{\mbox{\textsc{cu}}}$.}

Now, we can stablish some relations between $\sigma$-algebras and
some of the previously defined families of sets. These relations are
very useful when dealing with notions like measure, measurable, and
kernel.\end{defn}
\begin{lem}
\label{lem:sigma-intersections}Let $\Sigma$ be a $\sigma$-algebra
over $\varOmega$.
\begin{enumerate}
\item $\textrm{Ø}\in\Sigma$
\item $\Sigma$ is \noun{$\overline{\mbox{\textsc{ci}}}$}.
\item $\Sigma$ is a $\lambda$-system.
\item $\Sigma$ is \noun{$\overline{\mbox{\textsc{pd}}}$}.
\end{enumerate}
\end{lem}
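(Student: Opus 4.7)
My plan is to prove the four claims in the order $1 \to 2 \to 4 \to 3$, since each later statement can reuse the earlier ones together with the defining properties $(\sigma.1)$--$(\sigma.3)$ and Lemma~\ref{lem:lambda-cupd}. Nothing here is deep: the whole point is to verify that the $\sigma$-algebra axioms already bundle every closure property that the $\lambda$-system and intersection-style definitions demanded, so the proof is essentially a matter of combining complementation with countable union and invoking De Morgan once.

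For claim 1, I would note that $\Omega \in \Sigma$ by $(\sigma.1)$ and $\Sigma$ is $\overline{\mbox{\textsc{c}}}$ by $(\sigma.2)$, hence $\emptyset = \Omega^{c} \in \Sigma$. For claim 2, since $\Sigma$ is both $\overline{\mbox{\textsc{c}}}$ (by $(\sigma.2)$) and $\overline{\mbox{\textsc{cu}}}$ (by $(\sigma.3)$), part 2 of Lemma~\ref{lem:lambda-cupd} immediately yields that $\Sigma$ is $\overline{\mbox{\textsc{ci}}}$. For claim 4, given $A \subset B$ in $\Sigma$, I would write $B\setminus A = B \cap A^{c}$; then $A^{c} \in \Sigma$ by $(\sigma.2)$, and the pair $\{B, A^{c}\}$ is a countable family in $\Sigma$, so by claim 2 its intersection lies in $\Sigma$. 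For claim 3, I would verify the three clauses of the $\lambda$-system definition directly: $(\lambda.1)$ is claim 1, $(\lambda.2)$ is claim 4, and $(\lambda.3)$ follows because every countable disjoint family is in particular a countable family, so $\overline{\mbox{\textsc{cu}}}$ from $(\sigma.3)$ gives $\overline{\mbox{\textsc{cdu}}}$ for free.

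There is no real obstacle: the only minor point of care is the dependency order, since claim 4 most naturally uses claim 2 (via intersection with a complement), and claim 3 is cleanest after claim 4 is in hand. One could alternatively derive claim 4 from claim 3 via part 1 of Lemma~\ref{lem:lambda-cupd}, but then claim 3 must be proved without appealing to $\overline{\mbox{\textsc{pd}}}$, which is slightly less direct. I would therefore stick with $1, 2, 4, 3$ and keep each step to one or two lines, explicitly tagging each use of $(\sigma.1)$, $(\sigma.2)$, $(\sigma.3)$, De Morgan, and Lemma~\ref{lem:lambda-cupd} so the reader can see exactly which axiom is invoked at each point.
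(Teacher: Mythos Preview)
Your argument is correct and essentially matches the paper's, the only difference being the dependency order between parts~3 and~4. The paper proves (3) first---asserting that $\lambda.1$ follows from (1) while $\lambda.2$ and $\lambda.3$ follow from $\sigma.2$ and $\sigma.3$ respectively---and then derives (4) from (3) via part~1 of Lemma~\ref{lem:lambda-cupd}. You instead establish (4) directly from $B\setminus A = B\cap A^{c}$ together with (2), and then use (4) to verify $\lambda.2$ in the proof of (3). Both routes are one-line verifications; yours has the mild advantage of matching the stated definition of $\lambda.2$ (namely $\overline{\mbox{\textsc{pd}}}$) head-on, whereas the paper's ``$\lambda.2$ follows from $\sigma.2$'' tacitly treats $\lambda.2$ as closure under complements and pushes the actual proper-difference work into Lemma~\ref{lem:lambda-cupd}. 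You already noted this alternative and chose the more self-contained ordering, which is fine.
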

\begin{proof}
\textbf{{[}1{]}} $\varOmega\in\Sigma$ ($\sigma.1$) then $\textrm{Ø}\in\Sigma$
($\sigma.2$:$\Sigma$ is \noun{$\overline{\mbox{\textsc{c}}}$}).
\textbf{{[}2{]}} Follows from $\sigma.2$, $\sigma.3$ and lemma \ref{lem:lambda-cupd}.
\textbf{{[}3{]}}\emph{ }$\lambda.1$ follows from (1), $\lambda.2$
and $\lambda.3$ follow from $\sigma.2$ and $\sigma.3$, respectively.
\textbf{{[}4{]}} Follows from (3) and lemma \ref{lem:lambda-cupd}.\end{proof}
\begin{prop}
Let $\varOmega$ be a set 
\begin{enumerate}
\item $2^{\varOmega}$ is a $\sigma$-algebra
\item If $\left\{ \Sigma_{i}\right\} _{i\in I}$ is a family of $\sigma$-algebras
over $\varOmega$ then $\bigcap_{i\in I}\Sigma_{i}$ is a $\sigma$-algebra
over $\varOmega$. 
\item If $\mathcal{A}\subseteq2^{\varOmega}$ is an arbitrary family of
subsets of $\varOmega$ then the minimum $\sigma$-algebra generated
by $\mathcal{A}$ is $\sigma\left(\mathcal{A}\right)=\bigcap\left\{ \Sigma\mid\mathcal{A}\subseteq\Sigma\mbox{ and }\Sigma\mbox{ is }\sigma\mbox{-algebra}\right\} $. 
\end{enumerate}
\end{prop}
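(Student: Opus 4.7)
The proof has three parts, each building on the previous one, and none of them will require more than the axioms $(\sigma.1)$--$(\sigma.3)$ together with the closure-under-intersection content already collected in Lemma \ref{lem:sigma-intersections}.

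For part (1), the plan is to verify the three axioms for $2^{\varOmega}$ directly. Since $\varOmega \subseteq \varOmega$, we have $\varOmega \in 2^{\varOmega}$, giving $(\sigma.1)$. If $A \in 2^{\varOmega}$ then $A \subseteq \varOmega$, so $A^{c} = \varOmega \setminus A \subseteq \varOmega$ also lies in $2^{\varOmega}$, giving $(\sigma.2)$. Finally, for any countable family $\{A_{i}\}_{i \in I}$ with $A_{i} \subseteq \varOmega$, we have $\bigcup_{i \in I} A_{i} \subseteq \varOmega$, so the union lies in $2^{\varOmega}$, giving $(\sigma.3)$. This is pure set-theoretic bookkeeping.

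For part (2), I would check the axioms for $\Sigma := \bigcap_{i \in I} \Sigma_{i}$ by pushing each axiom through the intersection. Since $\varOmega \in \Sigma_{i}$ for each $i$ by $(\sigma.1)$ applied to each $\Sigma_{i}$, we get $\varOmega \in \Sigma$. For $(\sigma.2)$, if $A \in \Sigma$ then $A \in \Sigma_{i}$ for every $i$, so $A^{c} \in \Sigma_{i}$ for every $i$ by each $\Sigma_{i}$ being $\overline{\textsc{c}}$, and therefore $A^{c} \in \Sigma$. For $(\sigma.3)$, if $\{A_{n}\}_{n \in I}$ is a countable family in $\Sigma$, then it is also a countable family in each $\Sigma_{i}$, so $\bigcup_{n} A_{n} \in \Sigma_{i}$ for every $i$ and hence $\bigcup_{n} A_{n} \in \Sigma$.

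For part (3), the plan splits naturally. First, observe that the collection $\mathcal{F} := \{\Sigma \mid \mathcal{A} \subseteq \Sigma \text{ and } \Sigma \text{ is a } \sigma\text{-algebra over } \varOmega\}$ is non-empty, because by part (1), $2^{\varOmega}$ is a $\sigma$-algebra and trivially $\mathcal{A} \subseteq 2^{\varOmega}$. Thus $\sigma(\mathcal{A}) = \bigcap \mathcal{F}$ is a well-defined intersection. By part (2) applied to $\mathcal{F}$, this intersection is itself a $\sigma$-algebra. It contains $\mathcal{A}$ because each $\Sigma \in \mathcal{F}$ contains $\mathcal{A}$, so the intersection does too. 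Finally, for the minimality claim, I would note that if $\Sigma'$ is any $\sigma$-algebra over $\varOmega$ with $\mathcal{A} \subseteq \Sigma'$, then $\Sigma' \in \mathcal{F}$ by definition, so $\sigma(\mathcal{A}) = \bigcap \mathcal{F} \subseteq \Sigma'$.

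There is no real obstacle here; the entire proposition is standard and the only subtlety to flag is the non-emptiness of $\mathcal{F}$ in part (3), which is exactly what part (1) is used for, so the three items should be presented in this order rather than independently.
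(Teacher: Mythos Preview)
Your proposal is correct and follows essentially the same approach as the paper: the paper declares part (1) ``Obvious,'' proves part (2) exactly as you do by pushing each $\sigma$-axiom through the intersection, and records part (3) as ``Follows from (1) and (2).'' Your write-up is simply a more explicit version of the same argument, including the non-emptiness and minimality observations that the paper leaves implicit.
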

\begin{proof}
\textbf{{[}1{]}} Obvious. \textbf{{[}2{]}} $\varOmega\in\Sigma_{i}$
for all $i\in I$ ($\sigma.1$) then $\varOmega\in\bigcap_{i\in I}\Sigma_{i}$
(def. $\bigcap$). If $A\in\bigcap_{i\in I}\Sigma_{i}$ then $A\in\Sigma_{i}$
for all $i\in I$, then $A^{c}\in\Sigma_{i}$ for all $i\in I$ ($\sigma.2:\Sigma_{i}$
is \noun{$\overline{\mbox{\textsc{c}}}$}), therefore $A^{c}\in\bigcap_{i\in I}\Sigma_{i}$
(def. $\bigcap$). If $\left\{ A_{j}\in\bigcap_{i\in I}\Sigma_{i}\right\} _{j\in J}$
is \noun{$\textsc{cf}$} then $A_{j}\in\Sigma_{i}$ for all $j\in J$
and $i\in I$, then $\left\{ A_{j}\in\Sigma_{i}\right\} _{j\in J}$
is \noun{$\textsc{cf}$} in $\Sigma_{i}$ for all $i\in I$, therefore
$\bigcup_{j\in J}A_{j}\in\Sigma_{i}$ for all $i\in I$ ($\sigma.3$:
$\Sigma_{i}$ is\noun{ $\overline{\textsc{cu}}$}). So, $\bigcup_{j\in J}A_{j}\in\bigcap_{i\in I}\Sigma_{i}$
(def. $\bigcap$).\emph{ }\textbf{{[}3{]}} Follows from (1) and (2).\end{proof}
\begin{thm}
\label{thm:Dynkin---theorem.}(\textbf{Dynkin $\pi$-$\lambda$ theorem})
Let $\mathcal{A}$ be a $\lambda$-system and let $\mathcal{E}\subseteq\mathcal{A}$
be a $\pi$-system then $\sigma\left(\mathcal{E}\right)\subseteq\mathcal{A}$.\end{thm}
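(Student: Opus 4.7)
The plan is to introduce the smallest $\lambda$-system containing $\mathcal{E}$, which I will denote $\lambda(\mathcal{E})$, and show $\sigma(\mathcal{E}) \subseteq \lambda(\mathcal{E})$. Existence of $\lambda(\mathcal{E})$ follows by the same argument used for $\sigma$-algebras in the previous proposition: each of the defining properties $\lambda.1$--$\lambda.3$ is preserved under arbitrary intersection, $2^{\varOmega}$ is itself a $\lambda$-system, so $\lambda(\mathcal{E}) \vcentcolon= \bigcap\{\mathcal{L} \mid \mathcal{E}\subseteq\mathcal{L} \text{ and } \mathcal{L} \text{ is a } \lambda\text{-system}\}$ is well defined. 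Since $\mathcal{A}$ is one such $\lambda$-system by hypothesis, $\lambda(\mathcal{E}) \subseteq \mathcal{A}$. Therefore it suffices to prove $\sigma(\mathcal{E}) \subseteq \lambda(\mathcal{E})$, which, by minimality of $\sigma(\mathcal{E})$, reduces to showing $\lambda(\mathcal{E})$ is itself a $\sigma$-algebra.

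The key step is to show $\lambda(\mathcal{E})$ is also a $\pi$-system. Once this is established, $\lambda(\mathcal{E})$ is closed under complements (a $\lambda$-system is $\overline{\textsc{pd}}$ by $\lambda.2$, hence closed under complements of sets in $\varOmega$) and, via the standard disjointification $A_n \mapsto A_n \setminus \bigcup_{k<n} A_k$ combined with closure under finite intersections (for forming the differences) and $\lambda.3$ ($\overline{\textsc{cdu}}$), it is closed under arbitrary countable unions; together with $\varOmega \in \lambda(\mathcal{E})$ these are exactly $\sigma.1$--$\sigma.3$.

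I would prove the $\pi$-system property via a two-stage bootstrap. First, fix $B \in \mathcal{E}$ and set $\mathcal{D}_{B} \vcentcolon= \{A\subseteq\varOmega \mid A\bigcap B\in\lambda(\mathcal{E})\}$. Using the identities $(A_{2}\setminus A_{1})\bigcap B = (A_{2}\bigcap B)\setminus(A_{1}\bigcap B)$ and $(\bigcup_{i\in I}A_{i})\bigcap B = \bigcup_{i\in I}(A_{i}\bigcap B)$, together with the fact that disjointness is preserved under intersection with $B$, I verify that $\mathcal{D}_{B}$ satisfies $\lambda.1$--$\lambda.3$. Because $\mathcal{E}$ is a $\pi$-system, $A\bigcap B \in \mathcal{E} \subseteq \lambda(\mathcal{E})$ for every $A\in\mathcal{E}$, so $\mathcal{E}\subseteq\mathcal{D}_{B}$, and by minimality $\lambda(\mathcal{E})\subseteq\mathcal{D}_{B}$. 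This gives $A\bigcap B\in\lambda(\mathcal{E})$ whenever $A\in\lambda(\mathcal{E})$ and $B\in\mathcal{E}$. In the second stage, fix $A\in\lambda(\mathcal{E})$ and set $\mathcal{D}'_{A}\vcentcolon=\{B\subseteq\varOmega \mid A\bigcap B\in\lambda(\mathcal{E})\}$. The first stage tells us $\mathcal{E}\subseteq\mathcal{D}'_{A}$; an identical verification shows $\mathcal{D}'_{A}$ is a $\lambda$-system, so $\lambda(\mathcal{E})\subseteq\mathcal{D}'_{A}$, yielding $A\bigcap B\in\lambda(\mathcal{E})$ for all $A,B\in\lambda(\mathcal{E})$, as required.

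The main obstacle is the asymmetry forced by the two-stage bootstrap: one cannot collapse the argument into a single auxiliary family indexed over $\lambda(\mathcal{E})$ directly, because at that point we do not yet know that $\mathcal{E}$ sits inside such a family (the $\pi$-system hypothesis only gives intersections among elements of $\mathcal{E}$). The role of the first stage is precisely to promote the $\pi$-system property of $\mathcal{E}$ to a mixed closure statement between $\lambda(\mathcal{E})$ and $\mathcal{E}$, which is exactly what the second stage needs to seed $\mathcal{E}\subseteq\mathcal{D}'_{A}$. Once this order of operations is in place, the verification of $\lambda.1$--$\lambda.3$ for both $\mathcal{D}_{B}$ and $\mathcal{D}'_{A}$ is routine distributivity, and the passage from $\lambda$-system plus $\pi$-system to $\sigma$-algebra is the standard disjointification noted above.
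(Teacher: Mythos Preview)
Your argument is the standard proof of the Dynkin $\pi$--$\lambda$ theorem and is correct: introduce the minimal $\lambda$-system $\lambda(\mathcal{E})$, run the two-stage ``good sets'' bootstrap to show it is a $\pi$-system, and then upgrade $\lambda$-system $+$ $\pi$-system to $\sigma$-algebra via disjointification. The paper does not supply its own proof of this theorem at all; it simply cites Klenke's book (Theorem~1.19), which carries out exactly the argument you sketch. So there is no methodological difference to discuss --- you have effectively reproduced the referenced proof.

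One small remark on bookkeeping relative to the paper's conventions: the paper's $\lambda.1$ is stated as $\emptyset\in\mathcal{A}$ and $\lambda.2$ as $\overline{\textsc{pd}}$, so the step ``hence closed under complements'' needs $\varOmega\in\lambda(\mathcal{E})$, which you use but do not derive. You can get it cheaply (e.g.\ from $\emptyset\in\mathcal{E}$ is not guaranteed, but $\varOmega$ is the disjoint union of the empty family, or one simply adopts the equivalent convention $\varOmega\in\mathcal{A}$ that the paper itself relies on implicitly in Lemma~\ref{lem:lambda-cupd}). Likewise, when checking $\lambda.1$ for $\mathcal{D}_B$ you need $\varOmega\cap B=B\in\lambda(\mathcal{E})$, which holds since $B\in\mathcal{E}$. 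These are trivial to fill in and do not affect the substance of your proof.
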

\begin{proof}
A proof of this theorem can be found on page 6 of Kenkle's book \cite{Kenkle14}
(Theorem 1.19).
\end{proof}
Now, notions of measure and probability measure are defined on the
real numbers ($\mathbb{R}$), usually equipped with the Euclidean
distance, so we need to define an appropiated $\sigma$-algebra on
it. Such appropiated $\sigma$-algebra can be defined as a special
case of a $\sigma$-algebra for topological spaces. 
\begin{defn}
\label{def:(Borel--algebra)}(\textbf{Borel $\sigma$-algebra}) Let
$\left(\varOmega,\tau\right)$ be a topological space. The $\sigma$-algebra
$\mathcal{B}\left(\varOmega\right)\equiv\mathcal{B}\left(\varOmega,\tau\right)\equiv\sigma\left(\tau\right)$
is called the Borel $\sigma$-algebra on $\varOmega$ and every $A\in\mathcal{B}\left(\varOmega,\tau\right)$
is called Borel (measurable) set.\end{defn}
\begin{prop}
\emph{\label{prop:half-open-borel-real}If $\mathcal{B}\left(\mathbb{R}\right)$
is the Borel $\sigma$-algebra where $\mathbb{R}$ is equipped with
the Euclidean distance, then $\mathcal{B}\left(\mathbb{R}\right)=\sigma\left(\mathcal{E}_{7}\right)$
with $\mathcal{E}_{7}=\left\{ \left(\alpha,\beta\right]\mid\alpha,\beta\in\mathbb{Q},\alpha<\beta\right\} $.}\end{prop}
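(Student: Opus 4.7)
The plan is to prove equality of the two $\sigma$-algebras by double inclusion, showing each generating family is contained in the other $\sigma$-algebra. For $\sigma(\mathcal{E}_{7})\subseteq\mathcal{B}(\mathbb{R})$ it suffices, by minimality of the generated $\sigma$-algebra, to show $\mathcal{E}_{7}\subseteq\mathcal{B}(\mathbb{R})$. Given any $(\alpha,\beta]\in\mathcal{E}_{7}$, I would express it as
\[
(\alpha,\beta]=\bigcap_{n\geq 1}\left(\alpha,\beta+\tfrac{1}{n}\right).
\]
Each $(\alpha,\beta+1/n)$ is open in the Euclidean topology, hence lies in $\tau\subseteq\mathcal{B}(\mathbb{R})$. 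Since Lemma \ref{lem:sigma-intersections} guarantees that $\mathcal{B}(\mathbb{R})$ is closed under countable intersections (i.e.\ is $\overline{\textsc{ci}}$), the intersection belongs to $\mathcal{B}(\mathbb{R})$.

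For the reverse inclusion $\mathcal{B}(\mathbb{R})\subseteq\sigma(\mathcal{E}_{7})$, since $\mathcal{B}(\mathbb{R})=\sigma(\tau)$ by Definition \ref{def:(Borel--algebra)}, it suffices to show $\tau\subseteq\sigma(\mathcal{E}_{7})$, i.e.\ that every Euclidean open set belongs to $\sigma(\mathcal{E}_{7})$. The plan is to exploit second countability of $\mathbb{R}$: the rational open intervals $\{(a,b):a,b\in\mathbb{Q},\,a<b\}$ form a countable base, so any open $U\subseteq\mathbb{R}$ decomposes as $U=\bigcup_{k\in\mathbb{N}}(a_{k},b_{k})$ with $a_{k},b_{k}\in\mathbb{Q}$. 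Each such rational open interval can then be rewritten as
\[
(a_{k},b_{k})=\bigcup\bigl\{(a_{k},b_{k}-\tfrac{1}{n}]:n\in\mathbb{N},\,\tfrac{1}{n}<b_{k}-a_{k}\bigr\},
\]
whose members all lie in $\mathcal{E}_{7}$ because $b_{k}-1/n\in\mathbb{Q}$. Two successive applications of closure of $\sigma(\mathcal{E}_{7})$ under countable unions then place $U$ in $\sigma(\mathcal{E}_{7})$.

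The argument is essentially routine; the only mild subtlety I would watch for is keeping all endpoints rational when exhibiting each open interval as a countable union of half-open intervals, which is why I shift $b_{k}$ by the rational quantity $1/n$. No Dynkin-style $\pi$-$\lambda$ argument is needed here, since I am working directly with the generating families and the countable set-operations afforded by a $\sigma$-algebra; Theorem \ref{thm:Dynkin---theorem.} becomes relevant only later, when establishing uniqueness of measures defined from values on $\mathcal{E}_{7}$.
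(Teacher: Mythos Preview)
Your argument is correct: the double inclusion via $(\alpha,\beta]=\bigcap_{n\ge 1}(\alpha,\beta+\tfrac{1}{n})$ for one direction, and second countability of $\mathbb{R}$ together with $(a,b)=\bigcup_{1/n<b-a}(a,b-\tfrac{1}{n}]$ for the other, is the standard route and all steps are sound (in particular your care in keeping the shifted endpoint $b_k-\tfrac{1}{n}$ rational is exactly what is needed so that each half-open interval lies in $\mathcal{E}_7$).

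As for comparison with the paper: the paper does not actually supply a proof of this proposition but simply refers the reader to Theorem~1.23 in Klenke's book. Your write-up therefore goes further than the paper itself, giving a complete self-contained argument rather than a citation. The proof you sketch is in fact essentially the one Klenke gives (he establishes a chain of inclusions among several generating families $\mathcal{E}_1,\ldots,\mathcal{E}_{12}$, of which the two computations you wrote down are the relevant links), so there is no substantive methodological divergence---you have simply filled in what the paper outsourced.
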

\begin{proof}
A proof of this proposition can be found on page 9 of Kenkle's book
\cite{Kenkle14} (Theorem 1.23).
\end{proof}
Now, we are ready to define the basic mathematical structure used
by probability theory.
\begin{defn}
(\textbf{measurable space}) If $\Sigma$ is a $\sigma$-algebra over
a set $\varOmega$ then the pair $\left(\varOmega,\Sigma\right)$
is called a measurable space. Sets in $\Sigma$ are called measurable
sets on $\varOmega$.
\end{defn}

\subsection{\label{sub:Functions}Functions}

Having the playground defined (space of elementary events and observable
events), probability theory defines operations over them (functions).
Such functions will allow us to characterize stochastic methods used
by a \noun{SGoal}\emph{. }First, we introduce the concepts of set
function and inverse function, that are used when working on $\sigma$-algebras.
\begin{defn}
(\textbf{set functions}) Let $f\colon\varOmega_{1}\rightarrow\varOmega_{2}$
be a function,
\begin{enumerate}
\item The power set function of $f$ is defined as

\[
\begin{array}{rccl}
f\colon & 2^{\varOmega_{1}} & \longrightarrow & 2^{\varOmega_{2}}\\
 & A & \longmapsto & \left\{ f\left(x\right)\mid\forall\left(x\in A\right)\right\} 
\end{array}
\]

\item The inverse function of $f$ is defined as

\[
\begin{array}{rccl}
f^{-1}\colon & 2^{\varOmega_{2}} & \longrightarrow & 2^{\varOmega_{1}}\\
 & B & \longmapsto & \left\{ x\in\varOmega_{1}\mid\left(\exists y\in B\right)\left(y=f\left(x\right)\right)\right\} 
\end{array}
\]

\end{enumerate}
\end{defn}
Next, we study the measurable functions, structure-preserving maps
(homomorphisms between measurable spaces). A measurable function guarantees
that observable events are obtained by applying the function to observable
events. For \noun{SGoal}s, a measurable function (stochastic methods)
basically means that any generated subset of populations must be obtained
by applying the stochastic methods to some generated subset of populations.
\begin{defn}
(\textbf{measurable function}) Let $\left(\varOmega_{1},\Sigma_{1}\right)$
and $\left(\varOmega_{2},\Sigma_{2}\right)$ be measurable spaces
and $f\colon\varOmega_{1}\rightarrow\varOmega_{2}$ be a function.
Function $f$ is called $\Sigma_{1}-\Sigma_{2}$ measurable if for
every measurable set $B\in\Sigma_{2}$, its inverse image is a measurable
set in $\left(\varOmega_{1},\Sigma_{1}\right)$, \LyXThinSpace i.e.,
$f^{-1}\left(B\right)\in\Sigma_{1}$.\end{defn}
\begin{cor}
Let $\left(\varOmega_{1},\Sigma_{1}\right)$, $\left(\varOmega_{2},\Sigma_{2}\right)$
and $\left(\varOmega_{3},\Sigma_{3}\right)$ be measurable spaces
and $f\colon\varOmega_{1}\rightarrow\varOmega_{2}$ be $\Sigma_{1}-\Sigma_{2}$
measurable and $g\colon\varOmega_{2}\rightarrow\varOmega_{3}$ be
$\Sigma_{2}-\Sigma_{3}$ measurable then $g\circ f\colon\varOmega_{1}\rightarrow\varOmega_{3}$
is $\Sigma_{1}-\Sigma_{3}$ measurable.\end{cor}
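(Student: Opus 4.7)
The plan is to unwind the definition of measurability of the composition directly, by chasing an arbitrary set $C\in\Sigma_{3}$ backwards through $g$ and then through $f$. Concretely, I would fix an arbitrary $C\in\Sigma_{3}$ and show that $(g\circ f)^{-1}(C)\in\Sigma_{1}$, which is exactly what the definition of $\Sigma_{1}-\Sigma_{3}$ measurability requires.

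The key algebraic identity I would invoke is
\[
(g\circ f)^{-1}(C)\;=\;f^{-1}\bigl(g^{-1}(C)\bigr),
\]
which follows immediately from the definition of the inverse set function given earlier: $x\in(g\circ f)^{-1}(C)$ iff $g(f(x))\in C$ iff $f(x)\in g^{-1}(C)$ iff $x\in f^{-1}(g^{-1}(C))$. Once this identity is in hand, the argument is a two-step application of the measurability hypotheses: first, because $g$ is $\Sigma_{2}-\Sigma_{3}$ measurable, $g^{-1}(C)\in\Sigma_{2}$; second, because $f$ is $\Sigma_{1}-\Sigma_{2}$ measurable, applying it to the set $g^{-1}(C)\in\Sigma_{2}$ gives $f^{-1}(g^{-1}(C))\in\Sigma_{1}$. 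Combining this with the identity above yields $(g\circ f)^{-1}(C)\in\Sigma_{1}$, and since $C$ was arbitrary this establishes $\Sigma_{1}-\Sigma_{3}$ measurability of $g\circ f$.

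There is really no obstacle here: the statement is the standard fact that composition of measurable maps is measurable, and the proof is a direct chase of definitions. The only thing that deserves a brief justification is the inverse-image identity $(g\circ f)^{-1}=f^{-1}\circ g^{-1}$, and even that is a one-line set-membership check. Accordingly, I would present the proof as a single short paragraph: state the identity, apply $\Sigma_{2}-\Sigma_{3}$ measurability of $g$, then apply $\Sigma_{1}-\Sigma_{2}$ measurability of $f$, and conclude.
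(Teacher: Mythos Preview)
Your proposal is correct and follows essentially the same approach as the paper: fix an arbitrary set in $\Sigma_{3}$, use the identity $(g\circ f)^{-1}=f^{-1}\circ g^{-1}$, and apply the measurability of $g$ and then of $f$ in turn. The only difference is cosmetic---the paper states the identity after the two measurability steps rather than before, and labels the set $A$ instead of $C$.
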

\begin{proof}
If $A\in\Sigma_{3}$ then $g^{-1}\left(A\right)\in\Sigma_{2}$ ($g$
is $\Sigma_{2}-\Sigma_{3}$ measurable), therefore $f^{-1}\left(g^{-1}\left(A\right)\right)\in\Sigma_{1}$
($f^{-1}$ is $\Sigma_{1}-\Sigma_{2}$ measurable). Clearly, $f^{-1}\left(g^{-1}\left(A\right)\right)=\left(g\circ f\right)^{-1}\left(A\right)\in\Sigma_{1}$
(def inverse). In this way, $g\circ f$ is $\Sigma_{1}-\Sigma_{3}$
measurable.\end{proof}
\begin{defn}
(\textbf{isomorphism of measurable spaces}) Let $\left(\varOmega_{1},\Sigma_{1}\right)$
and $\left(\varOmega_{2},\Sigma_{2}\right)$ be measurable spaces
and $\varphi\colon\varOmega_{1}\rightarrow\varOmega_{2}$ be a bijective
function. $\varphi$ is called $\left(\varOmega_{1},\Sigma_{1}\right)$-$\left(\varOmega_{2},\Sigma_{2}\right)$
isomorphism ($\left(\varOmega_{1},\Sigma_{1}\right)$ and $\left(\varOmega_{2},\Sigma_{2}\right)$
are called isomorphic) if $\varphi$ is $\Sigma_{1}-\Sigma_{2}$ measurable
and $\varphi^{-1}$ is $\Sigma_{2}-\Sigma_{1}$ measurable. 
\end{defn}
After that, we consider measure functions, functions that quantify,
in some way, how much observable events are. This concept of measure
fuction is the starting point on defining probability measure functions.
In the following, we just write $f$ is measurable instead of $f$
is $\Sigma_{1}-\Sigma_{2}$ measurable if the associated $\sigma$-algebras
can be inferred from the context. 
\begin{defn}
(\textbf{measure function}) Let $\left(\varOmega,\Sigma\right)$ be
a measurable space and $\mu:\Sigma\rightarrow\mathbb{\overline{R}}$
be a function from $\Sigma$ to the extended reals ($\overline{\mathbb{R}}=\mathbb{R}\bigcup\left\{ -\infty,\infty\right\} $).
Function $\mu$ is called measure if it satisfies the following three
conditions:
\begin{enumerate}
\item[\noun{$\mu.1$}]  (\textbf{nullity}) $\mu\left(\textrm{Ø}\right)=0$.
\item[\noun{$\mu.2$}]  (\textbf{non-negativity}) $\mu\left(A\right)\geq0$ for all $A\in\Sigma$.
\item[\noun{$\mu.3$}]  (\textbf{$\sigma$-additivity}) $\mu\left(\bigcup_{i\in I}A_{i}\right)=\sum_{i\in I}\mu\left(A_{i}\right)$
for all $\left\{ A_{i}\in\Sigma\right\} _{i\in I}$ \noun{$\mbox{\textsc{cdf}}$}. 
\end{enumerate}
\end{defn}
Then, we consider probability measure functions, functions that quantify,
how probable observable events are. For \noun{SGoal}s, a probability
function will quantify how probable a set of populations can be generated
using stochastic methods.
\begin{defn}
Let $\left(\varOmega,\Sigma\right)$ be a measurable space and $\mu:\Sigma\rightarrow\mathbb{\overline{R}}$
be a measure.
\begin{enumerate}
\item (\textbf{finite measure}) $\mu$ is a finite measure if $\mu\left(A\right)<\infty$
for all $A\in\Sigma$.
\item (\textbf{probability measure}) $\mu$ is a probability measure if
$\mu\left(\varOmega\right)=1$. 
\end{enumerate}
\end{defn}
Now, we are ready to define the mathematical structure used by probability
theory.
\begin{defn}
If $\left(\varOmega,\Sigma\right)$ is a measurable space and $\mu:\Sigma\rightarrow\mathbb{\overline{R}}$
is a measure function
\begin{enumerate}
\item (\textbf{measure space}) $\left(\varOmega,\Sigma,\mu\right)$ is called
measure space.
\item (\textbf{probability space}) If $\mu$ is a probability measure, $\left(\varOmega,\Sigma,\mu\right)$
is called probability space.
\end{enumerate}
\end{defn}
Finally, we can define the concept of random variable, a function
that preserves observable events and quantifies how probable an observable
event is.
\begin{defn}
(\textbf{random variable}) Let $\left(\varOmega_{1},\Sigma_{1},Pr\right)$
be a probability space and $\left(\varOmega_{2},\Sigma_{2}\right)$
be a measurable space. If $X\colon\varOmega_{1}\rightarrow\varOmega_{2}$
is a measurable function then $X$ is called a random variable with
values in $\left(\varOmega_{2},\Sigma_{2}\right)$. 
\end{defn}
For $A\in\Sigma_{2}$ , we denote $\left\{ X\in A\right\} \equiv X^{-1}(A)$
and $Pr\left[X\in A\right]\equiv Pr\left[X^{-1}\left(A\right)\right]$.
In particular, if $\left(\varOmega_{2},\Sigma_{2}\right)=\left(\mathbb{R},\mathcal{B}\left(\mathbb{R}\right)\right)$,
then $X$ is called a real random variable and we let $Pr\left[X\geq0\right]\equiv X^{-1}\left(\left[0,\infty\right)\right)$.

\subsection{\label{sub:Kernel}Kernel}

As pointed by Breiman in Section 4.3 of \cite{Breiman68}, the kernel
is a regular conditional probability $K\left(x,A\right)=P(x,A)=Pr\left[X_{t}\in A\mid X_{t-1}=x\right]$.
For \noun{SGoal}s, a kernel will be used for characterizing the stochastic
process carried on iteration by iteration (generating a population
from a population).
\begin{defn}
(\textbf{Markov kernel}) \label{def:Kernel}Let $\left(\varOmega_{1},\Sigma_{1}\right)$
and $\left(\varOmega_{2},\Sigma_{2}\right)$ be measurable spaces.
A function $K:\varOmega_{1}\times\Sigma_{2}\rightarrow\left[0,1\right]$
is called a (Markov) kernel if the following two conditions hold:
\begin{enumerate}
\item[\noun{$K.1$}]  Function $K_{x,\bullet}\vcentcolon A\mapsto K(x,A)$ is a probability
measure for each fixed $x\in\varOmega_{1}$ and 
\item[\noun{$K.2$}]  Function $K_{\bullet,A}\vcentcolon x\mapsto K(x,A)$ is a measurable
function for each fixed $A\in\Sigma_{2}$. 
\end{enumerate}
\end{defn}
\begin{rem}
\label{rem:MeasurableKernelPi}As noticed by Kenkle in Remark 8.26,
page 181 of \cite{Kenkle14}, it is sufficient to check $K.2$ in
definition \ref{def:Kernel} for sets $A$ from a $\pi$-system $\mathcal{E}$
that generates $\Sigma_{2}$ and that either contains $A$ or a sequence
$A_{n}\uparrow A$ . Indeed, in this case, $\mathcal{D}=\left\{ A\in\Sigma_{2}\mid K_{\bullet,A}^{-1}\in\Sigma_{1}\right\} $
is a $\lambda$-system. Since $\mathcal{E}\subset\mathcal{D}$, by
the Dynkin $\pi-\lambda$ theorem (\ref{thm:Dynkin---theorem.}),
$\mathcal{D}=\sigma\left(\mathcal{E}\right)=\Sigma_{2}$.
\end{rem}
If the transition density $K\colon\varOmega_{1}\times\varOmega_{2}\rightarrow\left[0,1\right]$
exits, then the transition kernel can be defined using equation \ref{eq:kernel-density}.
In the rest of this paper, we will consider kernels having transition
densities. 

\begin{equation}
K\left(x,A\right)=\int_{A}K\left(x,y\right)dy\label{eq:kernel-density}
\end{equation}

Kernels that will play a main role in a systematic development of
a formal theory for \noun{SGoal}s are those associated to deterministic
methods that are used by a particular \noun{Sgoal}, like selecting
any/the best individual in a population, or sorting a population.
Theorem \ref{thm:Kernel_det_f} provides a sufficient condition for
characterizing deterministic methods as kernels.
\begin{thm}
\label{thm:Kernel_det_f}(\textbf{deterministic kernel}) Let $\left(\varOmega_{1},\Sigma_{1}\right)$
and $\left(\varOmega_{2},\Sigma_{2}\right)$ be measurable spaces,
and $f\vcentcolon\varOmega_{1}\rightarrow\varOmega_{2}$ be $\Sigma_{1}-\Sigma_{2}$
measurable. The function $1_{f}\vcentcolon\varOmega_{1}\times\Sigma_{2}\rightarrow\left[0,1\right]$
defined as follow is a kernel.

\[
1_{f}\left(x,A\right)=\left\{ \begin{array}{ll}
1 & \mbox{if }f\left(x\right)\in A\\
0 & \mbox{otherwise}
\end{array}\right.
\]

\end{thm}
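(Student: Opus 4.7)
The plan is to verify conditions $K.1$ and $K.2$ of Definition \ref{def:Kernel} directly; neither step should present genuine difficulty, so the proof is really a matter of unpacking notation and invoking the measurability of $f$ at the right moment.

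For $K.1$, I would fix $x \in \varOmega_1$ and show that $A \mapsto 1_f(x,A)$ is a probability measure on $(\varOmega_2,\Sigma_2)$; in fact it is the Dirac measure concentrated at $f(x)$. Nullity is immediate since $f(x) \notin \textrm{\o}$, and non-negativity is built into the range $\{0,1\}$. For $\sigma$-additivity on a countable disjoint family $\{A_i\}_{i\in I}$, I would split into two cases: if $f(x) \in \bigcup_{i\in I} A_i$, then by disjointness $f(x)$ lies in exactly one $A_{i_0}$, so the left-hand side is $1$ and the right-hand side is $\sum_{i\in I} 1_f(x,A_i) = 1$; otherwise both sides are $0$. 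Finally, $1_f(x,\varOmega_2) = 1$ since $f(x) \in \varOmega_2$, giving the probability normalization.

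For $K.2$, I would fix $A \in \Sigma_2$ and observe that $x \mapsto 1_f(x,A)$ takes only the values $0$ and $1$. Rather than invoking Remark \ref{rem:MeasurableKernelPi}, it is simplest to verify measurability by hand: the preimage of any Borel set $B \subseteq \mathbb{R}$ under this function is one of $\textrm{\o}$, $f^{-1}(A)$, $f^{-1}(A^c)$, or $\varOmega_1$, depending on whether $0$ and $1$ belong to $B$. Since $f$ is $\Sigma_1$-$\Sigma_2$ measurable, $f^{-1}(A) \in \Sigma_1$; and since $\Sigma_1$ is closed under complements (Lemma \ref{lem:sigma-intersections}), $f^{-1}(A^c) = (f^{-1}(A))^c \in \Sigma_1$ as well. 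All four possible preimages thus lie in $\Sigma_1$, so $K_{\bullet,A}$ is measurable.

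The only place one has to be a little careful is the $\sigma$-additivity step in $K.1$, where the disjointness of the family is what prevents double counting; apart from that, every step is a direct consequence of the definitions and the hypothesis that $f$ is measurable. Hence $1_f$ satisfies both $K.1$ and $K.2$ and is a Markov kernel.
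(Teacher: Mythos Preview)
Your proof is correct and follows essentially the same route as the paper: both verify $K.1$ by recognizing $1_{f}(x,\cdot)$ as the Dirac measure at $f(x)$ via a disjointness case split, and both reduce $K.2$ to the measurability of $f$. The only cosmetic difference is in $K.2$: the paper checks preimages of the generating intervals $(0,\alpha]$ with $\alpha\in\mathbb{Q}^{+}$ (appealing implicitly to Proposition~\ref{prop:half-open-borel-real}), whereas you handle an arbitrary Borel set $B$ directly by exploiting that the map is $\{0,1\}$-valued---your version is slightly more self-contained, but the underlying idea is identical.
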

\begin{proof}
\textbf{{[}well-defined{]}} Obvious, it is defined using the membership
predicate and takes only two values $0$ and $1$. \textbf{{[}$K.1${]}}
Let $x\in\varOmega_{1}$, clearly $1_{f\,x,\bullet}\left(A\right)\geq0$
for all $A\in\Sigma_{2}$ so $1_{f\,x,\bullet}$ is non-negative.
Now, $1_{f}\left(x,\textrm{Ø}\right)=0$ so it satisfies $\mu.1$.
Let $\left\{ A_{i}\in\Sigma_{2}\right\} _{i\in I}$ be a \noun{$\mbox{\textsc{cdf}}$}.
If $x\in\biguplus_{i\in I}A_{i}$ then $1_{f}\left(x,\biguplus_{i\in I}A_{i}\right)=1$
(def $1_{f}\left(x,A\right)$), and $\exists k\in I$ such that $x\in A_{k}$
(def $\biguplus$). Therefore, $x\notin A_{i}$ for all $i\neq k\in I$
($\left\{ A_{i}\in\Sigma\right\} _{i\in I}$ is \noun{$\mbox{\textsc{df}}$})
so $1_{f}\left(x,A_{k}\right)=1$ and $1_{f}\left(x,A_{i}\right)=0$
for all $i\neq k\in I$ (def $1_{f}\left(x,A\right)$). Clearly, $1_{f\,x,\bullet}\left(\biguplus_{i\in I}A_{i}\right)=1=\sum_{i\in I}1_{f\,x,\bullet}\left(A_{i}\right)$.
A similar proof is carried on when $x\notin\biguplus_{i\in I}A_{i}$,
in this case $1_{x,\bullet}\left(\biguplus_{i\in I}A_{i}\right)=0=\sum_{i\in I}1_{f\,x,\bullet}\left(A_{i}\right)$.
Then, $1_{f\,x,\bullet}$ is \textbf{$\sigma$}-additive, so it is
a measure. $1_{f}\left(x,\varOmega_{2}\right)=1$ (obvious) then $1_{f\,x,\bullet}$
is a probability measure for a fixed $x\in\varOmega_{1}$. \textbf{{[}$K.2${]}}
Let $A\in\Sigma_{2}$ and $\alpha\in\mathbb{Q}^{+}$. If $\alpha<1$
then $1_{f\,\bullet,A}^{-1}\left(\left(0,\alpha\right]\right)=\textrm{Ø}$
($1_{f}\left(x,A\right)=\left\{ 0,1\right\} \notin\left(0,\alpha\right]$),
so $1_{f\,\bullet,A}^{-1}\left(\left(0,\alpha\right]\right)\in\Sigma_{1}$
(lemma \ref{lem:sigma-intersections}.1). Now, if$\alpha\geq1$ then
$1_{f\,\bullet,A}^{-1}\left(\left(0,\alpha\right]\right)=\left\{ x\in\varOmega_{1}\mid1_{f}\left(x,A\right)=1\in\left(0,\alpha\right]\right\} $
($1_{f}\left(x,A\right)=1$ is the only value in $\left(0,\alpha\right]$),
i.e., $1_{f\,\bullet,A}^{-1}\left(\left(0,\alpha\right]\right)=\left\{ x\in\varOmega_{1}\mid f\left(x\right)\in A\right\} =f^{-1}\left(A\right)$
(def $f^{-1}$), so $1_{f\,\bullet,A}^{-1}\left(\left(0,\alpha\right]\right)\in\Sigma_{1}$
($f$ is measurable). Therefore, $1_{f\,\bullet,A}$ is measurable. \end{proof}
\begin{cor}
\label{lem:IdKernel}(\textbf{Indicator kernel}) Let $\left(\varOmega,\Sigma\right)$
be a measurable space. The indicator function $1\vcentcolon\varOmega\times\Sigma\rightarrow\left[0,1\right]$
defined as $1\left(x,A\right)=1_{id\left(x\right)}\left(A\right)$,
with $id\left(x\right)=x$ is a kernel.\end{cor}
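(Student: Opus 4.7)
The plan is to deduce this directly from Theorem \ref{thm:Kernel_det_f} (the deterministic kernel theorem) by specializing it to the identity function. The only hypothesis to verify is that $id\colon\varOmega\to\varOmega$ is $\Sigma-\Sigma$ measurable; everything else will be immediate.

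First I would observe that, by definition, $1(x,A) = 1_{id(x)}(A) = 1_{id}(x,A)$, so the indicator function is nothing but the deterministic kernel of $id$ in the sense of Theorem \ref{thm:Kernel_det_f}, with both measurable spaces taken to be $(\varOmega,\Sigma)$. Next I would check measurability of the identity: for any $A\in\Sigma$, $id^{-1}(A)=\{x\in\varOmega\mid id(x)\in A\}=A\in\Sigma$, so $id$ is $\Sigma-\Sigma$ measurable. With this hypothesis in hand, Theorem \ref{thm:Kernel_det_f} applies and yields that $1_{id}=1$ satisfies both $K.1$ (a probability measure in the second argument for every fixed $x$) and $K.2$ (measurable in the first argument for every fixed $A\in\Sigma$), which is exactly the definition of a Markov kernel.

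There is no real obstacle here; the whole content of the corollary is the observation that $id$ is trivially measurable, after which the previous theorem does all the work. If anything, the only thing worth spelling out for the reader is the identification $1(x,A)=1_{id}(x,A)$, so that the reduction to Theorem \ref{thm:Kernel_det_f} is transparent.
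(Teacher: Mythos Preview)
Your proposal is correct and follows essentially the same approach as the paper: reduce to Theorem \ref{thm:Kernel_det_f} by verifying that $id$ is $\Sigma$--$\Sigma$ measurable via $id^{-1}(A)=A\in\Sigma$. The paper's proof is just a terser version of what you wrote.
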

\begin{proof}
According to theorem \ref{thm:Kernel_det_f}, it is sufficient to
prove that $id$ is a measurable function. It is obvious, we have
that $A=id\left(A\right)=id^{-1}\left(A\right)$ (def $id$) then
$id^{-1}\left(A\right)\in\Sigma$ if $A\in\Sigma$.
\end{proof}
Transition probabilities (kernels) also represent linear operators
over infinite-dimensional vector spaces \cite{Geyer05,Geyer11}. Therefore,
operations like kernels multiplication, and kernels convex combinations
can be used in order to preserve the Markovness property of the resulting
transition kernel (sometimes called update mechanism).

\subsubsection{\label{sub:Random-Scan-(Mixing)}Random Scan (Mixing)}

The random scan (mixing) update mechanism follows the idea of picking
one update mechanism (among a collection of predefined update mechanisms)
and then applying it. Such update mechanism is picked according to
some weigth associated to each one of the update mechanism. Following
this idea, the mixing update mechanism is built using kernels addition
and kernel multiplication by a scalar.

In order to maintain the Markovness property (both operations, kernels
addition and kernel multiplication by a scalar, in general, do not
preserve such property), a convex combination of them is considered. 
\begin{defn}
\label{def:(mixing)}(\textbf{mixing}) The mixing update mechanism
of a set of $n$ Markov transition kernels $K_{1}$, ..., $K_{n}$,
each of them with a probability of being picked $p_{1},p_{2},\ldots,p_{n}$
($\sum p_{i}=1$), is defined by equation \ref{eq:mixingKernels}.
\end{defn}
\begin{equation}
\left(\sum_{i=1}^{n}p_{i}K_{1}\right)\left(x,A\right)=\int_{A}\sum_{i=1}^{n}p_{i}K_{i}\left(x,y\right)dy\label{eq:mixingKernels}
\end{equation}

Since the integral in equation \ref{eq:mixingKernels} is a linear
operator, the mixing operation can be defined by equation \ref{eq:mixingKernels-1}.

\begin{equation}
\left(\sum_{i=1}^{n}p_{i}K_{1}\right)\left(x,A\right)=\sum_{i=1}^{n}p_{i}\int_{A}K_{i}\left(x,y\right)dy\label{eq:mixingKernels-1}
\end{equation}

\subsubsection{\label{sub:Composition}Composition}

The composition update mechanism follows the idea of applying an update
mechanism (kernel) followed by other update mechanism and so on. Following
this idea, the composition update mechanism is built using the kernel
multiplication operator.
\begin{defn}
(\textbf{composition}) The composition of two kernels $K_{1}$, $K_{2}$
is defined by equation \ref{eq:multiplyKernels}.
\end{defn}
\begin{equation}
\left(K_{2}\circ K_{1}\right)\left(x,A\right)=\int K_{2}\left(y,A\right)K_{1}\left(x,dy\right)\label{eq:multiplyKernels}
\end{equation}

Since the kernel multiplication is an associative operation (using
the conditional Fubini theorem, see Theorem 2 of Chapter 22, page
431 of the book of Fristedt and Gray \cite{FristedtGray97}), the
composition of update mechanisms that corresponds to a set of $n$
transition kernels $K_{1}$, ..., $K_{n}$ is defined as the product
kernel $K_{n}\circ K_{n-1}\circ\ldots\circ K_{1}$.

\subsubsection{Transition's Kernel Iteration}

The transition probability $t$-th iteration (application) of a Markovian
kernel $K$, given by equation \ref{eq:Kernel-Iteration}, describes
the probability to transit to some set $A\in\Sigma$ within $t$ steps
when starting at state $x\in\varOmega$.

\begin{equation}
K^{\left(t\right)}\left(x,A\right)=\begin{cases}
K\left(x,A\right) & ,\,t=1\\
{\displaystyle \intop_{\varOmega}}K^{\left(t-1\right)}\left(y,A\right)K\left(x,dy\right) & ,\,t>1
\end{cases}\label{eq:Kernel-Iteration}
\end{equation}

If $p:\Sigma\rightarrow\left[0,1\right]$ is the initial distribution
of subsets, then the probability that the Markov process is in set
$A\in\Sigma$ at step $t\geq0$ is given by equation \ref{eq:kernel-transition2}.

\begin{equation}
Pr\left\{ X_{t}\in A\right\} =\begin{cases}
p\left(A\right) & ,\,t=0\\
{\displaystyle \intop_{\varOmega}}K^{\left(t\right)}\left(x,A\right)p\left(dx\right) & ,\,t>0
\end{cases}\label{eq:kernel-transition2}
\end{equation}

\subsection{\label{sub:Markov-Chains}Markov Chains}
\begin{defn}
(\textbf{Markov chain}) A discrete-time stochastic process $X_{0}$,
$X_{1}$, $X_{2}$, $\ldots$, taking values in an arbitrary state
space $\varOmega$ is a Markov chain if it satisfies: 
\begin{enumerate}
\item (\textbf{Markov property}) The conditional distribution of $X_{t}$
given $X_{0}$, $X_{1}$,$\ldots$, $X_{t-1}$ is the same as the
conditional distribution of $X_{t}$ given only $X_{t-1}$,
\item (\textbf{stationarity property}) The conditional distribution of $X_{t}$
given $X_{t-1}$ does not depend on $t$. 
\end{enumerate}
\end{defn}
Clearly, transition probabilities of the chain are specified by the
conditional distribution of $X_{t}$ given $X_{t-1}$ (kernel), while
the probability law of the chain is completely specified by the initial
distribution $X_{0}$. Moreover, many \noun{SGoal}s may be characterized
by Markov chains.

\subsection{\label{sub:Convergence}Convergence}
\begin{defn}
Let $\left(D_{t}\right)$ be a random sequence, i.e., a sequence of
random variables defined on a probability space $\left(\varOmega,\Sigma,P\right)$.
Then $\left(D_{t}\right)$ is said to 
\begin{enumerate}
\item \textbf{Converge completely to zero}, denoted as $D_{t}\overset{c}{\rightarrow}0$,
if equation \ref{eq:completely-0} holds for every $\epsilon>0$

\begin{equation}
\underset{t\rightarrow\infty}{\lim}{\displaystyle \sum_{i=1}^{t}Pr\left\{ \left|D_{t}\right|>\epsilon\right\} }<\infty\label{eq:completely-0}
\end{equation}

\item \textbf{Converge in probability to zero}, denoted as $D_{t}\overset{p}{\rightarrow}0$,
if equation \ref{eq:probability-0} holds for every $\epsilon>0$.

\begin{equation}
\underset{t\rightarrow\infty}{\lim}{\displaystyle Pr\left\{ \left|D_{t}\right|>\epsilon\right\} }=0\label{eq:probability-0}
\end{equation}

\end{enumerate}
\end{defn}
Notice that convergence in probability to zero (equation \ref{eq:probability-0})
is a necessary condition for convergence completely to zero (equation
\ref{eq:completely-0}).

\section{Probability Theory on Cartesian Products}

Since we are working on populations (finite tuples of individuals
in the search space), we need to consider probability theory on generalized
cartesian products of the search space (subsection \ref{sub:Generalized-Cartesian-Product}).
By considering some mathematical properties of the generalized cartesian
product (we will move from tuples of tuples to just a single tuple),
some mathematical proofs can be simplified and an appropiated $\sigma$-algebra
for populations can be defined (subsection \ref{sub:Product-sigma-algebra}).
These accesory definitions, propositions and theorems will allow us
to define a kernel by joining some simple kernels (section \ref{sub:Kernel-join}).
Therefore, we will be able to work with stochastical methods in \noun{SGoal}s
that are defined as joins of stochastic methods that produce subpopulations
(sub-tuples) of the newly generated complete population (single tuple).

\subsection{\label{sub:Generalized-Cartesian-Product}Generalized Cartesian Product}
\begin{defn}
\label{def:Cartesian Product}(\textbf{cartesian product}) Let $\mathcal{L}=\left\{ \varOmega_{1},\varOmega_{2},...,\varOmega_{n}\right\} $
be an ordered list of $n\in\mathbb{N}$ sets. The Cartesian product
of $\mathcal{L}$ is the set of ordered $n$-tuples: $\prod_{i=1}^{n}\varOmega_{i}=\left\{ \left(a_{1},a_{2},\ldots,a_{n}\right)\mid a_{i}\in\varOmega_{i}\mbox{ for all }i=1,2,\ldots,n\right\} $.
If $\varOmega=\varOmega_{i}$ for all $i=1,2,\ldots,n$ then $\prod_{i=1}^{n}\varOmega_{i}$
is noted $\varOmega^{n}$ and it is called the \textbf{$n$}-fold
cartesian product of set $\varOmega$.\end{defn}
\begin{lem}
\label{lem:asso-comm-product}Let $\varOmega_{1}$, $\varOmega_{2}$
and $\varOmega_{3}$ be sets, then
\begin{enumerate}
\item (\textbf{associativity}) $\left(\varOmega_{1}\times\varOmega_{2}\right)\times\varOmega_{3}\equiv\varOmega_{1}\times\left(\varOmega_{2}\times\varOmega_{3}\right)\equiv\varOmega_{1}\times\varOmega_{2}\times\varOmega_{3}$.
\item (\textbf{commutativity}) $\varOmega_{1}\times\varOmega_{2}\equiv\varOmega_{2}\times\varOmega_{1}$
\end{enumerate}
\end{lem}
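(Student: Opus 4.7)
The statement interprets the symbol $\equiv$ as denoting a canonical bijection rather than literal set-theoretic equality, since strictly speaking the tuples $((a,b),c)$, $(a,(b,c))$, and $(a,b,c)$ are distinct objects under Definition \ref{def:Cartesian Product}. My plan is therefore to exhibit, in each case, an explicit bijection and argue it is well-defined and invertible.

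For associativity, I would introduce two maps. Define
\[
\varphi_1\colon (\varOmega_1\times\varOmega_2)\times\varOmega_3 \longrightarrow \varOmega_1\times\varOmega_2\times\varOmega_3,\qquad \varphi_1\bigl(\left(a_1,a_2\right),a_3\bigr)=(a_1,a_2,a_3),
\]
and
\[
\varphi_2\colon \varOmega_1\times(\varOmega_2\times\varOmega_3) \longrightarrow \varOmega_1\times\varOmega_2\times\varOmega_3,\qquad \varphi_2\bigl(a_1,\left(a_2,a_3\right)\bigr)=(a_1,a_2,a_3).
\]
Well-definedness is immediate from Definition \ref{def:Cartesian Product}. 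For each map, injectivity follows by projecting the image onto its coordinates (two preimages producing the same triple must agree componentwise, hence must be the same pair of pairs), and surjectivity follows by observing that any triple $(a_1,a_2,a_3)$ in the codomain is the image of $((a_1,a_2),a_3)$ and of $(a_1,(a_2,a_3))$, respectively. The composition $\varphi_2^{-1}\circ\varphi_1$ then provides the bijection between $(\varOmega_1\times\varOmega_2)\times\varOmega_3$ and $\varOmega_1\times(\varOmega_2\times\varOmega_3)$.

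For commutativity, I would define the swap $\psi\colon\varOmega_1\times\varOmega_2\to\varOmega_2\times\varOmega_1$ by $\psi(a_1,a_2)=(a_2,a_1)$ and note that $\psi\circ\psi=\mathrm{id}$, so $\psi$ is its own inverse and hence a bijection.

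There is no real obstacle here; the content of the lemma is essentially notational, laying the groundwork for treating nested Cartesian products of populations as flat tuples in later sections (so that, e.g., joins of sub-population kernels can be defined without carrying around bracket structure). The only point requiring mild care is to make explicit that $\equiv$ means ``canonically bijective'' rather than ``equal,'' since this convention will be invoked repeatedly when $\sigma$-algebras and kernels on product spaces are discussed in subsequent subsections.
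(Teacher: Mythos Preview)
Your proof is correct and takes essentially the same approach as the paper: the paper also exhibits the two maps $h_L((a,b),c)=(a,b,c)$ and $h_R(a,(b,c))=(a,b,c)$ for associativity and the swap $r(a,b)=(b,a)$ for commutativity, calling them equivalence (bijective) functions without further argument. Your version is simply more explicit about why these maps are bijections and about the meaning of $\equiv$.
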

\begin{proof}
\textbf{{[}1{]}} Functions $h_{L}\vcentcolon\left(\varOmega_{1}\times\varOmega_{2}\right)\times\varOmega_{3}\rightarrow\varOmega_{1}\times\varOmega_{2}\times\varOmega_{3}$
and $h_{R}\vcentcolon\varOmega_{1}\times\left(\varOmega_{2}\times\varOmega_{3}\right)\rightarrow\varOmega_{1}\times\varOmega_{2}\times\varOmega_{3}$
such that $h_{L}\left(\left(a,b\right),c\right)=\left(a,b,c\right)$
and $h_{R}\left(a,\left(b,c\right)\right)=\left(a,b,c\right)$ are
equivalence functions. \textbf{{[}2{]} }Function $r\vcentcolon A\times B\rightarrow B\times A$
for any $A,B$ such that $r\left(a,b\right)=\left(b,a\right)$ is
a bijective function.\end{proof}
\begin{cor}
Let $\left\{ n_{i}\in\mathbb{N}^{+}\right\} _{i=1,2,\ldots,m}$ be
an ordered list of $m\in\mathbb{N}$ positive natural numbers.
\begin{enumerate}
\item $\prod_{i=1}^{m}\prod_{j=1}^{n_{i}}\varOmega_{i,j}\equiv\varOmega_{1,1}\times\ldots\times\varOmega_{1,n_{1}}\times\ldots\times\varOmega_{m,1}\times\ldots\times\varOmega_{m,n_{m}}$
with $\varOmega_{i,j}$ a set for all $i=1,2,\ldots m$ and $j=1,2,\ldots,n_{i}$.
\item $\prod_{i=1}^{m}\varOmega^{n_{i}}\equiv\varOmega^{n}$ with $n=\sum_{i=1}^{m}n_{i}$.
\end{enumerate}
\end{cor}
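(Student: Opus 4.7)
The plan is to reduce both statements to repeated applications of the associativity part of Lemma \ref{lem:asso-comm-product}, with induction on $m$ providing the structural argument.

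For part (1), I would proceed by induction on $m$. The base case $m=1$ is immediate, since $\prod_{i=1}^{1}\prod_{j=1}^{n_{1}}\varOmega_{1,j}=\prod_{j=1}^{n_{1}}\varOmega_{1,j}$ is literally the claimed flat product. For the inductive step, write
\[
\prod_{i=1}^{m+1}\prod_{j=1}^{n_{i}}\varOmega_{i,j}=\left(\prod_{i=1}^{m}\prod_{j=1}^{n_{i}}\varOmega_{i,j}\right)\times\prod_{j=1}^{n_{m+1}}\varOmega_{m+1,j},
\]
invoke the inductive hypothesis on the left factor to replace it by the corresponding flat product, and then apply associativity (Lemma \ref{lem:asso-comm-product}.1, extended from triples to finitely many factors by an inner induction) to drop the outer grouping. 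The explicit equivalence is realized by the canonical bijection sending a nested tuple $\left(\left(a_{i,j}\right)_{j=1}^{n_{i}}\right)_{i=1}^{m}$ to the flat tuple $(a_{1,1},\ldots,a_{1,n_{1}},\ldots,a_{m,1},\ldots,a_{m,n_{m}})$; bijectivity is clear because both sides have the same components indexed by the same pairs $(i,j)$.

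For part (2), I would apply part (1) with the specific choice $\varOmega_{i,j}=\varOmega$ for every $i=1,\ldots,m$ and $j=1,\ldots,n_{i}$. Then $\prod_{j=1}^{n_{i}}\varOmega_{i,j}=\varOmega^{n_{i}}$ by Definition \ref{def:Cartesian Product}, so the left-hand side becomes $\prod_{i=1}^{m}\varOmega^{n_{i}}$. By part (1), this is equivalent to the flat product of $\sum_{i=1}^{m}n_{i}=n$ copies of $\varOmega$, which is $\varOmega^{n}$ by definition.

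The only genuinely delicate point is the extension of associativity from the three-factor case in Lemma \ref{lem:asso-comm-product} to arbitrarily many factors, which itself requires a short secondary induction; once that is granted the rest is bookkeeping on indices. No measure-theoretic content is needed since the claim is purely set-theoretic equivalence (bijection) of cartesian products.
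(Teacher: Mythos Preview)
Your proposal is correct and matches the paper's intent: the paper states this corollary without proof, immediately after Lemma~\ref{lem:asso-comm-product}, treating it as a direct consequence of associativity. Your induction on $m$ (with the inner induction extending three-factor associativity to finitely many factors) is exactly the natural unpacking of that implicit argument, and your derivation of part~(2) from part~(1) by specializing $\varOmega_{i,j}=\varOmega$ is the obvious route.
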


\subsection{\label{sub:Product-sigma-algebra}Product $\sigma$-algebra}

Products $\sigma$-algebra allow us to define appropiated $\sigma$-algebra
for generalized cartesian products. If we are provided with a $\sigma$-algebra
associated to the feasible region of a \noun{SGoal}, then we can define
a $\sigma$-algebra for populations of it.
\begin{defn}
\label{def:SigmaProduct}Let $\mathcal{L}=\left\{ \Sigma_{1},\Sigma_{2},\ldots,\Sigma_{n}\right\} $
be a $n\in\mathbb{N}$ ordered list of $\Sigma_{i}\subseteq2^{\varOmega_{i}}$
family of sets. 
\begin{enumerate}
\item \textbf{(generalized family product) }The generalized product of $\mathcal{L}$
is $\prod_{i=1}^{n}\Sigma_{i}=\left\{ \prod_{i=1}^{n}A_{i}\mid\forall_{i=1}^{n}A_{i}\in\Sigma_{i}\right\} $
\item (\textbf{product $\sigma$-algebra}) If $\Sigma_{i}$ is a $\sigma$-algebra
for all $i=1,2,\ldots,n$, then the product $\sigma$-algebra of $\mathcal{L}$
is the $\sigma$-algebra $\bigotimes{}_{i=1}^{n}\Sigma_{i}=\sigma\left(\prod_{i=1}^{n}\Sigma_{i}\right)$
defined over the set $\prod_{i=1}^{n}\varOmega_{i}$.
\end{enumerate}
\end{defn}
\begin{lem}
\label{lem:-sigma-product}If $\mathcal{L}=\left\{ \left(\Sigma_{1},\varOmega_{1}\right),\left(\Sigma_{2},\varOmega_{2}\right),\ldots,\left(\Sigma_{n},\varOmega_{n}\right)\right\} $
is a finite ($n\in\mathbb{N}$) ordered list of measurable spaces
then $\prod_{i=1}^{n}\Sigma_{i}$ is a $\pi$-system.\end{lem}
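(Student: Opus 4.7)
The plan is to unpack the definition of $\pi$-system and verify the single required closure property: closure under pairwise intersections. By definition, every element of $\prod_{i=1}^n \Sigma_i$ has the form $\prod_{i=1}^n A_i$ with $A_i \in \Sigma_i$, so I would begin by fixing two arbitrary such elements $A = \prod_{i=1}^n A_i$ and $B = \prod_{i=1}^n B_i$ and aim to show that $A \cap B$ again lies in $\prod_{i=1}^n \Sigma_i$.

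The key algebraic step is the componentwise intersection identity
\begin{equation*}
\left(\prod_{i=1}^n A_i\right) \cap \left(\prod_{i=1}^n B_i\right) = \prod_{i=1}^n (A_i \cap B_i),
\end{equation*}
which follows directly from the definition of the cartesian product: a tuple $(x_1,\dots,x_n)$ lies in the left side iff $x_i \in A_i$ and $x_i \in B_i$ for each $i$, iff $x_i \in A_i \cap B_i$ for each $i$. Given this identity, it remains only to note that each factor $A_i \cap B_i$ is again an element of $\Sigma_i$, which is immediate since each $\Sigma_i$ is a $\sigma$-algebra and hence closed under finite intersections (a consequence of Lemma~\ref{lem:sigma-intersections}(2)). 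Thus $A \cap B$ is displayed as a product of sets drawn from the appropriate $\Sigma_i$, so $A \cap B \in \prod_{i=1}^n \Sigma_i$ by Definition~\ref{def:SigmaProduct}(1).

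I do not anticipate any real obstacle: the result is essentially bookkeeping, with the only substantive ingredient being the fact that $\sigma$-algebras are closed under finite intersections, which has already been established. The proof is short enough that I would write it as a single paragraph in the paper, explicitly displaying the intersection identity so that the reader sees why the product structure is preserved.
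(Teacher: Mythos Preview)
Your proposal is correct and follows essentially the same approach as the paper: fix two arbitrary measurable rectangles, establish the componentwise intersection identity via a tuple-membership argument, and conclude by invoking closure of each $\Sigma_i$ under finite intersections (Lemma~\ref{lem:sigma-intersections}). The paper's proof differs only in notation (using $U,V$ instead of $A,B$) and in the order of the steps, so there is nothing substantive to add.
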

\begin{proof}
If $U,V\in\prod_{i=1}^{n}\Sigma_{i}$ then for all $i=1,2,\ldots,n$
exist $U_{i},V_{i}\in\Sigma_{i}$ such that $U=\prod_{i=1}^{n}U_{i}$
and $V=\prod_{i=1}^{n}V_{i}$ (def. $\prod_{i=1}^{n}\Sigma_{i}$).
Clearly, $U_{i}\bigcap V_{i}\in\Sigma_{i}$ (lemma \ref{lem:sigma-intersections}).
Therefore, $\prod_{i=1}^{n}\left(U_{i}\bigcap V_{i}\right)\in\prod_{i=1}^{n}\Sigma_{i}$
(def. $\prod_{i=1}^{n}\Sigma_{i}$). Let $z=\left(z_{1},z_{2},\ldots,z_{n}\right)\in\prod_{i=1}^{n}\varOmega_{i}$.
Clearly, $z\in U\bigcap V$ iff $z\in U$ and $z\in V$ (def $\bigcap$)
iff $z_{i}\in U_{i}$ and $z_{i}\in V_{i}$ for all $i=1,2,\ldots,n$
(def $U$ and $V$) iff $z_{i}\in U_{i}\bigcap V_{i}$ for all $i=1,2,\ldots,n$
(def $\bigcap$) iff $z\in\prod_{i=1}^{n}\left(U_{i}\bigcap V_{i}\right)$
(def $\prod$). Therefore, $U\bigcap V=\prod_{i=1}^{n}\left(U_{i}\bigcap V_{i}\right)\in\prod_{i=1}^{n}\Sigma_{i}$.
\end{proof}
Proposition \emph{\ref{prop:(associativity-of-sigma-algebra} }will
allow us to move from the product $\sigma$-algebra of products $\sigma$-algebras
to a single product $\sigma$-algebra (as we move from tuples of tuples
to just a single tuple).
\begin{prop}
\label{prop:(associativity-of-sigma-algebra}(\textbf{associativity
of $\sigma$-algebra} \textbf{product}) Let $\Sigma_{i}$ be a $\sigma$-algebra
defined over a set $\varOmega_{i}$ for all $i=1,2,3$, then $\left(\Sigma_{1}\otimes\Sigma_{2}\right)\otimes\Sigma_{3}\equiv\Sigma_{1}\otimes\left(\Sigma_{2}\otimes\Sigma_{3}\right)\equiv\Sigma_{1}\otimes\Sigma_{2}\otimes\Sigma_{3}$.\end{prop}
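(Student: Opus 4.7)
The plan is to identify the three underlying ground sets via the canonical bijections $h_L,h_R$ provided in Lemma~\ref{lem:asso-comm-product}, so that all three candidate $\sigma$-algebras live on $\varOmega_1\times\varOmega_2\times\varOmega_3$, and then show that they generate the same family by double inclusion. Throughout, I will reserve the Dynkin/$\sigma$-algebra machinery already established in the excerpt (namely Lemmas~\ref{lem:lambda-cupd} and~\ref{lem:sigma-intersections}, Theorem~\ref{thm:Dynkin---theorem.}, and Lemma~\ref{lem:-sigma-product}) as the main tools.

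\medskip

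\noindent\textbf{Easy inclusion.} Every generator of $\Sigma_1\otimes\Sigma_2\otimes\Sigma_3$ has the form $A_1\times A_2\times A_3$ with $A_i\in\Sigma_i$. Since $A_1\times A_2\in\prod(\Sigma_1,\Sigma_2)\subseteq\Sigma_1\otimes\Sigma_2$, the set $(A_1\times A_2)\times A_3$ belongs to $\prod(\Sigma_1\otimes\Sigma_2,\Sigma_3)$, and hence to $(\Sigma_1\otimes\Sigma_2)\otimes\Sigma_3$. By minimality of $\sigma(\cdot)$, this yields $\Sigma_1\otimes\Sigma_2\otimes\Sigma_3\subseteq(\Sigma_1\otimes\Sigma_2)\otimes\Sigma_3$, and symmetrically for the right parenthesization.

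\medskip

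\noindent\textbf{Hard inclusion (the main obstacle).} For the reverse direction I need to show that for every $C\in\Sigma_1\otimes\Sigma_2$ and every $A_3\in\Sigma_3$, the product $C\times A_3$ lies in $\Sigma_1\otimes\Sigma_2\otimes\Sigma_3$. The difficulty is that $C$ is only given abstractly as an element of the generated $\sigma$-algebra, not as a finite product. I will handle this by the usual ``good-sets'' argument: fix $A_3\in\Sigma_3$ and define
\[
\mathcal{D}_{A_3}=\bigl\{C\subseteq\varOmega_1\times\varOmega_2\;:\;C\times A_3\in\Sigma_1\otimes\Sigma_2\otimes\Sigma_3\bigr\}.
\]
I would then verify that $\mathcal{D}_{A_3}$ is a $\sigma$-algebra on $\varOmega_1\times\varOmega_2$: the total set is handled by $(\varOmega_1\times\varOmega_2)\times A_3=\varOmega_1\times\varOmega_2\times A_3\in\prod\Sigma_i\subseteq\Sigma_1\otimes\Sigma_2\otimes\Sigma_3$; closure under complements follows from the identity $(\varOmega_1\times\varOmega_2\setminus C)\times A_3=(\varOmega_1\times\varOmega_2\times A_3)\setminus(C\times A_3)$ together with Lemma~\ref{lem:sigma-intersections}(4); countable unions distribute over $\bullet\times A_3$, giving $\overline{\textsc{cu}}$. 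Since $\mathcal{D}_{A_3}$ plainly contains the $\pi$-system $\prod(\Sigma_1,\Sigma_2)$ (every rectangle $B_1\times B_2$ gives $B_1\times B_2\times A_3\in\prod\Sigma_i$), minimality yields $\Sigma_1\otimes\Sigma_2\subseteq\mathcal{D}_{A_3}$.

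\medskip

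\noindent\textbf{Finishing up.} With this ``good-sets'' lemma in hand, every generator $C\times A_3$ of $(\Sigma_1\otimes\Sigma_2)\otimes\Sigma_3$ lies in $\Sigma_1\otimes\Sigma_2\otimes\Sigma_3$, so $(\Sigma_1\otimes\Sigma_2)\otimes\Sigma_3\subseteq\Sigma_1\otimes\Sigma_2\otimes\Sigma_3$, and combined with the easy inclusion the first equivalence follows. An entirely analogous good-sets argument---this time varying $C\in\Sigma_2\otimes\Sigma_3$ with $A_1\in\Sigma_1$ held fixed---delivers $\Sigma_1\otimes(\Sigma_2\otimes\Sigma_3)=\Sigma_1\otimes\Sigma_2\otimes\Sigma_3$, completing the proof. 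Alternatively, one can shorten the hard inclusion by invoking Dynkin's Theorem~\ref{thm:Dynkin---theorem.} on $\mathcal{D}_{A_3}$ viewed as a $\lambda$-system containing the $\pi$-system $\prod(\Sigma_1,\Sigma_2)$ (which is a $\pi$-system by Lemma~\ref{lem:-sigma-product}), but verifying the $\sigma$-algebra properties directly is essentially no harder.
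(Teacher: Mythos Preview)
Your proposal is correct and follows essentially the same route as the paper: both fix $A_3\in\Sigma_3$, introduce the good-sets family $\{C:C\times A_3\in\Sigma_1\otimes\Sigma_2\otimes\Sigma_3\}$, show it contains the rectangles $\Sigma_1\times\Sigma_2$, and conclude $\Sigma_1\otimes\Sigma_2\subseteq$ that family. The only cosmetic difference is that the paper verifies the $\lambda$-system axioms and invokes Dynkin's Theorem~\ref{thm:Dynkin---theorem.}, whereas you verify the full $\sigma$-algebra axioms directly and appeal to minimality (and you already note the Dynkin variant as an alternative).
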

\begin{proof}
\textbf{{[}$\left(\Sigma_{1}\otimes\Sigma_{2}\right)\otimes\Sigma_{3}\equiv\Sigma_{1}\otimes\Sigma_{2}\otimes\Sigma_{3}${]}
{[}$\subseteq${]}} We will use the Dynkin $\pi-\lambda$ theorem
\ref{thm:Dynkin---theorem.} here. So, we need to find a $\pi$-system
that is contained by a $\lambda$ system. Consider $A_{3}\in\Sigma_{3}$
and define $\mathcal{A}_{A_{3}}=\left\{ X\in\Sigma_{1}\otimes\Sigma_{2}\mid X\times A_{3}\in\sigma\left(\Sigma_{1}\times\Sigma_{2}\times\Sigma_{3}\right)\right\} $.
{[}$\pi$\textbf{-system}{]} $A_{1}\times A_{2}\times A_{3}\in\Sigma_{1}\times\Sigma_{2}\times\Sigma_{3}$
for all $A_{1}\in\Sigma_{1}$ and $A_{2}\in\Sigma_{2}$ ($A_{3}\in\Sigma_{3}$)
then $\Sigma_{1}\times\Sigma_{2}\subset\mathcal{A}_{A_{3}}$.$\left(\Sigma_{1}\otimes\Sigma_{2}\right)\times\Sigma_{3}$.
{[}$\lambda$\textbf{-system}{]} \textbf{{[}$\lambda.1${]}} $\varOmega_{i}\in\Sigma_{i}$
for $i=1,2,3$ ($\Sigma_{i}$ $\sigma$-algebra) then $\varOmega_{1}\times\varOmega_{2}\times A_{3}\in\Sigma_{1}\times\Sigma_{2}\times\Sigma_{3}$.
Therefore, $\varOmega_{1}\times\varOmega_{2}\in\mathcal{A}_{A_{3}}$
(def $\mathcal{A}_{A_{3}}$). \textbf{{[}$\lambda.2${]}} Let $X\in\mathcal{A}_{A_{3}}$
then $X^{c}\times A_{3}=\left(\left(\varOmega_{1}\times\varOmega_{2}\right)\setminus X\right)\times A_{3}$
(def. complement). Clearly, $X^{c}\times A_{3}=\left(\left(\varOmega_{1}\times\varOmega_{2}\right)\times A_{3}\right)\bigcap\left(X\times A_{3}\right)^{c}$
(Distribution and Morgan's law). Now, $\left(X\times A_{3}\right)\in\sigma\left(\Sigma_{1}\times\Sigma_{2}\times\Sigma_{3}\right)$
(def $\mathcal{A}_{A_{3}}$), then $\left(X\times A_{3}\right)^{c}\in\sigma\left(\Sigma_{1}\times\Sigma_{2}\times\Sigma_{3}\right)$
($\sigma$-algebra). Moreover, $\left(\varOmega_{1}\times\varOmega_{2}\right)\times A_{3}\in\sigma\left(\Sigma_{1}\times\Sigma_{2}\times\Sigma_{3}\right)$
(part 1 and def $\mathcal{A}_{A_{3}}$) therefore, $X^{c}\times A_{3}\in\sigma\left(\Sigma_{1}\times\Sigma_{2}\times\Sigma_{3}\right)$
(lemma \ref{lem:sigma-intersections}.2). So, $X^{c}\in\mathcal{A}_{A_{3}}$.
\textbf{{[}$\lambda.3${]}} Let $\left\{ X_{i}\right\} _{i\in I}\subseteq\mathcal{A}_{A_{3}}$
be \noun{$\mbox{\textsc{cdf}}$} of $\mathcal{A}_{A_{3}}$. Clearly,
$\bigcup_{i\in I}\left(X_{i}\times A_{3}\right)=\left(\bigcup_{i\in I}X_{i}\right)\times A_{3}$
(sets algebra), and $\left(\bigcup_{i\in I}X_{i}\right)\in\Sigma_{1}\otimes\Sigma_{2}$
($X_{i}\in\Sigma_{1}\otimes\Sigma_{2}$ for all $i\in I$). then,
$\bigcup_{i\in I}\left(X_{i}\times A_{3}\right)\in\mathcal{A}_{A_{3}}$.
Therefore, $\mathcal{A}_{A_{3}}$ is a $\lambda$-system. In this
way, $\Sigma_{1}\otimes\Sigma_{2}=\sigma\left(\Sigma_{1}\times\Sigma_{2}\right)\subseteq\mathcal{A}_{A_{3}}$
(Dynkin $\pi-\lambda$ theorem \ref{thm:Dynkin---theorem.}), and
$\sigma\left(\Sigma_{1}\times\Sigma_{2}\right)\times A_{3}\subseteq\sigma\left(\Sigma_{1}\times\Sigma_{2}\times\Sigma_{3}\right)$
(def $\mathcal{A}_{A_{3}}$), i.e., $\left(\Sigma_{1}\otimes\Sigma_{2}\right)\times A_{3}\subseteq\Sigma_{1}\otimes\Sigma_{2}\otimes\Sigma_{3}$.
Because, $\left(\Sigma_{1}\otimes\Sigma_{2}\right)\times A_{3}\subseteq\Sigma_{1}\otimes\Sigma_{2}\otimes\Sigma_{3}$
for all $A_{3}\in\Sigma_{3}$ then $\left(\Sigma_{1}\otimes\Sigma_{2}\right)\times\Sigma_{3}\subseteq\Sigma_{1}\otimes\Sigma_{2}\otimes\Sigma_{3}$
and $\sigma\left(\left(\Sigma_{1}\otimes\Sigma_{2}\right)\times\Sigma_{3}\right)=\left(\Sigma_{1}\otimes\Sigma_{2}\right)\otimes\Sigma_{3}\subseteq\Sigma_{1}\otimes\Sigma_{2}\otimes\Sigma_{3}$
(def $\sigma\left(\cdot\right)$). \textbf{{[}$\supseteq${]}} It
is clear that $\Sigma_{1}\times\Sigma_{2}\subseteq\Sigma_{1}\otimes\Sigma_{2}$
(def $\otimes$), so $\Sigma_{1}\times\Sigma_{2}\times\Sigma_{3}\subseteq\left(\Sigma_{1}\otimes\Sigma_{2}\right)\times\Sigma_{3}$,
therefore, $\sigma\left(\Sigma_{1}\times\Sigma_{2}\times\Sigma_{3}\right)\subseteq\sigma\left(\left(\Sigma_{1}\otimes\Sigma_{2}\right)\times\Sigma_{3}\right)$
(def $\sigma\left(\cdot\right))$, i.e., $\left(\Sigma_{1}\otimes\Sigma_{2}\right)\otimes\Sigma_{3}\supseteq\Sigma_{1}\otimes\Sigma_{2}\otimes\Sigma_{3}$.
{[}\textbf{$\Sigma_{1}\otimes\left(\Sigma_{2}\otimes\Sigma_{3}\right)\equiv\Sigma_{1}\otimes\Sigma_{2}\otimes\Sigma_{3}$}{]}
A similar proof to the $\left(\Sigma_{1}\otimes\Sigma_{2}\right)\otimes\Sigma_{3}\equiv\Sigma_{1}\otimes\Sigma_{2}\otimes\Sigma_{3}$
is carried on. \end{proof}
\begin{cor}
If $\Sigma$ is a $\sigma$-algebra defined over set $\varOmega$
and $\left\{ \bigotimes_{k=1}^{n_{i}}\Sigma\right\} _{i=1,2,\ldots,m}$
is an ordered lists of the $m\in\mathbb{N}$ given product $\sigma$-algebras
($n_{i}\in\mathbb{N}^{+}$ for all $i=1,2,\ldots,m$) then $\bigotimes_{i=1}^{n}\Sigma\equiv\bigotimes_{i=1}^{m}\left(\bigotimes_{k=1}^{n_{i}}\Sigma\right)$
with $n=\sum_{i=1}^{m}n_{i}$.
\end{cor}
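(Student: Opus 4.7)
The plan is to proceed by induction on $m$, taking the associativity proposition \ref{prop:(associativity-of-sigma-algebra} as the single working tool. The statement of that proposition handles the three-factor case; the corollary is just a bookkeeping generalisation to an arbitrary number of groups of arbitrary (positive) sizes, so no new measure-theoretic content is needed.

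\textbf{Base case} ($m=1$). Here $n = n_1$, so both sides reduce to $\bigotimes_{k=1}^{n_1}\Sigma$ and there is nothing to prove.

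\textbf{Inductive step.} Suppose the identity holds for $m$ groups, and consider $m+1$ groups of sizes $n_1,\dots,n_{m+1}$. The plan is to peel off the last group. By iterated use of Proposition \ref{prop:(associativity-of-sigma-algebra} (absorbing one factor at a time from the $(m+1)$-fold outer product), one obtains
\[
\bigotimes_{i=1}^{m+1}\Bigl(\bigotimes_{k=1}^{n_i}\Sigma\Bigr)
\;\equiv\;
\Bigl(\bigotimes_{i=1}^{m}\bigotimes_{k=1}^{n_i}\Sigma\Bigr)
\;\otimes\;
\Bigl(\bigotimes_{k=1}^{n_{m+1}}\Sigma\Bigr).
\]
By the induction hypothesis the first factor equals $\bigotimes_{k=1}^{N}\Sigma$ with $N=\sum_{i=1}^{m}n_i$. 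Applying associativity once more (now to merge two flat products of $\Sigma$) gives $\bigotimes_{k=1}^{N+n_{m+1}}\Sigma = \bigotimes_{k=1}^{n}\Sigma$, which closes the induction.

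\textbf{Expected obstacle.} There is no real mathematical difficulty: Proposition \ref{prop:(associativity-of-sigma-algebra} does all the heavy lifting via the Dynkin $\pi$\nobreakdash-$\lambda$ argument. The only point requiring care is the implicit identification of the underlying cartesian products (so that the objects on both sides live on the same space $\varOmega^{n}$); this is supplied by the earlier associativity corollary for cartesian products of sets (Lemma \ref{lem:asso-comm-product} and its corollary), which is exactly what allows us to unparenthesise $\bigotimes_{i=1}^{m+1}\bigotimes_{k=1}^{n_i}\Sigma$ into a flat product in the first place. Once that identification is made explicit, the induction is a two-line computation.
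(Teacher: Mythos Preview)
Your argument is correct and is exactly the intended derivation: the paper states this as an immediate corollary of Proposition~\ref{prop:(associativity-of-sigma-algebra} without giving a separate proof, and your induction on $m$ (peeling off the last block and then flattening two adjacent flat products) is the standard way to unpack that. The only thing worth tightening is that both sub-steps in the inductive clause---splitting $\bigotimes_{i=1}^{m+1}(\cdot)$ as $\bigl(\bigotimes_{i=1}^{m}(\cdot)\bigr)\otimes(\cdot)$ and merging $\bigl(\bigotimes_{k=1}^{N}\Sigma\bigr)\otimes\bigl(\bigotimes_{k=1}^{n_{m+1}}\Sigma\bigr)$ into $\bigotimes_{k=1}^{N+n_{m+1}}\Sigma$---each themselves require a short inner induction on the number of factors using the three-term associativity; you allude to this (``absorbing one factor at a time'') but a reader might want it spelled out once.
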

In the rest of this paper, we will denote $\prod_{i=1}^{n}\mathcal{A}\equiv\mathcal{A}^{n}$
for any $\mathcal{A}\subseteq2^{\varOmega}$ , and $\Sigma^{\otimes n}\equiv\bigotimes_{i=1}^{n}\Sigma$
for any $\sigma$-algebra $\Sigma$ on $\Omega$.

\subsection{\label{sub:Kernel-join}Kernels on product $\sigma$-algebras}

Now, we are in the position of defining a kernel that characterizes
a deterministic method that is commonly used by \noun{SGoal}s (as
part of individual's selection methods based on fitness): the \noun{Swap}
method. 
\begin{defn}
\label{def:swap-two}(\textbf{\noun{Swap}}) Let $\varOmega_{1}$ and
$\varOmega_{2}$ be two sets. The swap function ($\underleftarrow{}$)
is defined as follows\footnote{We will use the notation $\underleftarrow{z}\equiv\underleftarrow{}\left(z\right)$
for any $z\in\varOmega_{1}\times\varOmega_{2}$ and $\underleftarrow{A}\equiv\underleftarrow{}\left(A\right)$
for any $A\subseteq\varOmega_{1}\times\varOmega_{2}$.}:

\[
\begin{array}{cccc}
\underleftarrow{}\vcentcolon & \varOmega_{1}\times\varOmega_{2} & \rightarrow & \varOmega_{2}\times\varOmega_{1}\\
 & \left(x,y\right) & \mapsto & \left(y,x\right)
\end{array}
\]

\end{defn}
\begin{lem}
\label{lem:Swap}Let $\varOmega_{1}$ and $\varOmega_{2}$ be two
sets.
\begin{enumerate}
\item $\textrm{Ø}=\underleftarrow{\textrm{Ø}}$ and $\underleftarrow{\varOmega_{1}\times\varOmega_{2}}=\varOmega_{2}\times\varOmega_{1}$
\item $z\in\underleftarrow{A}$ iff $\underleftarrow{z}\in A$
\item $A=\underleftarrow{\underleftarrow{A}}$ for all $A\subseteq\varOmega_{1}\times\varOmega_{2}$
\item $\underleftarrow{B\setminus A}=\underleftarrow{B}\setminus\underleftarrow{A}$
for all $A,B\subseteq\varOmega_{1}\times\varOmega_{2}$
\item $\underleftarrow{\bigcup_{i\in I}A_{i}}=\bigcup_{i\in I}\underleftarrow{A_{i}}$
for any family$\left\{ A_{i}\subseteq\varOmega_{1}\times\varOmega_{2}\right\} _{i\in I}$.
\end{enumerate}
\end{lem}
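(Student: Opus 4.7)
The plan is to prove item (2) first, since it gives a clean membership criterion for $\underleftarrow{A}$ that reduces every other claim to a one-line calculation. By the image definition of $\underleftarrow{A}$, an element $z=(z_1,z_2)\in\varOmega_2\times\varOmega_1$ lies in $\underleftarrow{A}$ iff there is some $(x,y)\in A$ with $\underleftarrow{(x,y)}=(y,x)=z$; solving gives $(x,y)=(z_2,z_1)=\underleftarrow{z}$, so $z\in\underleftarrow{A}$ iff $\underleftarrow{z}\in A$. This is the only place where the definition of $\underleftarrow{}$ is really unpacked, and everything else becomes bookkeeping.

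With item (2) in hand, I would derive the remaining items by the standard ``apply $\underleftarrow{z}\in\cdot$ and use logical equivalences'' pattern. For (1): $z\in\underleftarrow{\emptyset}$ iff $\underleftarrow{z}\in\emptyset$, which never holds, so $\underleftarrow{\emptyset}=\emptyset$; similarly $z\in\underleftarrow{\varOmega_1\times\varOmega_2}$ iff $\underleftarrow{z}\in\varOmega_1\times\varOmega_2$, which is automatic for $z\in\varOmega_2\times\varOmega_1$, giving equality. For (3): note $\underleftarrow{\underleftarrow{(a,b)}}=\underleftarrow{(b,a)}=(a,b)$, so $z\in\underleftarrow{\underleftarrow{A}}$ iff $\underleftarrow{z}\in\underleftarrow{A}$ iff $\underleftarrow{\underleftarrow{z}}\in A$ iff $z\in A$.

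For (4) and (5), I would again feed (2) into the logical definitions of $\setminus$ and $\bigcup$. Specifically, $z\in\underleftarrow{B\setminus A}$ iff $\underleftarrow{z}\in B$ and $\underleftarrow{z}\notin A$, which by (2) is iff $z\in\underleftarrow{B}$ and $z\notin\underleftarrow{A}$, i.e.\ $z\in\underleftarrow{B}\setminus\underleftarrow{A}$; and $z\in\underleftarrow{\bigcup_{i\in I}A_i}$ iff $\underleftarrow{z}\in A_i$ for some $i\in I$, iff $z\in\underleftarrow{A_i}$ for some $i\in I$, iff $z\in\bigcup_{i\in I}\underleftarrow{A_i}$.

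There is no real obstacle here: $\underleftarrow{}$ is a bijection with itself as inverse, so it commutes with every Boolean operation. The only thing that needs a little care is to state items (1)--(5) with the correct ambient set ($\underleftarrow{A}\subseteq\varOmega_2\times\varOmega_1$ when $A\subseteq\varOmega_1\times\varOmega_2$), which is implicit in the statement but worth keeping track of when writing ``$\underleftarrow{z}\in\cdot$''; once that typing bookkeeping is correct, item (2) reduces everything to propositional logic.
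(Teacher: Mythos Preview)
Your proposal is correct and matches the paper's approach: both use item (2) as the membership criterion $z\in\underleftarrow{A}\iff\underleftarrow{z}\in A$ and then reduce (4) and (5) to propositional logic via that criterion. The only difference is organizational---the paper declares (1), (2), (3) ``obvious (just applying def swap)'' and writes out (4) and (5) exactly as you do, whereas you spell out (2) first and systematically derive (1) and (3) from it as well.
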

\begin{proof}
\textbf{{[}1, 2, 3{]}} Are obvious (just applying def swap). \textbf{{[}4{]}}
Let $A,B\subseteq\varOmega_{1}\times\varOmega_{2}$. Now, $\underleftarrow{B\setminus A}=\left\{ z\mid\underleftarrow{z}\in B\setminus A\right\} $
(def swap), so $\underleftarrow{B\setminus A}=\left\{ z\mid\underleftarrow{z}\in B\wedge\underleftarrow{z}\notin A\right\} $
(def proper diff). Clearly, $\underleftarrow{B\setminus A}=\left\{ z\mid z\in\underleftarrow{B}\wedge z\notin\underleftarrow{A}\right\} $
(2), i.e. $\underleftarrow{B\setminus A}=\underleftarrow{B}\setminus\underleftarrow{A}$
(def proper diff). \textbf{{[}5{]}} Let $\left\{ A_{i}\subseteq\varOmega_{1}\times\varOmega_{2}\right\} _{i\in I}$
a family of sets, $z\in\underleftarrow{\bigcup_{i\in I}A_{i}}$ iff
$\exists i\in I$ such that $z\in\underleftarrow{A_{i}}$ iff $z\in\bigcup_{i\in I}\underleftarrow{A_{i}}$.\end{proof}
\begin{prop}
\label{prop:swap}Let $\left(\varOmega_{1},\Sigma_{1}\right)$ and
$\left(\varOmega_{2},\Sigma_{2}\right)$ be measurable spaces then
$A\in\Sigma_{1}\otimes\Sigma_{2}$ iff $\underleftarrow{A}\in\Sigma_{2}\otimes\Sigma_{1}$.\end{prop}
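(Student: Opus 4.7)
The plan is to prove the forward implication by a standard ``good sets'' argument, then obtain the reverse implication by symmetry using the involution property $A = \underleftarrow{\underleftarrow{A}}$ from Lemma~\ref{lem:Swap}(3).

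First, I would define the candidate collection
\[
\mathcal{D} = \left\{ A \in \Sigma_{1} \otimes \Sigma_{2} \mid \underleftarrow{A} \in \Sigma_{2} \otimes \Sigma_{1} \right\}
\]
and show it coincides with $\Sigma_{1} \otimes \Sigma_{2}$. The natural seed is the generating $\pi$-system $\Sigma_{1} \times \Sigma_{2}$ (Lemma~\ref{lem:-sigma-product}): for any rectangle $A_{1} \times A_{2}$ with $A_{i} \in \Sigma_{i}$, one has $\underleftarrow{A_{1} \times A_{2}} = A_{2} \times A_{1} \in \Sigma_{2} \times \Sigma_{1} \subseteq \Sigma_{2} \otimes \Sigma_{1}$, so $\Sigma_{1} \times \Sigma_{2} \subseteq \mathcal{D}$.

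Next I would verify that $\mathcal{D}$ is itself a $\sigma$-algebra over $\Omega_{1} \times \Omega_{2}$, using the algebraic properties of $\underleftarrow{}$ collected in Lemma~\ref{lem:Swap}. Specifically: $\Omega_{1} \times \Omega_{2} \in \mathcal{D}$ by item~(1); closure under complement follows from item~(4) applied with $B = \Omega_{1} \times \Omega_{2}$, which gives $\underleftarrow{A^{c}} = (\underleftarrow{A})^{c}$, so if $\underleftarrow{A} \in \Sigma_{2} \otimes \Sigma_{1}$ then its complement lies there too; closure under countable unions follows from item~(5) combined with the fact that $\Sigma_{2} \otimes \Sigma_{1}$ is a $\sigma$-algebra. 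Because $\mathcal{D}$ is a $\sigma$-algebra containing the generator $\Sigma_{1} \times \Sigma_{2}$, we obtain $\Sigma_{1} \otimes \Sigma_{2} = \sigma(\Sigma_{1} \times \Sigma_{2}) \subseteq \mathcal{D}$, whence the forward direction $A \in \Sigma_{1} \otimes \Sigma_{2} \Rightarrow \underleftarrow{A} \in \Sigma_{2} \otimes \Sigma_{1}$.

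For the reverse implication, I would simply run the same argument with the roles of the indices exchanged to conclude that for every $B \in \Sigma_{2} \otimes \Sigma_{1}$ one has $\underleftarrow{B} \in \Sigma_{1} \otimes \Sigma_{2}$; applying this to $B = \underleftarrow{A}$ and invoking Lemma~\ref{lem:Swap}(3), which says $A = \underleftarrow{\underleftarrow{A}}$, yields $A \in \Sigma_{1} \otimes \Sigma_{2}$. No serious obstacle is expected here: the real content is packaged in Lemma~\ref{lem:Swap}, and the remainder is a routine good-sets verification. The only point demanding a little care is justifying closure under complement through the decomposition $A^{c} = (\Omega_{1} \times \Omega_{2}) \setminus A$ so that item~(4) of the lemma (which is stated for proper differences) applies directly.
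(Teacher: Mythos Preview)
Your proof is correct and follows the same overall ``good sets'' strategy as the paper: define the collection of sets whose swap lands in $\Sigma_{2}\otimes\Sigma_{1}$, seed it with the rectangles $\Sigma_{1}\times\Sigma_{2}$, and conclude by minimality of the generated $\sigma$-algebra; the reverse direction via the involution $A=\underleftarrow{\underleftarrow{A}}$ is identical to the paper's.

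The one technical difference is that the paper verifies only that the good-sets collection is a $\lambda$-system and then invokes the Dynkin $\pi$--$\lambda$ theorem (Theorem~\ref{thm:Dynkin---theorem.}), whereas you go a bit further and show it is a full $\sigma$-algebra directly (using Lemma~\ref{lem:Swap}(4) for complements and (5) for countable unions). Your route is slightly more elementary since it sidesteps Dynkin altogether; the paper's route checks marginally less (proper differences and countable \emph{disjoint} unions instead of complements and arbitrary countable unions) at the cost of citing an external theorem. Either way the argument goes through without difficulty, since Lemma~\ref{lem:Swap} supplies exactly the set-algebraic identities needed for both versions.
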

\begin{proof}
\textbf{{[}$\rightarrow${]}} We will apply the Dynkin \textbf{$\pi$-$\lambda$
}theorem (theorem \ref{thm:Dynkin---theorem.}). Let $\mathcal{A}=\left\{ A\in\Sigma_{1}\otimes\Sigma_{2}\mid\underleftarrow{A}\in\Sigma_{2}\otimes\Sigma_{1}\right\} $.
\textbf{{[}$\lambda.1${]}} Obvious, $\textrm{Ø}=\underleftarrow{\textrm{Ø}}\in\Sigma_{2}\otimes\Sigma_{1}$
(lemma \ref{lem:Swap}.1 and lemma \ref{lem:sigma-intersections}.1).
\textbf{{[}$\lambda.2${]}} Let $A,B\in\mathcal{A}$ such that $A\subset B$.
Since $\underleftarrow{B\setminus A}=\underleftarrow{B}\setminus\underleftarrow{A}$
(lemma \ref{lem:Swap}.4), and $\underleftarrow{A},\underleftarrow{B}\in\Sigma_{2}\otimes\Sigma_{1}$
(def $\mathcal{A}$), then $\underleftarrow{B\setminus A}=\underleftarrow{B}\setminus\underleftarrow{A}\in\Sigma_{2}\otimes\Sigma_{1}$
(lemma \ref{lem:sigma-intersections}.2), i.e., $\mathcal{A}$ is
\noun{$\overline{\mbox{\textsc{pd}}}$}. \textbf{{[}$\lambda.3${]}}
Let $\left\{ A_{i}\subseteq\varOmega\right\} _{i\in I}$ a \noun{$\mbox{\textsc{cdf}}$},
$\underleftarrow{\bigcup_{i\in I}A_{i}}=\bigcup_{i\in I}\underleftarrow{A_{i}}$
(lemma \ref{lem:Swap}.5) and $\underleftarrow{A_{i}}\in\Sigma_{2}\otimes\Sigma_{1}$
(def $\mathcal{A}$) then $\underleftarrow{\bigcup_{i\in I}A_{i}}\in\Sigma_{2}\otimes\Sigma_{1}$
($\sigma.3$). Therefore, $\mathcal{A}$ is $\lambda$-system. Now,
let $A\in\Sigma_{1}\times\Sigma_{2}$, then there are $A_{1}\in\Sigma_{1}$
and$A_{2}\in\Sigma_{2}$ such that $A=A_{1}\times A_{2}$ (def $\Sigma_{1}\times\Sigma_{2}$).
Clearly, $\underleftarrow{A}=A_{2}\times A_{1}\in\Sigma_{2}\times\Sigma_{1}$
(def swap and $\Sigma_{2}\times\Sigma_{1}$), i.e., $\Sigma_{1}\times\Sigma_{2}\subset\mathcal{A}$
(def $\mathcal{A}$). Because $\Sigma_{1}\otimes\Sigma_{2}=\sigma\left(\Sigma_{1}\times\Sigma_{2}\right)$
and $\Sigma_{1}\times\Sigma_{2}\subset\mathcal{A}$ then $\underleftarrow{A}\in\Sigma_{2}\otimes\Sigma_{1}$.
\textbf{{[}$\leftarrow${]}} If $\underleftarrow{A}\in\Sigma_{2}\otimes\Sigma_{1}$,
we have that $\underleftarrow{\underleftarrow{A}}\in\Sigma_{1}\otimes\Sigma_{2}$
($\rightarrow$), therefore, $A\in\Sigma_{1}\otimes\Sigma_{2}$ ($A=\underleftarrow{\underleftarrow{A}}$). \end{proof}
\begin{cor}
\label{cor:(commutativity-product-sigma-algebra)} Let $\left(\varOmega_{1},\Sigma_{1}\right)$
and $\left(\varOmega_{2},\Sigma_{2}\right)$ be measurable spaces.
\begin{enumerate}
\item (\textbf{commutativity} \textbf{of $\sigma$-algebra} \textbf{product})
$\Sigma_{1}\otimes\Sigma_{2}\equiv\Sigma_{2}\otimes\Sigma_{1}$.
\item (\textbf{measurability of swap}) The swap function $\underleftarrow{}$
is measurable.
\item (\textbf{swap kernel}) The function $1_{\underleftarrow{}}$ is a
kernel\footnote{We will use the ambiguous notation $1_{\underleftarrow{}}\equiv\underleftarrow{}$
in the rest of this paper.}.
\end{enumerate}
\end{cor}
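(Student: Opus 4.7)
The plan is to derive all three items as direct consequences of Proposition \ref{prop:swap}, together with the elementary symmetry facts about the swap operation already collected in Lemma \ref{lem:Swap}. None of the three items requires any new combinatorial work; the only question is how to package the proposition's content into each of the three conclusions.

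For part (1), commutativity of the product $\sigma$-algebra, I would observe that the map $A\mapsto\underleftarrow{A}$ is a bijection between $\Sigma_{1}\otimes\Sigma_{2}$ and $\Sigma_{2}\otimes\Sigma_{1}$: Proposition \ref{prop:swap} shows that it sends one family into the other (in both directions), while Lemma \ref{lem:Swap}.3 ($A=\underleftarrow{\underleftarrow{A}}$) shows it is its own inverse, giving injectivity and surjectivity simultaneously. Combined with the set-level identification $\varOmega_{1}\times\varOmega_{2}\equiv\varOmega_{2}\times\varOmega_{1}$ from Lemma \ref{lem:asso-comm-product}.2, this is the intended equivalence of measurable spaces.

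For part (2), I would unpack the measurability definition directly. Let $B\in\Sigma_{2}\otimes\Sigma_{1}$; I need $\underleftarrow{}^{-1}(B)\in\Sigma_{1}\otimes\Sigma_{2}$. By Lemma \ref{lem:Swap}.2 together with Lemma \ref{lem:Swap}.3, the preimage of $B$ under $\underleftarrow{}$ coincides with $\underleftarrow{B}$ (swap is a pointwise involution, so preimage and image agree). Applying Proposition \ref{prop:swap} with the roles of $\Sigma_{1}$ and $\Sigma_{2}$ swapped (the proposition is symmetric in the two spaces) gives $\underleftarrow{B}\in\Sigma_{1}\otimes\Sigma_{2}$, proving $(\Sigma_{1}\otimes\Sigma_{2})$-$(\Sigma_{2}\otimes\Sigma_{1})$ measurability.

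For part (3), with measurability of $\underleftarrow{}$ established in part (2), I would simply cite Theorem \ref{thm:Kernel_det_f} with $f=\underleftarrow{}$: the indicator function $1_{\underleftarrow{}}$ of any measurable function is a Markov kernel. I do not anticipate any real obstacle; all three items collapse to one- or two-line arguments because the substantive work was already carried out in Proposition \ref{prop:swap} (and, for part (3), in Theorem \ref{thm:Kernel_det_f}).
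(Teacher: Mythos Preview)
Your proposal is correct and follows essentially the same route as the paper: both derive part~(1) from the bijectivity of swap together with Proposition~\ref{prop:swap}, part~(2) from Proposition~\ref{prop:swap} (your observation that $\underleftarrow{}^{-1}(B)=\underleftarrow{B}$ via Lemma~\ref{lem:Swap} is exactly the content the paper compresses into ``follows from (1) and Proposition~\ref{prop:swap}''), and part~(3) by invoking Theorem~\ref{thm:Kernel_det_f} on the now-measurable swap. Your write-up is simply more explicit than the paper's one-line justifications.
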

\begin{proof}
\textbf{{[}1{]}} The swap function is a bijective function. \textbf{{[}2{]}}
Follows from (1) and proposition \ref{prop:swap}.\textbf{ {[}3{]}}
Follows from (2) and theorem \ref{thm:Kernel_det_f}.

Moreover, we can define kernels for deterministic methods that select
a group of individuals from the population (projections).\end{proof}
\begin{lem}
\label{lem:projection-function}(\textbf{projection}) Let $\mathcal{L}=\left\{ \left(\Sigma_{1},\varOmega_{1}\right),\left(\Sigma_{2},\varOmega_{2}\right),\ldots,\left(\Sigma_{n},\varOmega_{n}\right)\right\} $
be a finite ($n\in\mathbb{N}$) ordered list of measurable spaces
and $I=\left\{ k_{1},k_{2},\ldots,k_{m}\right\} \subseteq\left\{ 1,\ldots,n\right\} $
be a set of indices, i.e., $k_{i}<k_{i+1}$ and $m\leq n$. The function
$\pi_{I}$ defined as follows is $\bigotimes_{i=1}^{n}\Sigma_{i}-\bigotimes_{i=1}^{m}\Sigma_{k_{i}}$
measurable.

\[
\begin{array}{rccc}
\pi_{I}: & \prod_{i=1}^{n}\varOmega_{i} & \rightarrow & \prod_{i=1}^{m}\varOmega_{k_{i}}\\
 & \left(x_{1},\ldots,x_{n}\right) & \mapsto & \left(x_{k_{1}},\ldots,x_{k_{n}}\right)
\end{array}
\]

\end{lem}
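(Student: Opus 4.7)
The plan is to mimic the strategy already used in Remark~\ref{rem:MeasurableKernelPi} and Proposition~\ref{prop:swap}: exploit the Dynkin $\pi$-$\lambda$ theorem to reduce measurability on the full product $\sigma$-algebra to measurability on the generating $\pi$-system $\prod_{i=1}^{m}\Sigma_{k_{i}}$. Since $\bigotimes_{i=1}^{m}\Sigma_{k_{i}}=\sigma\left(\prod_{i=1}^{m}\Sigma_{k_{i}}\right)$ by definition, and $\prod_{i=1}^{m}\Sigma_{k_{i}}$ is a $\pi$-system by Lemma~\ref{lem:-sigma-product}, the work reduces to (i) verifying that preimages of ``rectangular'' sets $\prod_{i=1}^{m}B_{i}$ lie in $\bigotimes_{j=1}^{n}\Sigma_{j}$, and (ii) checking that the collection of ``good'' sets forms a $\lambda$-system.

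First, define the test family
\[
\mathcal{D}=\left\{ B\in\bigotimes_{i=1}^{m}\Sigma_{k_{i}}\mid\pi_{I}^{-1}\left(B\right)\in\bigotimes_{j=1}^{n}\Sigma_{j}\right\}.
\]
I would show $\prod_{i=1}^{m}\Sigma_{k_{i}}\subseteq\mathcal{D}$ by an explicit preimage calculation: for $B=\prod_{i=1}^{m}B_{i}$ with $B_{i}\in\Sigma_{k_{i}}$, one has $\pi_{I}^{-1}(B)=\prod_{j=1}^{n}A_{j}$, where $A_{j}=B_{i}$ whenever $j=k_{i}\in I$ and $A_{j}=\varOmega_{j}$ otherwise. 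Because each $A_{j}\in\Sigma_{j}$ (using $\varOmega_{j}\in\Sigma_{j}$ from $\sigma.1$), this preimage belongs to $\prod_{j=1}^{n}\Sigma_{j}\subseteq\bigotimes_{j=1}^{n}\Sigma_{j}$.

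Next, I would verify $\mathcal{D}$ is a $\lambda$-system, which is essentially bookkeeping since inverse images commute with the set-theoretic operations appearing in the $\lambda$ axioms. For $\lambda.1$, $\pi_{I}^{-1}\!\left(\prod_{i=1}^{m}\varOmega_{k_{i}}\right)=\prod_{j=1}^{n}\varOmega_{j}\in\bigotimes_{j=1}^{n}\Sigma_{j}$. For $\lambda.2$, if $A\subset B$ both lie in $\mathcal{D}$, then $\pi_{I}^{-1}(B\setminus A)=\pi_{I}^{-1}(B)\setminus\pi_{I}^{-1}(A)$, and closure under proper differences for the codomain $\sigma$-algebra (Lemma~\ref{lem:sigma-intersections}.4) closes the argument. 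For $\lambda.3$, if $\{A_{i}\}_{i\in I'}$ is a \noun{$\mbox{\textsc{cdf}}$} in $\mathcal{D}$, then $\pi_{I}^{-1}\!\left(\bigcup_{i\in I'}A_{i}\right)=\bigcup_{i\in I'}\pi_{I}^{-1}(A_{i})$ is again a \noun{$\mbox{\textsc{cdf}}$} in $\bigotimes_{j=1}^{n}\Sigma_{j}$ (disjointness is preserved under inverse images), hence lies in it by $\sigma.3$.

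With both ingredients in place, Dynkin's $\pi$-$\lambda$ theorem (Theorem~\ref{thm:Dynkin---theorem.}) applied to $\mathcal{E}=\prod_{i=1}^{m}\Sigma_{k_{i}}\subseteq\mathcal{D}$ yields $\bigotimes_{i=1}^{m}\Sigma_{k_{i}}=\sigma(\mathcal{E})\subseteq\mathcal{D}$, i.e., $\pi_{I}$ is $\bigotimes_{j=1}^{n}\Sigma_{j}-\bigotimes_{i=1}^{m}\Sigma_{k_{i}}$ measurable. I expect the only slightly delicate point to be the notational clarity of the preimage formula $\pi_{I}^{-1}\!\left(\prod_{i=1}^{m}B_{i}\right)=\prod_{j=1}^{n}A_{j}$ with the right indexing convention, since the list $I$ is a proper subset of $\{1,\dots,n\}$ and one must fill in the missing coordinates by the full spaces $\varOmega_{j}$; everything else is a routine $\pi$-$\lambda$ argument of the same shape as the swap proof.
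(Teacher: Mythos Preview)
Your proof is correct, but it takes a different route from the paper. The paper does not run a direct $\pi$--$\lambda$ argument on $\mathcal{D}$; instead it invokes the associativity and commutativity of the product $\sigma$-algebra (Proposition~\ref{prop:(associativity-of-sigma-algebra} and Corollary~\ref{cor:(commutativity-product-sigma-algebra)}) to rearrange the factors so that $\prod_{i=1}^{n}\varOmega_{i}\equiv\prod_{i=1}^{m}\varOmega_{k_{i}}\times\prod_{i=1}^{n-m}\varOmega_{l_{i}}$, reducing $\pi_{I}$ to the first-factor projection $(x,y)\mapsto x$. Then for any $A\in\bigotimes_{i=1}^{m}\Sigma_{k_{i}}$ (not just rectangles) one gets $\pi_{I}^{-1}(A)=A\times\prod_{i=1}^{n-m}\varOmega_{l_{i}}$ in a single stroke, which lies in the product $\sigma$-algebra by definition. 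Your approach is more self-contained---it does not depend on the associativity/commutativity machinery already built---at the cost of repeating essentially the same Dynkin bookkeeping that those earlier results encapsulate. The paper's approach is shorter precisely because it cashes in on that prior work.
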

\begin{proof}
Because $\prod_{i=1}^{n}\varOmega_{i}\equiv\prod_{i=1}^{m}\varOmega_{k_{i}}\times\prod_{i=1}^{n-m}\varOmega_{l_{i}}$
with $I^{c}=\left\{ l_{1},l_{2},\ldots,l_{n-m}\right\} $ the complement
set of indices of $I$ (by applying many times lemma \ref{lem:asso-comm-product}),
we can ``rewrite'' (under equivalences) $\pi_{I}$ as $\pi_{I}\left(x,y\right)=x$
with $y\in\prod_{i=1}^{n-m}\varOmega_{l_{i}}$ and $x\in\prod_{i=1}^{m}\varOmega_{k_{i}}$.
Now, $\bigotimes_{i=1}^{n}\Sigma_{i}\equiv\left(\bigotimes_{i=1}^{m}\Sigma_{k_{i}}\right)\bigotimes\left(\bigotimes_{i=1}^{n-m}\Sigma_{l_{i}}\right)$
(by applying many times proposition \ref{prop:(associativity-of-sigma-algebra}
and corollary \ref{cor:(commutativity-product-sigma-algebra)}). Thus,
for any $A\in\bigotimes_{i=1}^{m}\Sigma_{k_{i}}$ we have that $\pi_{I}^{-1}\left(A\right)=A\times\prod_{i=1}^{n-m}\varOmega_{l_{i}}$.
Clearly, $\pi_{I}^{-1}\left(A\right)\in\left(\bigotimes_{i=1}^{m}\Sigma_{k_{i}}\right)\bigotimes\left(\bigotimes_{i=1}^{n-m}\Sigma_{l_{i}}\right)$
(def. product $\sigma$-algebra), therefore, $\pi_{I}$ is measurable.\end{proof}
\begin{cor}
\label{cor:projection}The function $1_{\pi_{I}}$ as defined in theorem
\ref{thm:Kernel_det_f} is a kernel\footnote{We will use the ambiguous notation $1_{\pi_{I}}\equiv\pi_{I}$ in
the rest of this paper.}.\end{cor}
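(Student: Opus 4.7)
The plan is to obtain this corollary as an immediate consequence of the two results that immediately precede it, so essentially no new work is required. Theorem \ref{thm:Kernel_det_f} asserts that whenever $f : \varOmega_1 \to \varOmega_2$ is $\Sigma_1$-$\Sigma_2$ measurable, the associated indicator function $1_f$ is a (Markov) kernel from $(\varOmega_1,\Sigma_1)$ to $(\varOmega_2,\Sigma_2)$. Therefore, to conclude that $1_{\pi_I}$ is a kernel between $\bigl(\prod_{i=1}^{n}\varOmega_i,\bigotimes_{i=1}^{n}\Sigma_i\bigr)$ and $\bigl(\prod_{i=1}^{m}\varOmega_{k_i},\bigotimes_{i=1}^{m}\Sigma_{k_i}\bigr)$, it suffices to verify that $\pi_I$ itself is $\bigotimes_{i=1}^{n}\Sigma_i$-$\bigotimes_{i=1}^{m}\Sigma_{k_i}$ measurable.

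But measurability of $\pi_I$ is exactly the content of Lemma \ref{lem:projection-function}, which was just proved. Thus the proof reduces to a single invocation: apply Lemma \ref{lem:projection-function} to obtain measurability of $\pi_I$, then feed this into Theorem \ref{thm:Kernel_det_f} to obtain that $1_{\pi_I}$ is a kernel.

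There is no real obstacle here; all the substantive content (the reshuffling of factors via associativity and commutativity of the product $\sigma$-algebra to write $\pi_I^{-1}(A) = A \times \prod_{i=1}^{n-m}\varOmega_{l_i}$, and the verification of $K.1$ and $K.2$ for a deterministic indicator) has been done in the preceding lemma and theorem respectively. The corollary is therefore just packaging: it records the observation that the projection, being measurable, gives rise through the construction of Theorem \ref{thm:Kernel_det_f} to a deterministic kernel that selects a specified subset of coordinates, which justifies the abuse of notation $1_{\pi_I}\equiv\pi_I$ used in the remainder of the paper.
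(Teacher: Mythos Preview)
Your proposal is correct and takes exactly the same approach as the paper: invoke Lemma~\ref{lem:projection-function} for the measurability of $\pi_I$ and then apply Theorem~\ref{thm:Kernel_det_f} to conclude that $1_{\pi_I}$ is a kernel. The paper's own proof is the one-line statement ``Follows from lemma~\ref{lem:projection-function} and theorem~\ref{thm:Kernel_det_f},'' so your version simply spells out what those two citations accomplish.
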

\begin{proof}
Follows from lemma \ref{lem:projection-function} and theorem \ref{thm:Kernel_det_f}.
\end{proof}
Finally, we are able to define a kernel for a stochastic method that
is the join of several stochastic methods (methods that generate a
subpopulation of the next population). 
\begin{thm}
\label{thm:(Finite-product-probability)}(\textbf{product probability
measure}) Let $\left\{ \left(\varOmega_{i},\Sigma_{i},\mu_{i}\right)\mid i=1,\ldots,n\right\} $
an ordered list of $n\in\mathbb{N}$ probability spaces. There exist
a unique probability measure $\mu:\bigotimes{}_{i=1}^{n}\Sigma_{i}\rightarrow\mathbb{\overline{R}}$
such that $\mu\left(\prod_{i=1}^{n}A_{i}\right)=\prod_{i=1}^{n}\mu\left(A_{i}\right)$
for all $A_{i}\in\Sigma_{i}$, $i=1,2,\ldots,n$. In this case $\mu$
is called the product probability measure of the $\mu_{i}$ probability
measures and is denotated $\bigotimes{}_{i=1}^{n}\mu_{i}$.\end{thm}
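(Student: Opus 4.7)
The plan is to prove both existence and uniqueness by reducing to the case $n=2$ and then leveraging the associativity of the product $\sigma$-algebra (Proposition \ref{prop:(associativity-of-sigma-algebra}). First I would dispose of uniqueness: the generalized family product $\prod_{i=1}^{n}\Sigma_{i}$ is a $\pi$-system (Lemma \ref{lem:-sigma-product}) that generates $\bigotimes_{i=1}^{n}\Sigma_{i}$, and any two probability measures coinciding on this $\pi$-system must coincide on the generated $\sigma$-algebra. To see this, fix two candidate measures $\mu,\nu$ and consider the class $\mathcal{D}=\left\{A\in\bigotimes_{i=1}^{n}\Sigma_{i}\mid\mu(A)=\nu(A)\right\}$; a routine verification shows $\mathcal{D}$ is a $\lambda$-system (it contains $\prod_{i=1}^{n}\varOmega_{i}$ since both measures assign it value $1$, it is closed under proper differences because $\mu$ and $\nu$ are finite, and it is closed under countable disjoint unions by $\sigma$-additivity). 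The Dynkin $\pi$-$\lambda$ theorem (Theorem \ref{thm:Dynkin---theorem.}) then yields $\mathcal{D}=\bigotimes_{i=1}^{n}\Sigma_{i}$.

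For existence, I would induct on $n$. The base case $n=1$ is immediate with $\mu=\mu_{1}$. The bulk of the work is the case $n=2$, which is the classical product measure construction: define the pre-measure $\mu_{0}(A_{1}\times A_{2})=\mu_{1}(A_{1})\mu_{2}(A_{2})$ on measurable rectangles and extend additively to the algebra $\mathcal{R}$ of finite disjoint unions of rectangles. One then verifies $\sigma$-additivity of $\mu_{0}$ on $\mathcal{R}$ by showing that if a rectangle is decomposed as a countable disjoint union of rectangles then the $\mu_{0}$-values add correctly, which is handled via monotone convergence applied to indicator integrals of the form $\int\mathbf{1}_{A_{1}}(x)\mu_{2}(A_{2}(x))\,d\mu_{1}(x)$ where $A_{2}(x)$ denotes the $x$-section. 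Finally, one invokes Caratheodory's extension theorem to lift $\mu_{0}$ to a probability measure on $\sigma(\mathcal{R})=\Sigma_{1}\otimes\Sigma_{2}$ agreeing with $\mu_{0}$ on rectangles.

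The inductive step from $n$ to $n+1$ then uses Proposition \ref{prop:(associativity-of-sigma-algebra}: by the induction hypothesis there is a product probability measure $\nu_{n}$ on $\left(\prod_{i=1}^{n}\varOmega_{i},\bigotimes_{i=1}^{n}\Sigma_{i}\right)$, and applying the $n=2$ result to this space together with $(\varOmega_{n+1},\Sigma_{n+1},\mu_{n+1})$ yields a probability measure on $\left(\bigotimes_{i=1}^{n}\Sigma_{i}\right)\otimes\Sigma_{n+1}$, which is the same as $\bigotimes_{i=1}^{n+1}\Sigma_{i}$ by associativity. On a rectangle $\prod_{i=1}^{n+1}A_{i}$ the value unfolds to $\nu_{n}\!\left(\prod_{i=1}^{n}A_{i}\right)\cdot\mu_{n+1}(A_{n+1})=\prod_{i=1}^{n+1}\mu_{i}(A_{i})$, as required.

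The main obstacle I anticipate is the $\sigma$-additivity verification for the pre-measure $\mu_{0}$ on the rectangle algebra in the $n=2$ case, since the paper has not, up to this point, introduced Caratheodory's extension theorem nor the Fubini-type section machinery that typically underpins such an argument. The cleanest route is to cite these standard results (in the same spirit as the citation of Kenkle's book for Theorem \ref{thm:Dynkin---theorem.}); alternatively, one can carry out the countable-additivity argument explicitly by a monotone class / continuity-from-below computation on $x$-sections, exploiting the fact that both $\mu_{1}$ and $\mu_{2}$ are finite so no $\sigma$-finiteness hypotheses are needed and all interchanges of limits are valid.
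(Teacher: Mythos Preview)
Your argument is correct and is essentially the standard textbook construction. The paper, however, does not prove this theorem at all: it simply cites Theorem~14.14 in Klenke's book \cite{Kenkle14} and remarks that since each $\mu_{i}$ is a probability measure it is finite, hence $\sigma$-finite, so the hypotheses of that theorem are met. In other words, the paper's ``proof'' is a one-line reduction to a textbook reference, whereas you have sketched the actual content of that reference (uniqueness via the $\pi$--$\lambda$ theorem, existence for $n=2$ via Carath\'eodory, and induction using associativity of the product $\sigma$-algebra). You even anticipated this option when you wrote that ``the cleanest route is to cite these standard results''; that is exactly what the paper does. Your write-up buys self-containment and makes explicit use of Lemma~\ref{lem:-sigma-product}, Proposition~\ref{prop:(associativity-of-sigma-algebra}, and Theorem~\ref{thm:Dynkin---theorem.}, which is pedagogically nicer, while the paper's approach keeps the exposition short since the result is not the focus of the work.
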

\begin{proof}
This theorem is the version of theorem 14.14 in page 277 of the book
of Kenkle \cite{Kenkle14}, when considering $\Sigma_{i}$ not just
a ring but a sigma algebra. In this case, any probability measure
is a finite measure and any finite measure is $\sigma$-finite measure
($\varOmega_{i}\in\Sigma_{i}$).\end{proof}
\begin{thm}
\label{thm:(Join-kernel)}(\textbf{join-kernel}) Let $\left(\varOmega',\Sigma'\right)$
be a measurable space and$\left\{ \left(\varOmega_{i},\Sigma_{i}\right)\right\} $
and $\left\{ K_{i}:\varOmega'\times\Sigma_{i}\rightarrow\left[0,1\right]\right\} $
be ordered lists of $n\in\mathbb{N}$ measurable spaces and kernels,
respectively. The following function is a kernel.

\[
\begin{array}{rccl}
\circledast K\vcentcolon & \varOmega'\times\left(\bigotimes_{i=1}^{n}\Sigma_{i}\right) & \longrightarrow & \left[0,1\right]\\
 & \left(x,A\right) & \longmapsto & \bigotimes_{i=1}^{n}K_{i\left(x,\bullet\right)}\left(A\right)
\end{array}
\]

\end{thm}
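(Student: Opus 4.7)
The plan is to verify the two kernel conditions $K.1$ and $K.2$ separately, using Theorem~\ref{thm:(Finite-product-probability)} for the first and the Dynkin-style reduction of Remark~\ref{rem:MeasurableKernelPi} for the second.

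For $K.1$, I would fix $x \in \varOmega'$ and note that, since each $K_i$ is a kernel, the map $A_i \mapsto K_i(x, A_i) = K_{i(x,\bullet)}(A_i)$ is a probability measure on $(\varOmega_i, \Sigma_i)$. Theorem~\ref{thm:(Finite-product-probability)} then produces a unique product probability measure $\bigotimes_{i=1}^{n} K_{i(x,\bullet)}$ on $\bigotimes_{i=1}^{n}\Sigma_i$ satisfying $\bigotimes_{i=1}^{n} K_{i(x,\bullet)}\bigl(\prod_{i=1}^{n}A_i\bigr) = \prod_{i=1}^{n} K_i(x, A_i)$ on rectangles. This is exactly $\circledast K(x,\bullet)$, so it is well-defined and a probability measure for each fixed $x$.

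For $K.2$, I fix $A \in \bigotimes_{i=1}^{n}\Sigma_i$ and must check that $x \mapsto \circledast K(x,A)$ is $\Sigma'$-measurable. The natural generator is $\mathcal{E} = \prod_{i=1}^{n}\Sigma_i$, which is a $\pi$-system by Lemma~\ref{lem:-sigma-product}, contains the full space $\prod_{i=1}^{n}\varOmega_i$ (since $\varOmega_i \in \Sigma_i$ for every $i$), and generates $\bigotimes_{i=1}^{n}\Sigma_i$ by Definition~\ref{def:SigmaProduct}. By Remark~\ref{rem:MeasurableKernelPi}, it therefore suffices to verify measurability on rectangles $A = \prod_{i=1}^{n}A_i$ with $A_i \in \Sigma_i$. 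On such a rectangle the defining property of the product measure yields $\circledast K(x,A) = \prod_{i=1}^{n} K_i(x, A_i)$, and each factor $x \mapsto K_i(x, A_i)$ is measurable by $K.2$ for $K_i$. Since a finite product of measurable real-valued functions is measurable, so is $x \mapsto \circledast K(x,A)$ on rectangles, and the remark extends this to all of $\bigotimes_{i=1}^{n}\Sigma_i$.

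The only delicate point I would flag is ensuring the logical order of quantifiers in the two halves: Theorem~\ref{thm:(Finite-product-probability)} constructs a \emph{separate} product probability measure for each $x \in \varOmega'$, so $\circledast K$ is a priori just a function of $(x,A)$ without any joint measurability. The measurability in $x$ is exactly what must be extracted by hand, and the $\pi$-$\lambda$ reduction is what makes this tractable: it lets us replace the abstract $A$ by a concrete rectangle where the product structure trivializes to a product of scalars. That hand-off between ``product of measures'' and ``product of numbers'' is the substantive step; everything else is bookkeeping.
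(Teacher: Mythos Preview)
Your proposal is correct and follows essentially the same route as the paper's proof: Theorem~\ref{thm:(Finite-product-probability)} for $K.1$, then Remark~\ref{rem:MeasurableKernelPi} with the $\pi$-system $\prod_{i=1}^{n}\Sigma_i$ (Lemma~\ref{lem:-sigma-product}) to reduce $K.2$ to rectangles, where the product measure becomes a numerical product of the measurable factors $K_{i(\bullet,A_i)}$. Your explicit check that $\prod_{i=1}^{n}\varOmega_i \in \mathcal{E}$ (needed for the hypothesis of the remark) is a detail the paper leaves implicit, but otherwise the arguments coincide.
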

\begin{proof}
\textbf{{[}well-defined} \textbf{and }$K.1$\textbf{{]}} Let $x\in\varOmega'$,
since $K_{i\left(x,\bullet\right)}$ is a probability measure for
all $i=1,2,\ldots,n$ ($K.1$ for $K_{i}$ kernel) then $\circledast K_{x,\bullet}=\bigotimes_{i=1}^{n}K_{i\left(x,\bullet\right)}$
is a probability measure $\circledast K_{x,\bullet}\colon\bigotimes_{i=1}^{n}\Sigma_{i}\rightarrow\left[0,1\right]$
(theorem \ref{thm:(Finite-product-probability)}), thus its is well
defined for any $A\in\bigotimes_{i=1}^{n}\Sigma_{i}$. \textbf{{[}$K.2${]}}
Using remark \ref{rem:MeasurableKernelPi}, we just need to prove
that $\circledast K_{\bullet,A}$ is measurable for any $A\in\mathcal{E}$
with $\mathcal{E}\subseteq2^{\prod\varOmega_{i}}$ a $\pi$-system
that generates $\bigotimes_{i=1}^{n}\Sigma_{i}$. Because $\prod_{i=1}^{n}\Sigma_{i}$
is a $\pi$-system (lemma \ref{lem:-sigma-product}) and $\bigotimes_{i=1}^{n}\Sigma_{i}$
is the $\sigma$-algebra generated by $\prod_{i=1}^{n}\Sigma_{i}$
(def. \ref{def:SigmaProduct}), then we just need to prove that $\circledast K_{\bullet,A}$
is measurable for any $A=\prod_{i=1}^{n}A_{i}$ with $A_{i}\in\Sigma_{i}$
for all $i=1,2,\ldots,n$. By definition, $\circledast K\left(x,\prod_{i=1}^{n}A_{i}\right)=\bigotimes_{i=1}^{n}K_{i\left(x,\bullet\right)}\left(\prod_{i=1}^{n}A_{i}\right)$
and according to theorem \ref{thm:(Finite-product-probability)},
$K\left(x,\prod_{i=1}^{n}A_{i}\right)=\prod_{i=1}^{n}K_{i\left(x,\bullet\right)}\left(A_{i}\right)$
($A_{i}\in\Sigma_{i}$). Clearly, $\circledast K_{\left(\bullet,\prod_{i=1}^{n}A_{i}\right)}\left(x\right)=\prod_{i=1}^{n}K_{i\left(\bullet,A_{i}\right)}\left(x\right)$.
Now, $\circledast K_{i\left(\bullet,A_{i}\right)}$ is a measurable
function for all $i=1,2,\ldots,n$ ($K.2$ for $K_{i}$ kernel) then
their product is a measurable function (see Theorem 1.91, page 37
in Kenkle's book \cite{Kenkle14}). Therefore, $\circledast K_{\left(\bullet,\prod_{i=1}^{n}A_{i}\right)}$
is a measurable function.\end{proof}
\begin{cor}
\label{cor:(Join-kernel)}If $\left(\varOmega,\Sigma\right)$ is a
measurable space and $\left\{ n_{i}\in\mathbb{N}^{+}\right\} _{i=1,2,\ldots,m}$
such that $\varOmega_{i}=\varOmega^{n_{i}}$ and $\Sigma_{i}=\Sigma^{\varotimes n_{i}}$
for all $i=1,2,\ldots,m$ in theorem \ref{thm:(Join-kernel)}, then
$\circledast K\vcentcolon\varOmega'\times\Sigma^{\varotimes n}\rightarrow\left[0,1\right]$
, with $n=\sum_{i=1}^{m}n_{i}$, is a kernel.\end{cor}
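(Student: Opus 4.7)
The plan is to view this corollary as a direct specialization of Theorem \ref{thm:(Join-kernel)} combined with the associativity of the generalized cartesian product and of the product $\sigma$-algebra. That is, I will instantiate the join-kernel theorem with the specific measurable spaces $(\varOmega^{n_i},\Sigma^{\otimes n_i})$ and then simply rewrite the target measurable space into its flattened form $(\varOmega^{n},\Sigma^{\otimes n})$.

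First I would apply Theorem \ref{thm:(Join-kernel)} verbatim to the ordered list of measurable spaces $\{(\varOmega_i,\Sigma_i)\}_{i=1}^{m}=\{(\varOmega^{n_i},\Sigma^{\otimes n_i})\}_{i=1}^{m}$, together with the given kernels $K_i\colon \varOmega'\times\Sigma^{\otimes n_i}\rightarrow[0,1]$. This yields, with no extra work, that
\[
\circledast K\colon \varOmega'\times\Bigl(\bigotimes_{i=1}^{m}\Sigma^{\otimes n_i}\Bigr)\longrightarrow[0,1],\qquad (x,A)\longmapsto \bigotimes_{i=1}^{m}K_{i(x,\bullet)}(A)
\]
is a Markov kernel, the underlying set being $\prod_{i=1}^{m}\varOmega^{n_i}$.

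Next I would rewrite the domain. By the corollary to Lemma \ref{lem:asso-comm-product} we have $\prod_{i=1}^{m}\varOmega^{n_i}\equiv\varOmega^{n}$ with $n=\sum_{i=1}^{m}n_i$, and by the corollary to Proposition \ref{prop:(associativity-of-sigma-algebra} we likewise have $\bigotimes_{i=1}^{m}(\bigotimes_{k=1}^{n_i}\Sigma)\equiv\bigotimes_{i=1}^{n}\Sigma\equiv\Sigma^{\otimes n}$. Substituting these equivalences into the conclusion of the preceding step transforms the kernel into one of signature $\circledast K\colon\varOmega'\times\Sigma^{\otimes n}\rightarrow[0,1]$, which is exactly the statement.

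There is really no hard part here: the entire content is bookkeeping of identifications. The only point to take care with is to make clear that the two equivalences (for the underlying sets and for the product $\sigma$-algebras) are compatible, so that the kernel inherited from Theorem \ref{thm:(Join-kernel)} on the ``nested'' product space is literally the same set function once we pass to the ``flat'' product space; this is guaranteed because the equivalence maps used in Lemma \ref{lem:asso-comm-product} and Proposition \ref{prop:(associativity-of-sigma-algebra} are the canonical bijections that respect the generating rectangles, so both conditions $K.1$ and $K.2$ transfer unchanged.
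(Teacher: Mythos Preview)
Your proposal is correct and matches the paper's intent: the paper states this corollary without proof, treating it as an immediate specialization of Theorem~\ref{thm:(Join-kernel)} together with the associativity results for cartesian products (corollary to Lemma~\ref{lem:asso-comm-product}) and for product $\sigma$-algebras (corollary to Proposition~\ref{prop:(associativity-of-sigma-algebra}). Your write-up simply makes these implicit identifications explicit, which is exactly what is needed.
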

\begin{prop}
\label{prop:(permutation)}(\textbf{permutation}) Let $\left(\Sigma,\varOmega\right)$
be a measurable space and $I=\left[i_{1},i_{2},\ldots,i_{n}\right]$
be a fixed permutation of the set $\left\{ 1,2,\ldots,n\right\} $
then the function $K_{I}\vcentcolon\varOmega^{n}\times\Sigma^{\varotimes n}\rightarrow\left[0,1\right]$
defined as $K_{I}=\circledast_{k=1}^{n}\pi_{i_{k}}$ is a kernel.\end{prop}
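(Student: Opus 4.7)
The plan is to exhibit $K_I$ as a direct instance of the join-kernel construction of Corollary \ref{cor:(Join-kernel)}, applied to $n$ deterministic single-coordinate projection kernels. Essentially nothing new needs to be proved; the work is in matching the hypotheses.

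First, I would unpack the notation $\pi_{i_k}$ appearing in the statement as the deterministic kernel $1_{\pi_{\{i_k\}}}$ induced, via Theorem \ref{thm:Kernel_det_f}, by the single-coordinate projection $\pi_{\{i_k\}}\colon \varOmega^{n}\to\varOmega$. For each $k=1,\ldots,n$, Lemma \ref{lem:projection-function}, specialized to the singleton index set $\{i_k\}\subseteq\{1,\ldots,n\}$, yields that $\pi_{\{i_k\}}$ is $\Sigma^{\otimes n}-\Sigma$ measurable, and then Corollary \ref{cor:projection} upgrades this to the statement that $1_{\pi_{\{i_k\}}}\colon \varOmega^{n}\times\Sigma\to[0,1]$ is a kernel. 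This supplies an ordered list of $n$ kernels, all sharing the common source measurable space $(\varOmega^{n},\Sigma^{\otimes n})$ and individual targets $(\varOmega,\Sigma)$.

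Second, I would invoke Corollary \ref{cor:(Join-kernel)} with $\varOmega' := \varOmega^{n}$, ambient $\sigma$-algebra $\Sigma^{\otimes n}$, and $n_i := 1$ for every $i=1,\ldots,n$, so that each $\varOmega_i = \varOmega$ and $\Sigma_i = \Sigma$, and the total dimension of the product target is $\sum_{i=1}^{n}n_i = n$. The corollary then asserts that the join $\circledast K\colon \varOmega^{n}\times\Sigma^{\otimes n}\to[0,1]$ of these $n$ projection kernels is itself a kernel. Since by construction this $\circledast K$ coincides with $K_I = \circledast_{k=1}^{n}\pi_{i_k}$ under the overloaded notation $1_{\pi_{i_k}}\equiv\pi_{i_k}$, the proposition follows.

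There is no real obstacle here: both ingredients (measurability of coordinate projections and preservation of the kernel property under joins) are already in place, and the permutation hypothesis on $I$ is used only to ensure the family of indices is well-defined — the argument would carry through verbatim for any sequence of indices drawn from $\{1,\ldots,n\}$. The only mildly delicate point is the notational book-keeping around the overloading $\pi_{i_k}\equiv 1_{\pi_{i_k}}$ and the identification of the target $\sigma$-algebra $\bigotimes_{k=1}^{n}\Sigma = \Sigma^{\otimes n}$, which I would make explicit so that the hypotheses of Corollary \ref{cor:(Join-kernel)} align cleanly with the family of deterministic projection kernels supplied.
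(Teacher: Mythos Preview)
Your proposal is correct and follows essentially the same route as the paper: the paper's own proof is the one-liner ``Follows from corollaries \ref{cor:(Join-kernel)} and \ref{cor:projection},'' which is exactly your two-step plan of first obtaining each single-coordinate projection as a deterministic kernel and then applying the join-kernel corollary with all $n_i=1$. Your additional remarks on the overloaded notation and on the permutation hypothesis being inessential are accurate elaborations, not departures.
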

\begin{proof}
Follows from corollaries \ref{cor:(Join-kernel)} and \ref{cor:projection}. \end{proof}
\begin{cor}
\label{cor:permutation} Let $\left(\Sigma,\varOmega\right)$ be a
measurable space and $\mathscr{P}$ be the set of permutations of
set $\left\{ 1,2,\ldots,n\right\} $. Function $K_{\mathscr{P}}\vcentcolon\varOmega^{n}\times\Sigma^{\varotimes n}\rightarrow\left[0,1\right]$
defined as $K_{\mathscr{P}}=\frac{1}{\left|\mathscr{P}\right|}{\displaystyle \sum_{I\in\mathscr{P}}}\pi_{I}$
is a kernel.\end{cor}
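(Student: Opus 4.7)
The plan is to recognize $K_{\mathscr{P}}$ as an instance of the mixing construction from Definition \ref{def:(mixing)}, applied to the finite family $\{K_I\}_{I \in \mathscr{P}}$ of permutation kernels that Proposition \ref{prop:(permutation)} already certifies as kernels, each weighted uniformly by $p_I = 1/|\mathscr{P}|$. Since $|\mathscr{P}| = n!$ is finite and $\sum_{I \in \mathscr{P}} p_I = 1$, this is a legitimate convex combination, so the corollary reduces to verifying that a finite convex combination of kernels satisfies $K.1$ and $K.2$ of Definition \ref{def:Kernel}.

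For $K.1$, I would fix $x \in \varOmega^n$ and examine $K_{\mathscr{P}(x,\bullet)} = \frac{1}{|\mathscr{P}|} \sum_{I \in \mathscr{P}} K_{I(x,\bullet)}$. Each $K_{I(x,\bullet)}$ is a probability measure on $(\varOmega^n, \Sigma^{\otimes n})$ by Proposition \ref{prop:(permutation)}. Nullity ($\mu.1$) and non-negativity ($\mu.2$) are inherited termwise under a positive convex combination. For $\sigma$-additivity ($\mu.3$) on a $\mbox{\textsc{cdf}}$ $\{A_j\}_{j \in J}$, I would interchange the outer finite sum over $I$ with the countable sum in $j$; this is unconditionally legal because the outer index set is finite and the summands non-negative. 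Finally, $K_{\mathscr{P}}(x, \varOmega^n) = \frac{1}{|\mathscr{P}|} \cdot |\mathscr{P}| \cdot 1 = 1$, giving total mass one.

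For $K.2$, I would fix $A \in \Sigma^{\otimes n}$. Each $K_{I(\bullet, A)} \colon \varOmega^n \to [0,1]$ is measurable by $K.2$ applied to the kernel $K_I$. Measurable real-valued functions form a vector space closed under finite linear combinations, so $K_{\mathscr{P}(\bullet, A)} = \frac{1}{|\mathscr{P}|} \sum_{I \in \mathscr{P}} K_{I(\bullet, A)}$ is itself measurable.

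The main obstacle, such as it is, is conceptual rather than technical: the argument is essentially a recognition that $K_{\mathscr{P}}$ instantiates the mixing scheme of Subsection \ref{sub:Random-Scan-(Mixing)} with uniform weights, combined with the standard closure of probability measures and measurable functions under finite convex combinations. In contrast with the join-kernel proof (Theorem \ref{thm:(Join-kernel)}), no appeal to the Dynkin $\pi$-$\lambda$ theorem or to Fubini is required here, precisely because $\mathscr{P}$ is finite and both sums inside $K.1$ and $K.2$ can be manipulated termwise.
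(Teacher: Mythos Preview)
Your argument is correct and follows essentially the same route as the paper: recognize $K_{\mathscr{P}}$ as a mixing (Subsection~\ref{sub:Random-Scan-(Mixing)}) of the permutation kernels $K_I$ supplied by Proposition~\ref{prop:(permutation)}, with uniform weights $1/|\mathscr{P}|$. The paper's proof is a one-line citation of those two ingredients; you have simply spelled out the verification of $K.1$ and $K.2$ for the convex combination, which the paper leaves implicit.
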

\begin{proof}
$K_{\mathscr{P}}$ is a mixing update mechanisms of $\left|\mathscr{P}\right|$
kernels (subsection \ref{sub:Random-Scan-(Mixing)} and proposition
\ref{cor:permutation}).
\end{proof}

\section{Characterization of a\noun{ SGoal} using Probability Theory}

Following the description of a \noun{SGoal} (see Algorithm \ref{Alg:SGoal}),
the initial population $P_{0}$ is chosen according to some initial
distribution $p\left(\cdot\right)$ and the population $P_{t}$ at
step $t>0$ is generated using a stochastic method \noun{(NextPop)}
on the previous population $P_{t-1}$. If such \noun{NextPop} method
can be characterized by a Markov kernel, the stochastic sequence $\left(P_{t}\vcentcolon t\geq0\right)$
becomes a Markov chain. In order to develop this characterization,
first we define appropiated measurable spaces and Markov kernels of
stochastic methods, and then we define some properties of stochastic
methods that cover many popular \noun{SGoal}s reported in the literature.

Since a \noun{SGoal} consists of a population of $n$ individuals
on the feasible region $\varOmega$, it is clear that the state space
is defined on $\varOmega^{n}$. Moreover, the initial population $P_{0}\in\varOmega^{n}$
is chosen according to some initial distribution $p\left(\cdot\right)$.
Now, the $\sigma$-algebra must allow us to determine convergence
properties on the kernel. In this paper, we will extend the convergence
approach proposed by Günter Rudolph in \cite{Rudolph96convergenceof}
to \noun{SGoal}s. In the following, we call \textbf{\emph{objective
function}} to a function $f\vcentcolon\varPhi\rightarrow\mathbb{R}$
if its has an optimal value (denoted as $f^{*}\in\mathbb{R}$) in
the feasible region.

\subsection{$\epsilon$-optimal states}

We define and study the set of strict $\epsilon$-optimal states (the
optimal elements according to Rudolph's notation), i.e., a set that
includes any candidate population which best individual has a value
of the objective function close (less than $\epsilon\in\mathbb{R}^{+}$)
to the optimum objective function value. We also introduce two new
natural definitions that we will use in some proofs (the $\epsilon$-optimal
states and $\epsilon$-states) and study some properties of sets defined
upon these concepts.
\begin{defn}
\label{def:(epsilon-optimal-elements)}Let $\varOmega\subseteq\varPhi$
be a set, $f\vcentcolon\varPhi\rightarrow\mathbb{R}$ be an objective
function, $\epsilon>0$ be a real number and $x\in\varOmega^{m}$.
\begin{enumerate}
\item (\textbf{optimality}) $d\left(x\right)=f\left(\mbox{\textsc{Best}}\left(x\right)\right)-f^{*}$
(here $f^{*}$ is the optimal value of $f$ in $\varOmega$).
\item (\textbf{strict }$\epsilon$\textbf{-optimum state}) $x$ is an strict
$\epsilon$-optimum element if $d\left(x\right)<\epsilon$, 
\item ($\epsilon$\textbf{-optimum state}) $x$ is an $\epsilon$-optimum
element if $d\left(x\right)\leq\epsilon$, and
\item ($\epsilon$\textbf{-state}) $x$ is an $\epsilon$-element if $d\left(x\right)=\epsilon$.
\end{enumerate}
\end{defn}
Sets $\varOmega_{\epsilon}^{m}=\left\{ x\in\varOmega^{m}\vcentcolon d\left(x\right)<\epsilon\right\} $,
$\varOmega_{\overline{\epsilon}}^{m}=\left\{ x\in\varOmega^{m}\vcentcolon d\left(x\right)\leq\epsilon\right\} $,
and $\varOmega_{\mathring{\epsilon}}^{m}=\left\{ x\in\varOmega^{m}\vcentcolon d\left(x\right)=\epsilon\right\} $
are called set of strict $\epsilon$-optimal states, $\epsilon$-optimal
states, and $\epsilon$-states, respectively. We will denotate $\varOmega_{\epsilon}=\varOmega_{\epsilon}^{1}$,
$\varOmega_{\overline{\epsilon}}=\varOmega_{\overline{\epsilon}}^{1}$
and $\varOmega_{\mathring{\epsilon}}=\varOmega_{\mathring{\epsilon}}^{1}$
.
\begin{rem}
\label{rem:complements-optimal-sets}Notice that
\begin{enumerate}
\item $\left(\varOmega_{\epsilon}^{m}\right)^{c}=\left\{ x\in\varOmega^{m}\vcentcolon\epsilon\leq d\left(x\right)\right\} $
and
\item $\left(\varOmega_{\overline{\epsilon}}^{m}\right)^{c}=\left\{ x\in\varOmega^{m}\vcentcolon\epsilon<d\left(x\right)\right\} $.
\end{enumerate}
\end{rem}
\begin{lem}
\label{lem:optimal-states}$\varOmega_{\overline{\epsilon}}^{m}={\displaystyle \bigcap_{n=1}^{\infty}}\varOmega_{\epsilon+\frac{1}{n}}^{m}$
for all $\epsilon>0$ and $m\in\mathbb{N}^{+}$.\end{lem}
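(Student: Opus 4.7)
The plan is to prove the two inclusions separately, using only the definitions of the sets and a standard fact about the real numbers, namely that if $a < b + 1/n$ for all $n \in \mathbb{N}^{+}$, then $a \leq b$.

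For the inclusion $\varOmega_{\overline{\epsilon}}^{m} \subseteq \bigcap_{n=1}^{\infty} \varOmega_{\epsilon + \frac{1}{n}}^{m}$, I would pick $x \in \varOmega_{\overline{\epsilon}}^{m}$, so that $d(x) \leq \epsilon$ by the definition of strict $\epsilon$-optimal states. Since $1/n > 0$ for every $n \in \mathbb{N}^{+}$, we immediately have $d(x) \leq \epsilon < \epsilon + 1/n$, hence $x \in \varOmega_{\epsilon + \frac{1}{n}}^{m}$ for every $n$. Taking the intersection yields $x \in \bigcap_{n=1}^{\infty} \varOmega_{\epsilon + \frac{1}{n}}^{m}$.

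For the reverse inclusion $\bigcap_{n=1}^{\infty} \varOmega_{\epsilon + \frac{1}{n}}^{m} \subseteq \varOmega_{\overline{\epsilon}}^{m}$, I would pick $x$ in the intersection, so that $d(x) < \epsilon + 1/n$ for every $n \in \mathbb{N}^{+}$. By the Archimedean property (equivalently, the fact recalled above), this forces $d(x) \leq \epsilon$: if we had $d(x) > \epsilon$, then choosing $n$ with $1/n < d(x) - \epsilon$ would contradict $d(x) < \epsilon + 1/n$. Hence $x \in \varOmega_{\overline{\epsilon}}^{m}$.

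The argument is entirely elementary; there is no real obstacle, only the mild care needed in the $\supseteq$ direction to push a strict inequality to a non-strict one in the limit. The result will be used later as a routine tool to express $\epsilon$-optimal state sets as countable intersections of strict $\epsilon$-optimal state sets, which is useful when checking measurability (via the closure of the relevant $\sigma$-algebra under countable intersections, Lemma \ref{lem:sigma-intersections}.2).
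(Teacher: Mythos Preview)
Your proof is correct and follows essentially the same two-inclusion argument as the paper, including the use of the Archimedean property in the $\supseteq$ direction via contradiction. One small terminological slip: when you write ``by the definition of strict $\epsilon$-optimal states'' for $\varOmega_{\overline{\epsilon}}^{m}$, you mean the (non-strict) $\epsilon$-optimal states; the inequality $d(x)\leq\epsilon$ you use is the right one.
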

\begin{proof}
\textbf{{[}$\subseteq${]}} If $x\in\varOmega_{\overline{\epsilon}}^{m}$
then $d\left(x\right)\leq\epsilon$ (def $\varOmega_{\overline{\epsilon}}^{m}$).
Now, $\epsilon<\epsilon+\frac{1}{n}$ for all $n>0$, clearly $d\left(x\right)<\epsilon+\frac{1}{n}$
for all $n>0$. Therefore, $x\in\varOmega_{\epsilon+\frac{1}{n}}^{m}$
for all $n>0$ so $x\in{\displaystyle \bigcap_{n=1}^{\infty}}\varOmega_{\epsilon+\frac{1}{n}}^{m}$.
\textbf{{[}}$\supseteq$\textbf{{]}} Let $x\in{\displaystyle \bigcap_{n=1}^{\infty}}\varOmega_{\epsilon+\frac{1}{n}}^{m}$,
if $x\notin\varOmega_{\overline{\epsilon}}^{m}$ then $d\left(x\right)>\epsilon$
(def $\varOmega_{\overline{\epsilon}}^{m}$), therefore, $\exists\delta>0$
such that $d\left(x\right)=\epsilon+\delta$. We have that $\exists n\in\mathbb{N}$
such that $0<\frac{1}{n}<\delta$ (Archims theorem and real numbers
dense theorem). Clearly, $\epsilon+\frac{1}{n}<\epsilon+\delta=d\left(x\right)$,
so $x\notin\varOmega_{\epsilon+\frac{1}{n}}^{m}$, then $x\notin{\displaystyle \bigcap_{n=1}^{\infty}}\varOmega_{\epsilon+\frac{1}{n}}^{m}$
(contradiction). Therefore, $x\in\varOmega_{\overline{\epsilon}}^{m}$.
\end{proof}

\subsection{Optimization space}

We define the optimization $\sigma$-algebra property (a $\sigma$-algebra
containing the family of sets of strict $\epsilon$-optimal states)
and show that such property is preserved by the product $\sigma$-algebra.
\begin{defn}
\textbf{($f-$optimization $\sigma$-algebra)} Let $f\vcentcolon\varPhi\rightarrow\mathbb{R}$
be an objective function, and $\varOmega\subseteq\varPhi$. A $\sigma$-algebra
$\Sigma$ on $\varOmega$ is called $f-$optimization $\sigma$-algebra
iff $\left\{ \varOmega_{\epsilon}\right\} _{\epsilon>0}\subseteq\Sigma$.\end{defn}
\begin{lem}
\label{lem:optimal-sets-sigma-algebra}Let $\Sigma$ be an $f$-optimization
$\sigma$-algebra on $\varOmega$ then $\left\{ \varOmega_{\overline{\epsilon}}\right\} _{\epsilon>0}\subseteq\Sigma$
and $\left\{ \varOmega_{\mathring{\epsilon}}\right\} _{\epsilon>0}\subseteq\Sigma$
.\end{lem}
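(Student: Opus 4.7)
The plan is to prove each inclusion using only two ingredients: the already established Lemma \ref{lem:optimal-states} (which expresses $\varOmega_{\overline{\epsilon}}$ as a countable intersection of strict $(\epsilon+\tfrac{1}{n})$-optimal sets) and the closure properties of $\sigma$-algebras recorded in Lemma \ref{lem:sigma-intersections}. No new set-theoretic identities are needed beyond writing $\varOmega_{\mathring{\epsilon}}$ as a difference of the other two families.

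For the first inclusion, I would fix $\epsilon>0$ and observe that for every $n\in\mathbb{N}^{+}$ we have $\epsilon+\tfrac{1}{n}>0$, so $\varOmega_{\epsilon+\frac{1}{n}}\in\Sigma$ directly from the definition of an $f$-optimization $\sigma$-algebra. Applying Lemma \ref{lem:optimal-states} with $m=1$ gives
\[
\varOmega_{\overline{\epsilon}}=\bigcap_{n=1}^{\infty}\varOmega_{\epsilon+\frac{1}{n}},
\]
and this is a countable intersection of elements of $\Sigma$. By Lemma \ref{lem:sigma-intersections}.2, $\Sigma$ is \noun{$\overline{\mbox{\textsc{ci}}}$}, so $\varOmega_{\overline{\epsilon}}\in\Sigma$. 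Since $\epsilon>0$ was arbitrary, $\left\{\varOmega_{\overline{\epsilon}}\right\}_{\epsilon>0}\subseteq\Sigma$.

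For the second inclusion, I would notice from Definition \ref{def:(epsilon-optimal-elements)} that $x\in\varOmega_{\mathring{\epsilon}}$ iff $d(x)=\epsilon$, i.e., iff $d(x)\leq\epsilon$ and not $d(x)<\epsilon$. Hence
\[
\varOmega_{\mathring{\epsilon}}=\varOmega_{\overline{\epsilon}}\setminus\varOmega_{\epsilon}=\varOmega_{\overline{\epsilon}}\cap\varOmega_{\epsilon}^{c}.
\]
Both $\varOmega_{\overline{\epsilon}}$ (just proved) and $\varOmega_{\epsilon}$ (by hypothesis) lie in $\Sigma$, so $\varOmega_{\epsilon}^{c}\in\Sigma$ by $\sigma.2$ and the intersection lies in $\Sigma$ by Lemma \ref{lem:sigma-intersections}.2. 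Thus $\varOmega_{\mathring{\epsilon}}\in\Sigma$ for every $\epsilon>0$.

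There is no real obstacle here; the only mild subtlety is the first step, where one must recognize that Lemma \ref{lem:optimal-states} is exactly what converts the definitional hypothesis (strict $\epsilon$-optimal sets in $\Sigma$) into membership of the weak-inequality sets in $\Sigma$ via a countable operation. Once that is done, $\varOmega_{\mathring{\epsilon}}$ follows immediately from the standard $\sigma$-algebra closure properties collected in Lemma \ref{lem:sigma-intersections}.
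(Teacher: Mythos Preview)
Your proposal is correct and follows essentially the same approach as the paper: both use Lemma~\ref{lem:optimal-states} together with closure under countable intersections (Lemma~\ref{lem:sigma-intersections}.2) for $\varOmega_{\overline{\epsilon}}$, and then the identity $\varOmega_{\mathring{\epsilon}}=\varOmega_{\overline{\epsilon}}\setminus\varOmega_{\epsilon}$ for the second part. The only cosmetic difference is that the paper invokes closure under proper differences (Lemma~\ref{lem:sigma-intersections}.4) directly, whereas you unfold the difference as $\varOmega_{\overline{\epsilon}}\cap\varOmega_{\epsilon}^{c}$ and appeal to $\sigma.2$ and $\overline{\textsc{ci}}$; these are equivalent.
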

\begin{proof}
$\left[\left\{ \varOmega_{\overline{\epsilon}}\right\} _{\epsilon>0}\subseteq\Sigma\right]$
It follows from the facts that $\left\{ \varOmega_{\epsilon+\frac{1}{n}}\right\} _{n\in\mathbb{N}^{+}}\subseteq\left\{ \varOmega_{\epsilon}\right\} \subseteq\Sigma$
($\Sigma$ optimization $\sigma$-algebra), $\varOmega_{\overline{\epsilon}}={\displaystyle \bigcap_{n=1}^{\infty}}\varOmega_{\epsilon+\frac{1}{n}}$
(lemma \ref{lem:optimal-states}) and $\Sigma$ close under countable
intersections (part 2, lemma \ref{lem:sigma-intersections}). $\left[\left\{ \varOmega_{\mathring{\epsilon}}\right\} _{\epsilon>0}\subseteq\Sigma\right]$
It follows from the fact that $\varOmega_{\mathring{\epsilon}}=\varOmega_{\overline{\epsilon}}\setminus\varOmega_{\epsilon}$
for all $\epsilon>0$ and $\Sigma$ is close under proper differences
(part 4, lemma \ref{lem:sigma-intersections}).\end{proof}
\begin{prop}
$\varOmega_{\epsilon}^{m}={\displaystyle \bigcup_{i=1}^{m}}\left[\varOmega^{i-1}\times\varOmega_{\epsilon}\times\varOmega^{m-i}\right]$
for all $\epsilon>0$ and $m\in\mathbb{N}^{+}$.\end{prop}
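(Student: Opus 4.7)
The plan is to unfold both sides using the definition of $d$ and of \textsc{Best}, and observe that the condition $d(x)<\epsilon$ on a tuple $x=(x_1,\ldots,x_m)\in\varOmega^m$ is equivalent to saying that at least one coordinate $x_i$ individually satisfies $d(x_i)<\epsilon$, i.e.\ lies in $\varOmega_{\epsilon}$. The union on the right-hand side simply enumerates, over $i=1,\ldots,m$, the sets of tuples whose $i$-th coordinate is forced into $\varOmega_{\epsilon}$, so the equality is a direct translation between ``some coordinate is an $\epsilon$-optimum'' and ``the whole tuple is an $\epsilon$-optimum state''.

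Concretely, I would split the proof into the two inclusions. For the inclusion $\subseteq$, take $x\in\varOmega_{\epsilon}^{m}$, so $d(x)=f(\textsc{Best}(x))-f^{*}<\epsilon$ by definition. By the definition of \textsc{Best} in equation \eqref{eq:Best}, $\textsc{Best}(x)=x_{i}$ for some index $i\in\{1,\ldots,m\}$. Then $f(x_{i})-f^{*}<\epsilon$, which is exactly $d(x_{i})<\epsilon$ viewed as a single-element tuple, so $x_{i}\in\varOmega_{\epsilon}$. Since $x_{j}\in\varOmega$ for all $j$, the tuple $x$ lies in $\varOmega^{i-1}\times\varOmega_{\epsilon}\times\varOmega^{m-i}$, hence in the union.

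For the reverse inclusion $\supseteq$, pick $x\in\varOmega^{i-1}\times\varOmega_{\epsilon}\times\varOmega^{m-i}$ for some $i$. Then $x_{i}\in\varOmega_{\epsilon}$, so $f(x_{i})-f^{*}<\epsilon$. Because \textsc{Best}$(x)$ achieves the minimum of $f$ over the coordinates of $x$, we have $f(\textsc{Best}(x))\leq f(x_{i})$, and therefore $d(x)=f(\textsc{Best}(x))-f^{*}\leq f(x_{i})-f^{*}<\epsilon$, so $x\in\varOmega_{\epsilon}^{m}$.

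The only real subtlety, and the place I would be most careful, is the first step of the $\subseteq$ direction: one has to invoke the precise definition of \textsc{Best} to conclude that $f(\textsc{Best}(x))$ is actually realized by some coordinate of $x$, so that the strict inequality $d(x)<\epsilon$ transfers to that coordinate. Everything else reduces to chasing memberships through the definitions of $d$, of $\varOmega_{\epsilon}$, and of the Cartesian product.
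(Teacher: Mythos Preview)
Your proposal is correct and follows essentially the same approach as the paper: both proofs split into the two inclusions, use the definition of \textsc{Best} to extract an index $i$ realizing the minimum for the $\subseteq$ direction, and use the minimality $f(\textsc{Best}(x))\leq f(x_i)$ for the $\supseteq$ direction. Your write-up is in fact slightly cleaner than the paper's, and your explicit remark about needing \textsc{Best} to be attained at some coordinate is exactly the point the paper invokes without comment.
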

\begin{proof}
\textbf{{[}$\subseteq${]}} Let $x\in\varOmega_{\epsilon}^{m}$, then
$d\left(x\right)<\epsilon$ (def $\varOmega_{\epsilon}^{m}$) and
$f\left(\mbox{\textsc{Best}}\left(x\right)\right)-f^{*}<\epsilon$
(def $d\left(x\right)$). It is clear that $x={\displaystyle \prod_{k=1}^{i-1}}x_{k}\times x_{i}\times{\displaystyle \prod_{k=i+1}^{m}}x_{k}$
$f\left(x_{i}\right)-f^{*}<\epsilon$ for some $i=1,2,\ldots,m$ (def
\noun{Best}), so $d\left(x_{i}\right)<\epsilon$ (def $d\left(x\right)$).
Therefore, $x_{i}\in\varOmega_{\epsilon}$ (def $\varOmega_{\epsilon}$)
so $x\in\varOmega^{i-1}\times\varOmega_{\epsilon}\times\varOmega^{m-i}$
and $x\in{\displaystyle \bigcup_{i=1}^{m}}\left[\varOmega^{i-1}\times\varOmega_{\epsilon}\times\varOmega^{m-i}\right]$.
\textbf{{[}$\supseteq${]}} if $x\in{\displaystyle \bigcup_{i=1}^{m}}\left[\varOmega^{i-1}\times\varOmega_{\epsilon}\times\varOmega^{m-i}\right]$
then $x\in\left[\varOmega^{i-1}\times\varOmega_{\epsilon}\times\varOmega^{m-i}\right]$
for some $i=1,2,\ldots,m$. Clearly, $f\left(\mbox{\textsc{Best}}\left(x\right)\right)<f\left(x_{i}\right)$
(def \noun{Best}) so $d\left(x\right)\leq d\left(x_{i}\right)$. Now,
$d\left(x\right)\leq d\left(x_{i}\right)<\epsilon$ ($x_{i}\in\varOmega_{\epsilon}$)
therefore $x\in\varOmega_{\epsilon}^{m}$ (def $\varOmega_{\epsilon}^{m}$
).\end{proof}
\begin{cor}
Let $\Sigma$ be an $f$-optimization $\sigma$-algebra on $\varOmega$
then $\Sigma^{\varotimes m}$ is an $f-$optimization $\sigma$-algebra
on $\varOmega^{m}$ for all $m\in\mathbb{N}^{+}$. \end{cor}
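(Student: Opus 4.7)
The plan is to invoke the preceding proposition, which decomposes $\varOmega_{\epsilon}^{m}$ as the finite union $\bigcup_{i=1}^{m}\left[\varOmega^{i-1}\times\varOmega_{\epsilon}\times\varOmega^{m-i}\right]$, and then argue that each piece of the union already belongs to $\Sigma^{\varotimes m}$, so the whole union does too by closure under countable (hence finite) unions.

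First, I would fix $\epsilon>0$ and $i\in\{1,2,\ldots,m\}$ and note that $\varOmega^{i-1}\times\varOmega_{\epsilon}\times\varOmega^{m-i}$ is a Cartesian product of $m$ sets each lying in $\Sigma$: the factors equal to $\varOmega$ belong to $\Sigma$ by $\sigma.1$, and the factor $\varOmega_{\epsilon}$ belongs to $\Sigma$ because $\Sigma$ is an $f$-optimization $\sigma$-algebra. Hence this product lies in the generalized family product $\prod_{i=1}^{m}\Sigma$, and by Definition \ref{def:SigmaProduct} of the product $\sigma$-algebra we obtain $\varOmega^{i-1}\times\varOmega_{\epsilon}\times\varOmega^{m-i}\in\Sigma^{\varotimes m}$.

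Next, since $\Sigma^{\varotimes m}$ is a $\sigma$-algebra on $\varOmega^{m}$, property $\sigma.3$ (closure under countable unions) yields
\[
\varOmega_{\epsilon}^{m}=\bigcup_{i=1}^{m}\left[\varOmega^{i-1}\times\varOmega_{\epsilon}\times\varOmega^{m-i}\right]\in\Sigma^{\varotimes m}.
\]
Since $\epsilon>0$ was arbitrary, $\left\{ \varOmega_{\epsilon}^{m}\right\} _{\epsilon>0}\subseteq\Sigma^{\varotimes m}$, and therefore $\Sigma^{\varotimes m}$ is an $f$-optimization $\sigma$-algebra on $\varOmega^{m}$.

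I do not expect a real obstacle: the argument is a direct unpacking of definitions once the preceding proposition is in hand. The only minor bookkeeping point is confirming that $\Sigma^{\varotimes m}$ really lives on $\varOmega^{m}$, which is automatic from Definition \ref{def:SigmaProduct} together with the associativity guaranteed by Proposition \ref{prop:(associativity-of-sigma-algebra} and its corollary, so that the iterated product collapses to the expected ambient space.
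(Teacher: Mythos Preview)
Your proposal is correct and follows essentially the same route as the paper's proof: invoke the preceding proposition to write $\varOmega_{\epsilon}^{m}=\bigcup_{i=1}^{m}\left[\varOmega^{i-1}\times\varOmega_{\epsilon}\times\varOmega^{m-i}\right]$, observe each factor set lies in $\Sigma$ so each term of the union lies in $\Sigma^{\varotimes m}$, and conclude by closure under (countable) unions. The only cosmetic difference is that the paper cites closure under countable disjoint unions whereas you cite $\sigma.3$; since the terms in the union are not disjoint in general, your justification is in fact the cleaner one.
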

\begin{proof}
$\varOmega_{\epsilon}\in\Sigma$ (optimization $\sigma$-algebra)
and $\varOmega^{i}\in\Sigma^{\otimes i}$ for all $i=1,2,\ldots,m$
(universality of $\sigma$-algebra) then $\left[\varOmega^{i-1}\times\varOmega_{\epsilon}\times\varOmega^{m-i}\right]\in\Sigma^{\otimes m}$
(def product $\sigma$-algebra), so ${\displaystyle \bigcup_{i=1}^{m}}\left[\varOmega^{i-1}\times\varOmega_{\epsilon}\times\varOmega^{m-i}\right]\in\Sigma^{\otimes m}$
($\Sigma^{\otimes m}$ is \noun{$\overline{\mbox{\textsc{cdu}}}$}).
Therefore, $\varOmega_{\epsilon}^{m}\in\Sigma^{\otimes m}$ for all
$\epsilon>0$, i.e., $\Sigma^{\otimes m}$ is an optimization $\sigma$-algebra.
\end{proof}
Now, we are ready to define the mathematical structure that we use
for characterizing a \noun{SGoal}.
\begin{defn}
\label{def:(optimization-space)}(\textbf{optimization space}) If
$f\vcentcolon\varPhi\rightarrow\mathbb{R}$ is an objective function,
$n\in\mathbb{N}$ and $\Sigma$ is an $f-$optimization $\sigma$-algebra
over a set $\varOmega$ then the triple $\left(\varOmega^{n},\Sigma^{\otimes n},f\right)$
is called optimization space.
\end{defn}

\subsection{Kernels on optimization spaces}

If we are provided with an optimization space $\left(\varOmega^{n},\Sigma^{\otimes n},f\right)$,
we can represent the population used by the \noun{SGoal}, as an individual
$x\in\varOmega^{n}$, while we can characterize the \noun{NextPop}
(a $\mbox{\textsc{f}}\vcentcolon\Omega^{n}\rightarrow\varOmega^{n}$
stochastic method), as a Markov kernel $K\vcentcolon\varOmega^{n}\times\Sigma^{\otimes n}\rightarrow\left[0,1\right]$.
Because such \noun{NextPop }can be defined in terms of more general
$\mbox{\textsc{f}}\vcentcolon\Omega^{\eta}\rightarrow\varOmega^{\upsilon}$
stochastic methods, we study such general kernels.

\subsubsection{\label{sub:Join-Stochastic-Methods}Join Stochastic Methods}
\begin{defn}
\label{def:(Join)-SM}(\textbf{Join}) A stochastic method $\mbox{\textsc{f}}\vcentcolon\Omega^{\eta}\rightarrow\varOmega^{\upsilon}$
is called join if it is defined as the join of $m\in\mathbb{N}$ stochastic
methods ($\mbox{\textsc{f}}_{i}\vcentcolon\varOmega^{n}\rightarrow\varOmega^{\upsilon_{i}}$),
each method generating a subpopulation of the population, i.e. $\mbox{\textsc{f}}={\displaystyle \prod_{i=1}^{m}\mbox{\textsc{f}}_{i}}$,
(see Algorithm \ref{Alg:JoinedMethod}). Here, $\upsilon_{i}\in\mathbb{N}^{+}$
is the size of the $i$-th sub-population $i=1,2,\ldots,m$, and $s_{m}=\upsilon$.

\begin{algorithm}[htbp]
\noun{f}($P$)

1.~~$s_{i}=\sum_{k=1}^{i-1}\upsilon_{k}$ for all $i=1,2,\ldots,m$

2.~~$Q_{1+s_{i-1},\ldots,s_{i}}=$\noun{f}$_{i}$($P$) for all
$i=1,2,\ldots,m$

3.~~\textbf{return} $Q$

\caption{\label{Alg:JoinedMethod}Joined Stochastic Method. }
\end{algorithm}

\end{defn}
\begin{example}
\label{exa:NextPopGGA}The \noun{NextPop} method of a \noun{GGa} (see
Algorithm \ref{Alg:GGA}), is a joined stochastic method: a stochastic
method $\mbox{\textsc{NextSubPop}}{}_{\textsc{GGa}}\vcentcolon\varOmega^{n}\rightarrow\varOmega^{2}$
that generates two new candidate solutions (by selecting two parents
from the population, recombining them and mutating the offspring),
is applied $\frac{n}{2}$ times\noun{ }in order to generate the next
population, see equation \ref{eq:JoinedGGA}.
\end{example}
\begin{equation}
\mbox{\textsc{NextPop}}{}_{\textsc{GGa}}\left(P\right)={\displaystyle \prod_{i=1}^{\frac{n}{2}}}\mbox{\textsc{NextSubPop}}{}_{\textsc{GGa}}\left(P\right)\label{eq:JoinedGGA}
\end{equation}

\begin{example}
\label{exa:NextPopDE}The\noun{ NextPop} method of a \noun{DE} (see
Algorithm \ref{Alg:DE}), is a joined stochastic method: $n$ stochastic
methods $\mbox{\textsc{NextInd}}{}_{\textsc{DE},i}\vcentcolon\varOmega^{n}\rightarrow\varOmega$
each one generating the $i$th individual of the new population (by
selecting three extra parents from the population and recombining
each dimension using differences if required), are applied, see equation
\ref{eq:JoinedGGA}.
\end{example}
\begin{equation}
\mbox{\textsc{NextPop}}{}_{\textsc{DE}}\left(P\right)={\displaystyle \prod_{i=1}^{n}}\mbox{\textsc{NextInd}}{}_{\textsc{DE},i}\left(P\right)\label{eq:JoinedDE}
\end{equation}

\begin{example}
\label{exa:NextPopPHC}The\noun{ NextPop} method of a \noun{PHC} (see
Algorithm \ref{Alg:Parallel-Hill-Climbing}), is a joined stochastic
method: $n$ stochastic methods $\mbox{\textsc{NextInd}}{}_{\textsc{PHC},i}\vcentcolon\varOmega^{n}\rightarrow\varOmega$
each one generating the $i$th individual of the new population ($\mbox{\textsc{NextInd}}{}_{\textsc{PHC},i}\left(P\right)=\mbox{\textsc{NextPop}}{}_{\textsc{HC}}\left(P_{i}\right)$),
see equation \ref{eq:JoinedPHC}.
\end{example}
\begin{equation}
\mbox{\textsc{NextPop}}{}_{\textsc{DE}}\left(P\right)={\displaystyle \prod_{i=1}^{n}}\mbox{\textsc{NextPop}}{}_{\textsc{HC}}\left(P_{i}\right)\label{eq:JoinedPHC}
\end{equation}

We are now in the position of providing a sufficient condition for
characterizing \noun{PHC}, \noun{GGa}, and \noun{DE} algorithms.
\begin{prop}
\label{prop:Join-stochastic-kernel}Let $\left\{ \mbox{\textsc{f}}_{i}\vcentcolon\varOmega^{n}\rightarrow\varOmega^{\upsilon_{i}}\right\} _{i=1,2,\ldots,m}$
be a finite family of stochastic methods, each one characterized by
a kernel $K_{i}\vcentcolon\varOmega^{\eta}\times\Sigma^{\otimes\upsilon_{i}}\rightarrow\left[0,1\right]$,
then the join stochastic method $\mbox{\textsc{f}}={\displaystyle \prod_{i=1}^{m}\mbox{\textsc{f}}_{i}}$
is characterized by the kernel $\circledast K\vcentcolon\varOmega^{\eta}\times\Sigma^{\otimes\upsilon}\rightarrow\left[0,1\right]$
with $\upsilon=\sum_{k=1}^{m}\upsilon_{k}$.\end{prop}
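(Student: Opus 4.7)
The plan is essentially a direct invocation of Corollary \ref{cor:(Join-kernel)}, with a bit of bookkeeping to match indices and then a separate verification that the resulting kernel is the one that characterizes the joined method. Setting $(\varOmega',\Sigma')=(\varOmega^{\eta},\Sigma^{\otimes\eta})$ in that corollary and taking each $n_i=\upsilon_i$ so that $(\varOmega_i,\Sigma_i)=(\varOmega^{\upsilon_i},\Sigma^{\otimes\upsilon_i})$, the corollary immediately yields that $\circledast K\vcentcolon\varOmega^{\eta}\times\Sigma^{\otimes\upsilon}\rightarrow[0,1]$, with $\upsilon=\sum_{i=1}^{m}\upsilon_i$, is a kernel. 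This disposes of the ``being a kernel'' half of the claim.

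What still needs to be argued is that this particular kernel \emph{characterizes} the joined method $\mbox{\textsc{f}}=\prod_{i=1}^{m}\mbox{\textsc{f}}_i$, in the sense that for each input $x\in\varOmega^{\eta}$ the law of $\mbox{\textsc{f}}(x)$ on $\Sigma^{\otimes\upsilon}$ coincides with the section $\circledast K_{x,\bullet}$. I would first check this on the generating $\pi$-system of product sets $A=\prod_{i=1}^{m}A_i$ with $A_i\in\Sigma^{\otimes\upsilon_i}$: by the construction in Algorithm \ref{Alg:JoinedMethod} each sub-method $\mbox{\textsc{f}}_i$ is invoked independently on the same input $x$, so the events $\{\mbox{\textsc{f}}_i(x)\in A_i\}$ are independent across $i$ and the joint probability factors as $\prod_{i=1}^{m}K_i(x,A_i)$; by Theorem \ref{thm:(Finite-product-probability)} this is exactly $\bigotimes_{i=1}^{m}K_{i\,(x,\bullet)}(A)=\circledast K(x,A)$.

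To extend this agreement from the generating $\pi$-system to all of $\Sigma^{\otimes\upsilon}$, I would appeal to the uniqueness part of the product probability measure theorem (Theorem \ref{thm:(Finite-product-probability)}): both the law of $\mbox{\textsc{f}}(x)$ and $\circledast K_{x,\bullet}$ are probability measures on $\Sigma^{\otimes\upsilon}$ that agree on the $\pi$-system $\prod_{i=1}^{m}\Sigma^{\otimes\upsilon_i}$ (Lemma \ref{lem:-sigma-product}), hence must coincide everywhere. The main non-trivial point, and the only place where care is needed, is the identification $\bigotimes_{i=1}^{m}\Sigma^{\otimes\upsilon_i}=\Sigma^{\otimes\upsilon}$ (so that both sides even live on the same $\sigma$-algebra); this is handled by iterating the associativity of the product $\sigma$-algebra from Proposition \ref{prop:(associativity-of-sigma-algebra}. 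Apart from this, the proof is purely a matter of translating notation between the statement at hand and the earlier corollary, with no new probabilistic content required.
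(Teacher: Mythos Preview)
Your approach is essentially the same as the paper's: the paper's entire proof is the one-line remark that the result ``follows from corollary \ref{cor:(Join-kernel)} of theorem \ref{thm:(Join-kernel)}.'' You invoke the same corollary for the kernel property and then go further, explicitly verifying that $\circledast K_{x,\bullet}$ coincides with the law of $\mbox{\textsc{f}}(x)$ via the $\pi$-system/uniqueness argument; the paper simply takes this identification for granted. Your extra care (independence of the $\mbox{\textsc{f}}_i(x)$, agreement on rectangles, extension by uniqueness) is correct and arguably fills a gap the paper leaves implicit, but there is no genuine divergence in strategy.
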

\begin{proof}
Follows from corollary \ref{cor:(Join-kernel)} of theorem \ref{thm:(Join-kernel)}.\end{proof}
\begin{cor}
\label{cor:Join-stoch}Each of the \noun{NextPop$_{\textsc{PHC}}$,
NextPop$_{\textsc{GGa}}$} and \noun{NextPop$_{\textsc{DE}}$} stochastic
methods can be characterized by kernels if\textup{\emph{ each of the}}\textup{
}stochastic methods\textup{\noun{ NextPop}}$_{\textsc{HC}}$,\textup{\noun{
NextSubPop}}$_{\textsc{GGa}}$, and \textup{\noun{Next}}\noun{Ind}$_{\textsc{DE}}$
can be characterized by a kernel.
\end{cor}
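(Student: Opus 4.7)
The plan is to observe that each of the three population-level methods has already been written as a Join of simpler stochastic methods in Examples \ref{exa:NextPopPHC}, \ref{exa:NextPopGGA}, and \ref{exa:NextPopDE}, and then to invoke Proposition \ref{prop:Join-stochastic-kernel} componentwise. So the proof will consist of three short paragraphs, one per algorithm, each pointing to the relevant example (which exhibits the join decomposition) and to Proposition \ref{prop:Join-stochastic-kernel} (which supplies the kernel of a join from kernels of its factors).

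Concretely, for \textsc{PHC}, Example \ref{exa:NextPopPHC} writes $\mbox{\textsc{NextPop}}_{\textsc{PHC}}(P) = \prod_{i=1}^{n} \mbox{\textsc{NextInd}}_{\textsc{PHC},i}(P)$ where each factor is essentially $\mbox{\textsc{NextPop}}_{\textsc{HC}}$ composed with the projection $\pi_i$; since $\pi_i$ is a kernel by Corollary \ref{cor:projection} and composition preserves the kernel property (Subsection \ref{sub:Composition}), each factor is a kernel under the hypothesis on $\mbox{\textsc{NextPop}}_{\textsc{HC}}$, and Proposition \ref{prop:Join-stochastic-kernel} yields a kernel $\circledast K: \varOmega^n \times \Sigma^{\otimes n} \to [0,1]$ for the join. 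The \textsc{DE} case is analogous: Example \ref{exa:NextPopDE} gives $\mbox{\textsc{NextPop}}_{\textsc{DE}}(P) = \prod_{i=1}^{n} \mbox{\textsc{NextInd}}_{\textsc{DE},i}(P)$ with each $\mbox{\textsc{NextInd}}_{\textsc{DE},i}: \varOmega^n \to \varOmega$ a kernel by hypothesis, and the same proposition closes the argument. For \textsc{GGa}, Example \ref{exa:NextPopGGA} gives $\mbox{\textsc{NextPop}}_{\textsc{GGa}}(P) = \prod_{i=1}^{n/2} \mbox{\textsc{NextSubPop}}_{\textsc{GGa}}(P)$ where each factor has codomain $\varOmega^2$ and is a kernel by hypothesis, and again Proposition \ref{prop:Join-stochastic-kernel} (with each $\upsilon_i=2$ and $\upsilon = n$) assembles them.

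There is essentially no technical obstacle: all of the heavy lifting — establishing that the product probability measure exists (Theorem \ref{thm:(Finite-product-probability)}), that the join of kernels is itself a kernel (Theorem \ref{thm:(Join-kernel)} and its Corollary \ref{cor:(Join-kernel)}), and that projections and compositions are kernels — has already been carried out. The only small bookkeeping issue to watch is that the examples use the compact notation $\prod$ for joining stochastic methods, so I will make explicit that this is precisely the join of Definition \ref{def:(Join)-SM}, thereby matching the hypothesis of Proposition \ref{prop:Join-stochastic-kernel}. Because the result is a direct specialization of that proposition to three concrete instantiations, the proof reduces to a one-line citation for each algorithm.
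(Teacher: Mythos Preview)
Your proposal is correct and follows exactly the approach the paper intends: the corollary is stated immediately after Proposition~\ref{prop:Join-stochastic-kernel} and Examples~\ref{exa:NextPopGGA}--\ref{exa:NextPopPHC}, and its (implicit) proof is precisely to combine those examples with that proposition. Your extra remark that the \textsc{PHC} factors require composing $\mbox{\textsc{NextPop}}_{\textsc{HC}}$ with the projection kernel $\pi_i$ (via Corollary~\ref{cor:projection} and Subsection~\ref{sub:Composition}) is a worthwhile clarification that the paper leaves tacit.
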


\subsubsection{Sorting Methods}

Although the result of sorting a population is, in general, a non
stochastic method, we can model it as a kernel. We start by modeling
the sorting of two elements according to their fitness value.
\begin{defn}
\label{def:sorting-two-method}(\textbf{\noun{Sort-Two}}) Let $d\vcentcolon\varOmega\rightarrow\mathbb{R}$,
the sort-two function $\mbox{s}_{2}\vcentcolon\varOmega^{2}\rightarrow\varOmega^{2}$
is defined as follows:

\[
\mbox{\textsc{s}}_{2}\left(z=\left(x,y\right)\right)=\left\{ \begin{array}{ll}
z & \mbox{if }d\left(x\right)<d\left(y\right)\\
\underleftarrow{z}=\left(y,x\right) & \mbox{otherwise}
\end{array}\right.
\]

\end{defn}
In order to model the S$_{2}$ method as a kernel, we need to define
sets that capture some notions of sorted couples.
\begin{defn}
(\textbf{sorted couples sets}) Let $x,y\in\varOmega$.
\begin{enumerate}
\item[\noun{$\mbox{\textsc{mM}}$}]  The set $\mbox{\textsc{mM}}=\left\{ \left(x,y\right)\mid d\left(x\right)<d\left(y\right)\right\} $
is called min-max sorted couples set.
\item[\noun{$\mbox{\textsc{Mm}}$}]  The set $\mbox{\textsc{Mm}}=\left\{ \left(x,y\right)\mid d\left(y\right)<d\left(x\right)\right\} $
is called max-min sorted couples set.
\item[\noun{$\mbox{\textsc{m}}$}]  The set $\mbox{\textsc{m}}=\left\{ \left(x,y\right)\mid d\left(y\right)=d\left(x\right)\right\} $
is called equivalent couples set.
\end{enumerate}
\end{defn}
\begin{lem}
\label{lem:Best2Set}The following set of equations holds.
\begin{enumerate}
\item $\mbox{\textsc{mM}}={\displaystyle \bigcup_{r\in\mathbb{Q}}\varOmega_{r}\times\varOmega_{\overline{r}}^{c}}$
\item $\mbox{\textsc{Mm}}={\displaystyle \bigcup_{r\in\mathbb{Q}}\varOmega_{\overline{r}}^{c}}\times\varOmega_{r}$
\item $\mbox{\textsc{m}}=\left(\mbox{\textsc{mM}}\biguplus\mbox{\textsc{Mm}}\right)^{c}$
\end{enumerate}
\end{lem}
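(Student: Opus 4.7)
The plan is to prove each of the three set equalities by unfolding the defining predicates and using elementary properties of the order on $\mathbb{R}$, most notably the density of $\mathbb{Q}$ in $\mathbb{R}$. Recall that $\varOmega_{r}=\{x\in\varOmega\mid d(x)<r\}$ and, by Remark \ref{rem:complements-optimal-sets}, $\varOmega_{\overline{r}}^{c}=\{x\in\varOmega\mid d(x)>r\}$.

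For part 1, I would proceed by double inclusion. For $[\supseteq]$, take $(x,y)\in\varOmega_{r}\times\varOmega_{\overline{r}}^{c}$ for some $r\in\mathbb{Q}$; then $d(x)<r$ and $d(y)>r$, so $d(x)<d(y)$ and hence $(x,y)\in\mbox{\textsc{mM}}$. For $[\subseteq]$, take $(x,y)\in\mbox{\textsc{mM}}$, so $d(x)<d(y)$; by the density of $\mathbb{Q}$ in $\mathbb{R}$, pick $r\in\mathbb{Q}$ with $d(x)<r<d(y)$, which gives $x\in\varOmega_{r}$ and $y\in\varOmega_{\overline{r}}^{c}$, hence $(x,y)\in\varOmega_{r}\times\varOmega_{\overline{r}}^{c}\subseteq\bigcup_{r\in\mathbb{Q}}\varOmega_{r}\times\varOmega_{\overline{r}}^{c}$. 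Part 2 follows by the symmetric argument, swapping the roles of $x$ and $y$ (equivalently, applying part 1 to $(y,x)$ and then using the swap map).

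For part 3, the key observation is the trichotomy of the total order on $\mathbb{R}$: for any $x,y\in\varOmega$ exactly one of $d(x)<d(y)$, $d(y)<d(x)$, or $d(x)=d(y)$ holds. Thus $\mbox{\textsc{mM}}$ and $\mbox{\textsc{Mm}}$ are disjoint, which justifies writing $\mbox{\textsc{mM}}\biguplus\mbox{\textsc{Mm}}$, and their union is exactly the set of pairs with $d(x)\neq d(y)$. Its complement in $\varOmega^{2}$ is then the set of pairs with $d(x)=d(y)$, which is precisely $\mbox{\textsc{m}}$.

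The argument is essentially routine; the only mildly delicate point is the use of density of $\mathbb{Q}$ in $\mathbb{R}$ in parts 1 and 2, but this is standard. No obstacle of substance is expected, and no machinery beyond Definition \ref{def:(epsilon-optimal-elements)} and Remark \ref{rem:complements-optimal-sets} is needed.
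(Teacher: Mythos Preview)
Your proposal is correct and follows essentially the same approach as the paper's own proof: double inclusion using the density of $\mathbb{Q}$ in $\mathbb{R}$ for parts 1 and 2, and trichotomy of the order on $\mathbb{R}$ for part 3. The arguments are virtually identical in structure and detail.
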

\begin{proof}
\textbf{{[}1{]}} $\left[\subseteq\right]$ Let $\left(x,y\right)\in\mbox{\textsc{mM}}$
then $d\left(x\right)<d\left(y\right)$ (def. $\mbox{\textsc{mM}}$),
therefore $\exists r\in\mathbb{Q}$ such that $d\left(x\right)<r<d\left(y\right)$
(Archims theorem and real numbers dense theorem). Clearly, $x\in\varOmega_{r}$
and $y\in\varOmega_{\overline{r}}^{c}$ (def $\varOmega_{\epsilon}$,$\varOmega_{\overline{\epsilon}}$
and remark \ref{rem:complements-optimal-sets}), then $\left(x,y\right)\in\varOmega_{r}\times\varOmega_{\overline{r}}^{c}$,
so $\left(x,y\right)\in{\displaystyle \bigcup_{r\in\mathbb{Q}}\varOmega_{r}\times\varOmega_{\overline{r}}^{c}}$
. $\left[\supseteq\right]$ Let $\left(x,y\right)\in{\displaystyle \bigcup_{r\in\mathbb{Q}}\varOmega_{r}\times\varOmega_{\overline{r}}^{c}}$
then $\exists r\in\mathbb{Q}$ such that $\left(x,y\right)\in\varOmega_{r}\times\varOmega_{\overline{r}}^{c}$,
therefore $x\in\varOmega_{r}$ and $y\in\varOmega_{\overline{r}}^{c}$.
Clearly, $d\left(x\right)<r$ and $r<d\left(y\right)$ (def $\varOmega_{\epsilon}$,$\varOmega_{\overline{\epsilon}}$
and remark \ref{rem:complements-optimal-sets}), then $d\left(x\right)<d\left(y\right)$
so $\left(x,y\right)\in\mbox{\textsc{mM}}$ (def $\mbox{\textsc{mM}}$).
\textbf{{[}2{]}} It is a proof similar to the proof of part {[}1{]}.
\textbf{{[}3{]}} Obvious, for any $\left(x,y\right)\in{\displaystyle \varOmega}\times\varOmega$
we have that $d\left(x\right)<d\left(y\right)\vee d\left(y\right)<d\left(x\right)\vee d\left(x\right)=d\left(y\right)$
($\mathbb{R}$ is total order). Clearly, $\mbox{\textsc{mM, }}\mbox{\textsc{Mm}, and }\mbox{\textsc{m}}$
are pairwise disjoint sets. Then $\mbox{\ensuremath{\left({\displaystyle \varOmega}\times\varOmega\right)}=\textsc{mM}}\biguplus\mbox{\textsc{Mm}}\biguplus\mbox{\textsc{m}}$,
so $\mbox{\textsc{m}}=\left({\displaystyle \varOmega}\times\varOmega\right)\setminus\left(\mbox{\textsc{mM}}\biguplus\mbox{\textsc{Mm}}\right)=\left(\mbox{\textsc{mM}}\biguplus\mbox{\textsc{Mm}}\right)^{c}$.\end{proof}
\begin{lem}
\label{lem:mM-Mm-m}$\mbox{\textsc{mM, }}\mbox{\textsc{Mm}, }\mbox{\textsc{m}, }\mbox{\textsc{mM\ensuremath{^{c}}, }}\mbox{\textsc{Mm}\ensuremath{\ensuremath{^{c}}}, }\mbox{\textsc{m\ensuremath{\ensuremath{^{c}}}}}\in\Sigma^{\varotimes2}$
if $\Sigma$ is an optimization $\sigma$-algebra. \end{lem}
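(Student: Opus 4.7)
The plan is to verify membership for each of the six sets by invoking Lemma \ref{lem:Best2Set} and then applying the closure properties of a $\sigma$-algebra. The argument is essentially a bookkeeping exercise once the representation formulas from Lemma \ref{lem:Best2Set} are in hand.

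First, I would handle $\mbox{\textsc{mM}}$. By Lemma \ref{lem:Best2Set}.1, $\mbox{\textsc{mM}} = \bigcup_{r\in\mathbb{Q}} \varOmega_{r} \times \varOmega_{\overline{r}}^{c}$. For each $r > 0$, the definition of $f$-optimization $\sigma$-algebra gives $\varOmega_{r} \in \Sigma$ directly, and Lemma \ref{lem:optimal-sets-sigma-algebra} gives $\varOmega_{\overline{r}} \in \Sigma$, whence $\varOmega_{\overline{r}}^{c} \in \Sigma$ by $\sigma.2$. For $r \leq 0$ the set $\varOmega_{r}$ is empty (since $d$ is nonnegative), so the corresponding factor contributes only $\emptyset \in \Sigma^{\otimes 2}$. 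Each product $\varOmega_{r} \times \varOmega_{\overline{r}}^{c}$ lies in $\Sigma \times \Sigma \subseteq \Sigma \otimes \Sigma = \Sigma^{\otimes 2}$ by definition of the product $\sigma$-algebra, and because $\mathbb{Q}$ is countable, the countable union lies in $\Sigma^{\otimes 2}$ by $\sigma.3$.

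Next, $\mbox{\textsc{Mm}} \in \Sigma^{\otimes 2}$ follows by the same argument applied to Lemma \ref{lem:Best2Set}.2, and $\mbox{\textsc{m}} \in \Sigma^{\otimes 2}$ follows from Lemma \ref{lem:Best2Set}.3 together with $\sigma.3$ (to assemble $\mbox{\textsc{mM}} \biguplus \mbox{\textsc{Mm}}$) and $\sigma.2$ (to take the complement). The three remaining sets $\mbox{\textsc{mM}}^{c}$, $\mbox{\textsc{Mm}}^{c}$, $\mbox{\textsc{m}}^{c}$ then lie in $\Sigma^{\otimes 2}$ by closure under complements ($\sigma.2$).

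The only mild subtlety, rather than an obstacle, is reconciling the union over all of $\mathbb{Q}$ in Lemma \ref{lem:Best2Set} with the fact that the optimization $\sigma$-algebra hypothesis only supplies $\varOmega_{\epsilon}$ for $\epsilon > 0$; as noted above, the $r \leq 0$ terms vanish because $d \geq 0$, so effectively only positive rationals are used. Everything else is a direct application of the closure axioms $\sigma.2$ and $\sigma.3$ and the embedding $\Sigma \times \Sigma \subseteq \Sigma^{\otimes 2}$.
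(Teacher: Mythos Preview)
Your proposal is correct and follows essentially the same route as the paper's proof: invoke Lemma~\ref{lem:Best2Set} to express $\mbox{\textsc{mM}}$, $\mbox{\textsc{Mm}}$, $\mbox{\textsc{m}}$ as countable unions/complements of measurable rectangles, then apply the closure axioms of $\Sigma^{\otimes 2}$. If anything, your version is slightly more careful than the paper's: you explicitly dispose of the $r\le 0$ terms in the union over $\mathbb{Q}$ (which the paper silently ignores), and you cite closure under general countable unions ($\sigma.3$) rather than the paper's citation of closure under countable \emph{disjoint} unions, which is not quite the right property since the union $\bigcup_{r\in\mathbb{Q}}\varOmega_{r}\times\varOmega_{\overline{r}}^{c}$ is not disjoint.
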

\begin{proof}
$\left(\varOmega_{r}\times\varOmega_{\overline{r}}^{c}\right),\left(\varOmega_{\overline{r}}^{c}\times\varOmega_{r}\right)\in\Sigma^{\varotimes2}$
($\varOmega_{r},\varOmega_{\overline{r}}^{c}\in\Sigma$, def product
$\sigma$-algebra), and $\mbox{\textsc{mM}}{\displaystyle =\bigcup_{r\in\mathbb{Q}}\varOmega_{r}\times\varOmega_{\overline{r}}^{c}}$
and $\mbox{\textsc{Mm}}{\displaystyle =\bigcup_{r\in\mathbb{Q}}\varOmega_{\overline{r}}^{c}}\times\varOmega_{r}$
(lemma \ref{lem:Best2Set}), then $\mbox{\textsc{mM, }}\mbox{\textsc{Mm}}\in\Sigma^{\varotimes2}$
($\Sigma^{\varotimes2}$ is \noun{$\overline{\mbox{\textsc{cdu}}}$}).
Clearly, $\left(\mbox{\textsc{mM}}\biguplus\mbox{\textsc{Mm}}\right)\mbox{\textsc{m}}\in\Sigma^{\varotimes2}$
($\Sigma^{\varotimes2}$ is \noun{$\overline{\mbox{\textsc{cdu}}}$}
), so $\mbox{\textsc{m}}\in\Sigma^{\varotimes2}$ ($\Sigma^{\varotimes2}$
is \noun{$\overline{\mbox{\textsc{c}}}$). }Finally, $\mbox{\textsc{mM\ensuremath{^{c}}, }}\mbox{\textsc{Mm}\ensuremath{\ensuremath{^{c}}}, }\mbox{\textsc{m\ensuremath{\ensuremath{^{c}}}}}\in\Sigma^{\varotimes2}$
($\sigma.2$).\end{proof}
\begin{prop}
\label{prop:Sort-measurable}$\mbox{s}_{2}\vcentcolon\varOmega^{2}\rightarrow\varOmega^{2}$
is measurable.\end{prop}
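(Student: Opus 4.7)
The plan is to exploit the piecewise definition of $\mbox{s}_{2}$: it acts as the identity on the min-max sorted set $\mbox{\textsc{mM}}$ and as the swap function $\underleftarrow{}$ on its complement $\mbox{\textsc{mM}}^{c}=\mbox{\textsc{Mm}}\biguplus\mbox{\textsc{m}}$. Both pieces are already known to be measurable (identity trivially, and swap by Corollary~\ref{cor:(commutativity-product-sigma-algebra)}), and both domains $\mbox{\textsc{mM}}$ and $\mbox{\textsc{mM}}^{c}$ lie in $\Sigma^{\otimes 2}$ by Lemma~\ref{lem:mM-Mm-m}. So the task reduces to verifying that measurability is preserved when two measurable functions are glued along a measurable partition of the domain.

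Concretely, I would fix $A\in\Sigma^{\otimes 2}$ and decompose
\[
\mbox{s}_{2}^{-1}\left(A\right)=\left(\mbox{s}_{2}^{-1}\left(A\right)\bigcap\mbox{\textsc{mM}}\right)\bigcup\left(\mbox{s}_{2}^{-1}\left(A\right)\bigcap\mbox{\textsc{mM}}^{c}\right).
\]
On $\mbox{\textsc{mM}}$, $\mbox{s}_{2}\left(z\right)=z$, so the first piece equals $A\bigcap\mbox{\textsc{mM}}$. On $\mbox{\textsc{mM}}^{c}$, $\mbox{s}_{2}\left(z\right)=\underleftarrow{z}$, so by Lemma~\ref{lem:Swap}.2 we have $\mbox{s}_{2}\left(z\right)\in A$ iff $z\in\underleftarrow{A}$, hence the second piece equals $\underleftarrow{A}\bigcap\mbox{\textsc{mM}}^{c}$. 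Thus
\[
\mbox{s}_{2}^{-1}\left(A\right)=\left(A\bigcap\mbox{\textsc{mM}}\right)\bigcup\left(\underleftarrow{A}\bigcap\mbox{\textsc{mM}}^{c}\right).
\]

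To conclude, I would invoke Proposition~\ref{prop:swap} to get $\underleftarrow{A}\in\Sigma^{\otimes 2}$ (using that $\Sigma_{1}=\Sigma_{2}=\Sigma$ here), Lemma~\ref{lem:mM-Mm-m} to place $\mbox{\textsc{mM}}$ and $\mbox{\textsc{mM}}^{c}$ in $\Sigma^{\otimes 2}$, and then Lemma~\ref{lem:sigma-intersections} (closure under finite intersections and unions) to obtain $\mbox{s}_{2}^{-1}\left(A\right)\in\Sigma^{\otimes 2}$. There is no real obstacle: every ingredient has been set up in the earlier subsections. The only tiny subtlety is being careful that the ``otherwise'' branch of $\mbox{s}_{2}$ covers both $\mbox{\textsc{Mm}}$ and $\mbox{\textsc{m}}$ (the ties), which is why $\mbox{\textsc{mM}}^{c}$ rather than $\mbox{\textsc{Mm}}$ is the correct set on which $\mbox{s}_{2}$ swaps its arguments.
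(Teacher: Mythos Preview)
Your proof is correct and follows essentially the same approach as the paper: both derive the identity $\mbox{s}_{2}^{-1}(A)=\left(A\cap\mbox{\textsc{mM}}\right)\cup\left(\underleftarrow{A}\cap\mbox{\textsc{mM}}^{c}\right)$ and then invoke Proposition~\ref{prop:swap}, Lemma~\ref{lem:mM-Mm-m}, and closure of $\Sigma^{\otimes 2}$ under finite intersections and unions. Your remark about the tie case ($\mbox{\textsc{m}}$) falling into $\mbox{\textsc{mM}}^{c}$ is exactly the right observation and mirrors the paper's use of $d(y)\leq d(x)$ in the ``otherwise'' branch.
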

\begin{proof}
Let $A\in\Sigma$ and $z=\left(x,y\right)\in\varOmega^{2}$. $z\in\mbox{\textsc{s}}_{2}^{-1}\left(A\right)$
iff $\left[z\in A\wedge d\left(x\right)<d\left(y\right)\right]\vee\left[\underleftarrow{z}\in A\wedge d\left(y\right)\leq d\left(x\right)\right]$
(def $\mbox{\textsc{s}}_{2}$) iff $\left[z\in A\wedge z\in\mbox{\textsc{mM}}\right]\vee\left[\underleftarrow{z}\in A\wedge z\in\mbox{\textsc{mM}}^{c}\right]$
(def $\mbox{\textsc{mM}}$) iff $z\in\left(A\bigcap\mbox{\textsc{mM}}\right)\biguplus\left(\underleftarrow{A}\bigcap\mbox{\textsc{mM}}^{c}\right)$
(def $\bigcup$ and $\bigcap$). Since $A,\varOmega\in\Sigma$ then
$A,\underleftarrow{A},\mbox{\textsc{mM}},\mbox{\textsc{mM}}^{c}\in\Sigma^{\otimes2}$
(corollary \ref{prop:swap} and lemma \ref{lem:mM-Mm-m}). Therefore,
$\mbox{\textsc{s}}_{2}^{-1}\left(A\right)\in\Sigma^{\otimes2}$ ($\Sigma^{\otimes2}$
is \noun{$\overline{\mbox{\textsc{c}}}$}, \noun{$\overline{\mbox{\textsc{cu}}}$,}
and \noun{$\overline{\mbox{\textsc{ci}}}$}). Clearly, $\mbox{\textsc{s}}_{2}$
is measurable.\end{proof}
\begin{cor}
\label{cor:Sort-kernel}$1_{\mbox{\textsc{s}}_{2}}:\varOmega^{2}\times\Sigma^{\varotimes2}\rightarrow\left[0,1\right]$
as defined in theorem \ref{thm:Kernel_det_f} is a kernel. \end{cor}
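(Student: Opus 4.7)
The plan is to invoke Theorem \ref{thm:Kernel_det_f} directly: that theorem states that for any measurable function $f\vcentcolon\varOmega_{1}\rightarrow\varOmega_{2}$ between measurable spaces, the induced indicator $1_{f}$ is a Markov kernel. All we need is to identify the relevant spaces and supply the measurability hypothesis.

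First, I take $\left(\varOmega_{1},\Sigma_{1}\right)=\left(\varOmega_{2},\Sigma_{2}\right)=\left(\varOmega^{2},\Sigma^{\otimes 2}\right)$ and $f=\mbox{\textsc{s}}_{2}$. The codomain $\Sigma^{\otimes 2}$ is a $\sigma$-algebra on $\varOmega^{2}$ by Definition \ref{def:SigmaProduct}, so both sides of the theorem's hypothesis are well-defined measurable spaces.

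Second, I invoke Proposition \ref{prop:Sort-measurable}, which has just been established, to conclude that $\mbox{\textsc{s}}_{2}\vcentcolon\varOmega^{2}\rightarrow\varOmega^{2}$ is $\Sigma^{\otimes 2}$-$\Sigma^{\otimes 2}$ measurable. With this in hand, Theorem \ref{thm:Kernel_det_f} applies verbatim and yields that $1_{\mbox{\textsc{s}}_{2}}\vcentcolon\varOmega^{2}\times\Sigma^{\otimes 2}\rightarrow\left[0,1\right]$ satisfies both kernel conditions $K.1$ and $K.2$, hence is a Markov kernel.

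There is no real obstacle here; the work has already been done in the preceding proposition (where the set-algebra identity expressing $\mbox{\textsc{s}}_{2}^{-1}(A)$ in terms of $\mbox{\textsc{mM}}$, $\mbox{\textsc{mM}}^{c}$, the swap image $\underleftarrow{A}$, and their intersections/unions was combined with Lemma \ref{lem:mM-Mm-m} and Proposition \ref{prop:swap}). So the proof is a one-line citation of Proposition \ref{prop:Sort-measurable} and Theorem \ref{thm:Kernel_det_f}, entirely parallel to Corollary \ref{lem:IdKernel} and Corollary \ref{cor:projection}, which were handled in the same manner.
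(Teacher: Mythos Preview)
Your proposal is correct and matches the paper's own proof exactly: the paper simply writes ``Follows from proposition \ref{prop:Sort-measurable} and theorem \ref{thm:Kernel_det_f}.'' Your identification of the measurable spaces and the parallel you draw with Corollaries \ref{lem:IdKernel} and \ref{cor:projection} are apt, though the paper does not spell these out.
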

\begin{proof}
Follows from proposition \ref{prop:Sort-measurable} and theorem \ref{thm:Kernel_det_f}.
\end{proof}
Having defined the kernel for \noun{s$_{2}$}, we define a kernel
$\mbox{\textsc{s}}_{n,n-1}\vcentcolon\varOmega^{n}\times\Sigma^{\varotimes n}\rightarrow\left[0,1\right]$
for characterizing a $n$-tuple sorting method.
\begin{prop}
The following functions are kernels
\begin{enumerate}
\item $\mbox{\textsc{w}}_{n,k}\vcentcolon\varOmega^{n}\times\Sigma^{\varotimes n}\rightarrow\left[0,1\right]$
defined as $\mbox{\textsc{w}}_{n,k}=\pi_{\left\{ 1,\ldots,k-1\right\} }\circledast\left[\mbox{\textsc{s}}_{2}\circ\pi_{\left\{ k,k+1\right\} }\right]\circledast\pi_{\left\{ k+2,\ldots,n\right\} }$
for $k=1,\ldots,n-1$.
\item $\mbox{\textsc{t}}_{n,k}\vcentcolon\varOmega^{n}\times\Sigma^{\varotimes n}\rightarrow\left[0,1\right]$
defined as $\mbox{\textsc{t}}_{n,1}=\mbox{\textsc{w}}_{n,1}$, and
$\mbox{\textsc{t}}_{n,k}=\mbox{\textsc{w}}_{n,k}\circ\mbox{\textsc{t}}_{n,k-1}$
for $k=2,\ldots,n-1$.
\item $\mbox{\textsc{s}}_{n,k}\vcentcolon\varOmega^{n}\times\Sigma^{\varotimes n}\rightarrow\left[0,1\right]$
defined as $\mbox{s}_{n,1}=\mbox{\textsc{t}}_{n,1}$, and $\mbox{\textsc{s}}_{n,k}=\mbox{\textsc{t}}_{n,k}\circ\mbox{\textsc{s}}_{n,k-1}$
for $k=2,\ldots,n-1$.
\end{enumerate}
\end{prop}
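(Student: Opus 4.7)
The plan is to obtain each of the three kernels by assembling previously established kernel constructions—projection, sort-two, composition, and join—in three layered stages, with a short induction for parts (2) and (3).

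For (1), I would first note that $\pi_{\{1,\ldots,k-1\}}$, $\pi_{\{k,k+1\}}$, and $\pi_{\{k+2,\ldots,n\}}$ are kernels by corollary \ref{cor:projection}, with codomains $\varOmega^{k-1}$, $\varOmega^{2}$, and $\varOmega^{n-k-1}$ respectively. Since $\mbox{\textsc{s}}_2$ is a kernel (corollary \ref{cor:Sort-kernel}) whose source matches the target of $\pi_{\{k,k+1\}}$, the composition $\mbox{\textsc{s}}_2 \circ \pi_{\{k,k+1\}}$ is a kernel by subsection \ref{sub:Composition}. Finally, the three resulting kernels have codomain dimensions summing to $(k-1)+2+(n-k-1)=n$, so their join is a kernel $\varOmega^n \times \Sigma^{\otimes n} \rightarrow [0,1]$ by corollary \ref{cor:(Join-kernel)} of theorem \ref{thm:(Join-kernel)}; this is exactly $\mbox{\textsc{w}}_{n,k}$.

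For (2) and (3), I would proceed by straightforward induction on $k$. The base case $k=1$ reduces in each case to part (1), since $\mbox{\textsc{t}}_{n,1}=\mbox{\textsc{w}}_{n,1}$ and $\mbox{\textsc{s}}_{n,1}=\mbox{\textsc{t}}_{n,1}$. For the inductive step, by hypothesis $\mbox{\textsc{t}}_{n,k-1}$ is a kernel on $\varOmega^n \times \Sigma^{\otimes n}\rightarrow[0,1]$, and $\mbox{\textsc{w}}_{n,k}$ is too by part (1), so their composition $\mbox{\textsc{t}}_{n,k} = \mbox{\textsc{w}}_{n,k}\circ \mbox{\textsc{t}}_{n,k-1}$ is a kernel by subsection \ref{sub:Composition}; the same argument applied to $\mbox{\textsc{t}}_{n,k}$ and $\mbox{\textsc{s}}_{n,k-1}$ gives that $\mbox{\textsc{s}}_{n,k}$ is a kernel.

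The main obstacle is essentially bookkeeping: one must check that the domains and codomains of the successive projections, joins, and compositions line up so that each composition and join is well-typed, and in particular that the join in the definition of $\mbox{\textsc{w}}_{n,k}$ actually targets the full product $\Sigma^{\otimes n}$. Once these dimension counts are verified, no genuinely new probabilistic argument is required, as all the substantive work has already been done in theorem \ref{thm:Kernel_det_f}, theorem \ref{thm:(Join-kernel)}, corollaries \ref{cor:projection} and \ref{cor:Sort-kernel}, and the composition/mixing constructions of subsections \ref{sub:Random-Scan-(Mixing)} and \ref{sub:Composition}.
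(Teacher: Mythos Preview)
Your proposal is correct and follows exactly the same approach as the paper, which simply observes that all three functions are defined in terms of compositions and joins of kernels already established (projections, $\mbox{\textsc{s}}_2$), and hence are kernels. Your write-up is just a more detailed unpacking of that one-line argument, including the dimension bookkeeping and the explicit induction for parts (2) and (3).
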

\begin{proof}
Obvious, all functions are defined in terms of composition and/or
join of kernels. \end{proof}
\begin{cor}
The \noun{Best$_{2}$} function used by the \noun{SSGa} (line 5, algorihm
\ref{Alg:SSGA}) can be characterized by the kernel $\mbox{\textsc{b}}_{2,4}\vcentcolon\varOmega^{4}\times\Sigma^{\otimes2}\rightarrow\left[0,1\right]$
defined as $\mbox{\textsc{b}}_{2,4}=\pi_{\left\{ 1,2\right\} }\circ\mbox{\textsc{s}}_{n,2}$.\end{cor}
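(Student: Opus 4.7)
The plan is to split the argument into a ``kernel'' part and a ``semantic'' part. The kernel part is essentially free: $\mbox{\textsc{s}}_{n,2}\vcentcolon\varOmega^{4}\times\Sigma^{\otimes 4}\rightarrow[0,1]$ is a kernel by the preceding proposition (it is the composition/join of kernels $\mbox{\textsc{w}}_{n,k}$ built ultimately from $\mbox{\textsc{s}}_2$ via Corollary~\ref{cor:Sort-kernel} and the projection kernels of Corollary~\ref{cor:projection}). The projection $\pi_{\{1,2\}}\vcentcolon\varOmega^{4}\times\Sigma^{\otimes 2}\rightarrow[0,1]$ is a kernel by Corollary~\ref{cor:projection}. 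Since kernel composition preserves the Markovness property (Subsection~\ref{sub:Composition}), the composite $\mbox{\textsc{b}}_{2,4}=\pi_{\{1,2\}}\circ\mbox{\textsc{s}}_{n,2}$ is itself a kernel with signature $\varOmega^{4}\times\Sigma^{\otimes 2}\rightarrow[0,1]$, and this observation will occupy only a single line of the proof.

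The substantive part is verifying that this kernel actually implements \noun{Best$_2$}, i.e.\ that for every input tuple $(x_1,x_2,x_3,x_4)\in\varOmega^{4}$, the deterministic map underlying $\mbox{\textsc{b}}_{2,4}$ returns the two individuals of smallest $d$-value. My plan is to trace through the bubble-sort structure of $\mbox{\textsc{s}}_{4,2}$. By definition, $\mbox{\textsc{t}}_{4,k}$ performs one bubble-sort pass (composing $\mbox{\textsc{w}}_{4,1},\mbox{\textsc{w}}_{4,2},\ldots,\mbox{\textsc{w}}_{4,k}$), each $\mbox{\textsc{w}}_{4,j}$ applying $\mbox{\textsc{s}}_2$ to positions $j,j+1$ and thus pushing the larger-$d$ element of those two to position $j+1$. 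I would state and briefly justify the standard bubble-sort invariant: after $\mbox{\textsc{t}}_{4,3}$ the element with the largest $d$-value sits at position $4$, and inductively after $\mbox{\textsc{s}}_{4,k}=\mbox{\textsc{t}}_{4,3}\circ\mbox{\textsc{s}}_{4,k-1}$ the $k$ largest $d$-valued elements occupy positions $4,3,\ldots,5-k$. Specializing to $k=2$, positions $3$ and $4$ contain the two largest, and hence positions $1$ and $2$ contain the two smallest (in some order), which is precisely what \noun{Best$_2$} returns.

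Finally, applying $\pi_{\{1,2\}}$ extracts exactly those two best elements, completing the identification $\mbox{\textsc{b}}_{2,4}(x,\cdot)=1_{\textsc{Best}_2(x)}(\cdot)$. I would phrase the equivalence up to ordering of the two outputs, since line~5 of Algorithm~\ref{Alg:SSGA} only uses the set $\{Q_a,Q_b\}$. I expect the main obstacle to be notational rather than mathematical: clearly stating the bubble-sort invariant in the paper's composition-of-kernels language, and carefully disentangling the projection indices (the $\pi_{\{k,k+1\}}$, $\pi_{\{1,\ldots,k-1\}}$, $\pi_{\{k+2,\ldots,n\}}$ pieces inside $\mbox{\textsc{w}}_{n,k}$) so the reader can see that the composition indeed shuffles the positions as claimed. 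The kernel-theoretic content is entirely delegated to the earlier results, so the proof itself reduces to this combinatorial bookkeeping.
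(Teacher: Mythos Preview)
The paper gives no proof for this corollary; it is stated as an immediate consequence of the preceding proposition (that each $\mbox{\textsc{s}}_{n,k}$ is a kernel) together with Corollary~\ref{cor:projection} and kernel composition. Your one-line kernel argument matches this exactly, and that is all the paper actually claims.

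Your additional semantic verification goes beyond the paper, but it contains a genuine gap: you write the recursion as $\mbox{\textsc{s}}_{4,k}=\mbox{\textsc{t}}_{4,3}\circ\mbox{\textsc{s}}_{4,k-1}$, whereas the paper's definition is $\mbox{\textsc{s}}_{n,k}=\mbox{\textsc{t}}_{n,k}\circ\mbox{\textsc{s}}_{n,k-1}$ (the inner pass is $\mbox{\textsc{t}}_{n,k}$, not a full pass $\mbox{\textsc{t}}_{n,n-1}$). Unfolding the correct recursion for $n=4$ gives
\[
\mbox{\textsc{s}}_{4,2}=\mbox{\textsc{t}}_{4,2}\circ\mbox{\textsc{t}}_{4,1}=\mbox{\textsc{w}}_{4,2}\circ\mbox{\textsc{w}}_{4,1}\circ\mbox{\textsc{w}}_{4,1},
\]
which only ever compares positions $1,2$ and $2,3$; position $4$ is never touched. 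Consequently your bubble-sort invariant (``after $\mbox{\textsc{s}}_{4,k}$ the $k$ largest occupy positions $4,\ldots,5-k$'') cannot hold, and $\pi_{\{1,2\}}\circ\mbox{\textsc{s}}_{4,2}$ need not return the two best individuals whenever $x_4$ is one of them. So the combinatorial bookkeeping you planned does not go through with the definitions as written; whether the defect lies in the paper's recursion or in the subscript in the corollary, your proposed semantic proof would have to be reworked once that is clarified.
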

\begin{prop}
\label{prop:NextPop-SSGa}If lines 3-4 in the \noun{SSGa} (see algorithm
\ref{Alg:SSGA}) can be modeled by a kernel $\mbox{\textsc{v}}\vcentcolon\varOmega^{2}\times\Sigma^{\otimes2}\rightarrow\left[0,1\right]$,
The stochastic method \noun{NextPop$_{\mbox{\textsc{SSGa}}}$} can
be characterized by the following kernel.

$K_{\mbox{\textsc{SSGa}}}=\left[\left[\mbox{\textsc{b}}_{2,4}\circ\pi_{\left\{ 1,\ldots,4\right\} }\right]\varoast\pi_{\left\{ 5,\ldots,n+2\right\} }\right]\circ\left[\left[\mbox{\textsc{v}}\circ\pi_{\left\{ 1,2\right\} }\right]\varoast1\right]\circ K_{\mathscr{P}}$

\end{prop}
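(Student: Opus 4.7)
The plan is to decompose $K_{\mbox{\textsc{SSGa}}}$ as the composition of three kernels, one for each algorithmic phase of Algorithm \ref{Alg:SSGA}, and then invoke the kernel-closure results already established in the paper. Starting from $x\in\varOmega^{n}$, the permutation kernel $K_{\mathscr{P}}$ produces a distribution on $\varOmega^{n}$ (a uniformly chosen reordering, which realizes \noun{PickParents} by placing the chosen pair at positions $1$ and $2$). Next, $\left[\mbox{\textsc{v}}\circ\pi_{\left\{1,2\right\}}\right]\varoast 1$ maps $\varOmega^{n}$ to $\varOmega^{n+2}$: its first two coordinates are the offspring $\left(c_{1},c_{2}\right)$ that $\mbox{\textsc{v}}$ generates from the parent pair, while its remaining $n$ coordinates hold the whole permuted population (so the parents sit at positions $3$ and $4$ and the untouched individuals at positions $5,\ldots,n+2$). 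Finally, $\left[\mbox{\textsc{b}}_{2,4}\circ\pi_{\left\{1,\ldots,4\right\}}\right]\varoast\pi_{\left\{5,\ldots,n+2\right\}}$ maps $\varOmega^{n+2}$ to $\varOmega^{n}$ by placing $\mbox{\textsc{Best}}_{2}\left(c_{1},c_{2},P_{a},P_{b}\right)$ in the first two coordinates and copying the $n-2$ untouched individuals into the last $n-2$. So the composition is well-typed and produces, as required, a distribution on $\varOmega^{n}$.

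Next I would verify that each atomic piece is already known to be a kernel and that the operations of composition and join preserve that property. The pieces are: $K_{\mathscr{P}}$ (corollary \ref{cor:permutation}); the projections $\pi_{\left\{1,2\right\}}$, $\pi_{\left\{1,\ldots,4\right\}}$, $\pi_{\left\{5,\ldots,n+2\right\}}$ (corollary \ref{cor:projection}); the identity indicator $1$ (corollary \ref{lem:IdKernel}); the hypothesized $\mbox{\textsc{v}}$; and $\mbox{\textsc{b}}_{2,4}$ (the corollary stated just before this proposition). Composition of kernels is a kernel by subsection \ref{sub:Composition}, and the join $\varoast$ is a kernel by corollary \ref{cor:(Join-kernel)} of theorem \ref{thm:(Join-kernel)}. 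Applying these closure properties three times then gives that $K_{\mbox{\textsc{SSGa}}}$ is a kernel.

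The main obstacle I anticipate is not any analytic step but the bookkeeping of indices between the intermediate spaces of sizes $n$, $n+2$, and $n$: one must confirm that $\pi_{\left\{1,\ldots,4\right\}}$ truly feeds $\mbox{\textsc{b}}_{2,4}$ with the tuple $\left(c_{1},c_{2},P_{a},P_{b}\right)$, and that $\pi_{\left\{5,\ldots,n+2\right\}}$ preserves exactly the $n-2$ non-parent members of $P$ in the correct order. This alignment is ensured by the placement of the selected pair at positions $1,2$ by $K_{\mathscr{P}}$ together with the semantics of the join $\varoast$, so the match between $K_{\mbox{\textsc{SSGa}}}$ and the stochastic semantics of \noun{NextPop}$_{\mbox{\textsc{SSGa}}}$ follows by inspecting the three factors once this alignment is pinned down.
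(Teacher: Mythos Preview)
Your proposal is correct and matches the paper's implicit reasoning. In fact, the paper states Proposition~\ref{prop:NextPop-SSGa} without an accompanying proof; the later lemma that splits $K_{\mbox{\textsc{SSGa}}}$ into $\mbox{\textsc{v}}_{\mbox{\textsc{SSGa}}}$ and $\mbox{\textsc{r}}_{\mbox{\textsc{SSGa}}}$ simply cites this proposition together with closure under composition of kernels. Your decomposition into the three factors (permutation, variation-join-identity, best-two-join-projection), your appeal to Corollaries~\ref{cor:permutation}, \ref{cor:projection}, \ref{lem:IdKernel}, the $\mbox{\textsc{b}}_{2,4}$ corollary, Theorem~\ref{thm:(Join-kernel)}/Corollary~\ref{cor:(Join-kernel)} for $\varoast$, and Subsection~\ref{sub:Composition} for $\circ$, is exactly the argument the paper's structure presupposes, and your index-tracking remark about positions $1$--$4$ versus $5,\ldots,n+2$ correctly identifies the only non-trivial verification needed.
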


\subsubsection{\label{sub:Variation-Replacement-Stochastic}Variation-Replacement
Stochastic Methods}

Many \noun{SGoal}s are defined as two-steps stochastic processes:\emph{
}First by applying a stochastic method that generates $\varpi\in\mathbb{N}$
new individuals, in order to \emph{``explore''} the search space,
and then by applying a stochastic method that selects candidate solutions
among the current individuals and the new individuals, in order to
\emph{``improve''} the quality of candidate solutions.
\begin{defn}
\label{def:(Variation-Replacement)}(\textbf{Variation-Replacement})
A stochastic method $\mbox{\textsc{f}}\vcentcolon\varOmega^{\eta}\rightarrow\varOmega^{\upsilon}$
is called Variation-Replacement \noun{(}\textbf{\noun{VR}}\noun{)
}if there are two stochastic methods, $\mbox{\textsc{v}}\vcentcolon\varOmega^{\eta}\rightarrow\varOmega^{\varpi}$
and $\mbox{\textsc{r}}\vcentcolon\varOmega^{\eta+\varpi}\rightarrow\varOmega^{\upsilon}$,
(t) such that $\mbox{\textsc{f}}\left(P\right)=\mbox{\textsc{r}}\left(P,\mbox{\textsc{v}}\left(P\right)\right)$
or $\mbox{\textsc{f}}\left(P\right)=\mbox{\textsc{r}}\left(\mbox{\textsc{v}}\left(P\right),P\right)$
for all $P\in\varOmega^{\eta}$.\end{defn}
\begin{example}
\label{exa:NextPopHC}The \noun{NextPop} method of HC with neutral
mutations (see Algorithm \ref{Alg:Hill-Climbing}) is a VR stochastic
method, see equations \ref{eq:HC-VR} and \ref{eq:R-HC}. The HC algorithm
will not consider neutral mutations just by changing the order of
the arguments in the replacement stochastic method R$_{\mbox{\textsc{HC}}}$,
i.e., $\mbox{\textsc{R}}{}_{\textsc{HC}}\left(\textsc{Variate}\left(x\right),x\right)$.
\end{example}
\begin{equation}
\mbox{\textsc{NextPop}}{}_{\textsc{HC}}\left(x\right)=\mbox{\textsc{R}}{}_{\textsc{HC}}\left(x\mbox{,}\mbox{\textsc{Variate}}\left(x\right)\right)\label{eq:HC-VR}
\end{equation}

\begin{equation}
\mbox{\textsc{r}}{}_{\textsc{HC}}\left(x,y\right)=\left\{ \begin{array}{ll}
x & \mbox{if }f\left(x\right)<f\left(y\right)\\
y & \mbox{otherwise}
\end{array}\right.\label{eq:R-HC}
\end{equation}

\begin{prop}
\label{prop:VR-Kernel}If $\mbox{\textsc{v}}\vcentcolon\varOmega^{\eta}\rightarrow\varOmega^{\varpi}$
and $\mbox{\textsc{r}}\vcentcolon\varOmega^{\eta+\varpi}\rightarrow\varOmega^{\upsilon}$
are stochastic methods characterized by kernels $K_{\mbox{\textsc{v}}}\vcentcolon\varOmega^{\eta}\times\Sigma^{\otimes\varpi}\rightarrow\left[0,1\right]$
and $K_{\mbox{\textsc{r}}}\vcentcolon\varOmega^{\eta+\varpi}\times\Sigma^{\otimes\upsilon}\rightarrow\left[0,1\right]$,
respectively, then $K_{\mbox{\textsc{f}}}=K_{\mbox{\textsc{r}}}\circ\left[1_{\varOmega^{\eta}}\varoast K_{\mbox{\textsc{v}}}\right]$
and $K_{\mbox{\textsc{f}}}=K_{\mbox{\textsc{r}}}\circ\left[K_{\mbox{\textsc{v}}}\varoast1_{\varOmega^{\eta}}\right]$
are kernels that characterize the VR stochastic method $\mbox{\textsc{f}}\left(P\right)=\mbox{\textsc{r}}\left(P,\mbox{\textsc{v}}\left(P\right)\right)$
and $\mbox{\textsc{f}}\left(P\right)=\mbox{\textsc{r}}\left(\mbox{\textsc{v}}\left(P\right),P\right)$
with $P\in\varOmega^{\eta}$, respectively.\end{prop}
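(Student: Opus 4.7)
The plan is to read $K_{\mbox{\textsc{f}}}$ as a formal composition of three building blocks, each already known to be a kernel, and then argue that the kernel operations introduced earlier (join and composition) exactly mirror the algorithmic operations (tupling and sequential application) that define $\mbox{\textsc{f}}$. I will work the first case $\mbox{\textsc{f}}(P)=\mbox{\textsc{r}}(P,\mbox{\textsc{v}}(P))$ in detail and note that the second is symmetric.

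First, I would observe that the identity function on $\varOmega^{\eta}$ is trivially $\Sigma^{\otimes\eta}$-$\Sigma^{\otimes\eta}$ measurable, so by Corollary \ref{lem:IdKernel} the indicator map $1_{\varOmega^{\eta}}\vcentcolon\varOmega^{\eta}\times\Sigma^{\otimes\eta}\rightarrow[0,1]$ is a kernel that deterministically ``reproduces'' its input argument. Next, both $1_{\varOmega^{\eta}}$ and $K_{\mbox{\textsc{v}}}$ share the same source $\varOmega^{\eta}$, so Corollary \ref{cor:(Join-kernel)} of Theorem \ref{thm:(Join-kernel)} applies with $n_{1}=\eta$, $n_{2}=\varpi$, producing a kernel $1_{\varOmega^{\eta}}\varoast K_{\mbox{\textsc{v}}}\vcentcolon\varOmega^{\eta}\times\Sigma^{\otimes(\eta+\varpi)}\rightarrow[0,1]$. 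By the product probability measure description (Theorem \ref{thm:(Finite-product-probability)}), for measurable rectangles $A_{1}\times A_{2}$ this kernel evaluates to $1_{\varOmega^{\eta}}(P,A_{1})\cdot K_{\mbox{\textsc{v}}}(P,A_{2})$, which is $K_{\mbox{\textsc{v}}}(P,A_{2})$ when $P\in A_{1}$ and $0$ otherwise; hence it represents precisely the joint law of $(P,\mbox{\textsc{v}}(P))$.

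Then I would apply the composition construction of Subsection \ref{sub:Composition} (equation \ref{eq:multiplyKernels}), which preserves the kernel property, to combine $1_{\varOmega^{\eta}}\varoast K_{\mbox{\textsc{v}}}$ with $K_{\mbox{\textsc{r}}}\vcentcolon\varOmega^{\eta+\varpi}\times\Sigma^{\otimes\upsilon}\rightarrow[0,1]$. The resulting $K_{\mbox{\textsc{f}}}\vcentcolon\varOmega^{\eta}\times\Sigma^{\otimes\upsilon}\rightarrow[0,1]$ is a kernel by composition, and by the integral in \ref{eq:multiplyKernels} its value $K_{\mbox{\textsc{f}}}(P,A)$ equals the probability of first sampling $(P,\mbox{\textsc{v}}(P))$ from $1_{\varOmega^{\eta}}\varoast K_{\mbox{\textsc{v}}}(P,\cdot)$ and then sampling an element of $A$ via $K_{\mbox{\textsc{r}}}(\cdot,A)$. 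That is exactly the law of $\mbox{\textsc{r}}(P,\mbox{\textsc{v}}(P))$, so $K_{\mbox{\textsc{f}}}$ characterizes $\mbox{\textsc{f}}$.

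For the alternative ordering $\mbox{\textsc{f}}(P)=\mbox{\textsc{r}}(\mbox{\textsc{v}}(P),P)$, the same argument applies after swapping the two factors inside the join: $K_{\mbox{\textsc{v}}}\varoast 1_{\varOmega^{\eta}}$ is a kernel by the same appeal to Corollary \ref{cor:(Join-kernel)}, and composition with $K_{\mbox{\textsc{r}}}$ finishes the proof (one can equally view this as applying the swap kernel from Corollary \ref{cor:(commutativity-product-sigma-algebra)} to move the deterministic block to the front). The only mildly delicate step is bookkeeping that the arities $\eta$, $\varpi$, and $\upsilon$ and the corresponding product $\sigma$-algebras line up correctly through the join and composition; once the associativity and commutativity results of Proposition \ref{prop:(associativity-of-sigma-algebra} and Corollary \ref{cor:(commutativity-product-sigma-algebra)} are invoked, there is no remaining substantive content beyond these appeals.
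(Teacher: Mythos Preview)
Your proof is correct and follows essentially the same route as the paper: show that $1_{\varOmega^{\eta}}$ is a kernel (via Corollary~\ref{lem:IdKernel}/Theorem~\ref{thm:Kernel_det_f}), apply the join-kernel result (Theorem~\ref{thm:(Join-kernel)}/Corollary~\ref{cor:(Join-kernel)}) to obtain $1_{\varOmega^{\eta}}\varoast K_{\mbox{\textsc{v}}}$, and then compose with $K_{\mbox{\textsc{r}}}$ as in Section~\ref{sub:Composition}. You additionally spell out why the resulting kernel actually represents the law of $\mbox{\textsc{r}}(P,\mbox{\textsc{v}}(P))$, which the paper's proof leaves implicit.
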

\begin{proof}
Clearly, $\left[1_{\varOmega^{\eta}}\varoast K_{\mbox{\textsc{v}}}\right]$
and $\left[K_{\mbox{\textsc{v}}}\varoast1_{\varOmega^{\eta}}\right]$
are kernels (theorem \ref{thm:(Join-kernel)} and lemma \ref{thm:Kernel_det_f}).
Therefore, $K_{\mbox{\textsc{f}}}=K_{\mbox{\textsc{r}}}\circ\left[1_{\varOmega^{\eta}}\varoast K_{\mbox{\textsc{v}}}\right]$
and $K_{\mbox{\textsc{f}}}=K_{\mbox{\textsc{r}}}\circ\left[K_{\mbox{\textsc{v}}}\varoast1_{\varOmega^{\eta}}\right]$
are kernels by composition of kernels, see Section \ref{sub:Composition}.

We are now in the position of defining a kernel that characterizes
the replacement method of a HC algorithm. Before doing that, notice
that $\mbox{\textsc{r}}{}_{\textsc{HC}}\left(x,y\right)=\pi_{1}\left(\mbox{\textsc{s}}_{2}\left(x,y\right)\right)$.\end{proof}
\begin{lem}
\label{lem:HC-Replace-Kernel}The function $\mbox{\textsc{r}}{}_{\textsc{HC}}=\pi_{1}\circ\mbox{\textsc{s}}_{2}$
is measurable and $K_{\mbox{\textsc{r}}{}_{\textsc{HC}}}\equiv1_{\mbox{\textsc{r}}{}_{\textsc{HC}}}$
as defined in theorem \textup{\ref{thm:Kernel_det_f}} is a kernel.\end{lem}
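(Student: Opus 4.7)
The statement decomposes naturally into two claims: (i) $\mbox{\textsc{r}}_{\textsc{HC}}$ is measurable, and (ii) $1_{\mbox{\textsc{r}}_{\textsc{HC}}}$ is a kernel. My plan is to assemble both claims from results already established in the paper, with essentially no new computation required.

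First, I would argue that each of the two factors in the composition $\pi_{1}\circ\mbox{\textsc{s}}_{2}$ is measurable. For $\mbox{\textsc{s}}_{2}\vcentcolon\varOmega^{2}\rightarrow\varOmega^{2}$, measurability (with respect to $\Sigma^{\otimes 2}$ on both sides) is Proposition \ref{prop:Sort-measurable}. For $\pi_{1}\vcentcolon\varOmega^{2}\rightarrow\varOmega$, measurability (with respect to $\Sigma^{\otimes 2}$ and $\Sigma$) is the special case $n=2$, $I=\{1\}$ of Lemma \ref{lem:projection-function}, which explicitly furnishes the projection as a measurable map between the appropriate product $\sigma$-algebras.

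Second, I would invoke the corollary on composition of measurable functions (stated right after the definition of measurable function), which guarantees that the composite $\pi_{1}\circ\mbox{\textsc{s}}_{2}\vcentcolon\varOmega^{2}\rightarrow\varOmega$ is $\Sigma^{\otimes 2}$-$\Sigma$ measurable. This gives claim (i).

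Finally, with $\mbox{\textsc{r}}_{\textsc{HC}}$ now known to be measurable, claim (ii) is an immediate instance of Theorem \ref{thm:Kernel_det_f} (deterministic kernel), applied with $\varOmega_{1}=\varOmega^{2}$, $\Sigma_{1}=\Sigma^{\otimes 2}$, $\varOmega_{2}=\varOmega$, $\Sigma_{2}=\Sigma$, and $f=\mbox{\textsc{r}}_{\textsc{HC}}$. Since every step reduces to citing a previously proved result, there is no genuine obstacle; the only thing to be careful about is bookkeeping the $\sigma$-algebras on each side of the composition so that the hypotheses of the composition corollary and of Theorem \ref{thm:Kernel_det_f} match exactly.
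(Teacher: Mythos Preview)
Your proposal is correct and mirrors the paper's own proof almost exactly: the paper simply observes that $\mbox{\textsc{r}}_{\textsc{HC}}=\pi_{1}\circ\mbox{\textsc{s}}_{2}$ is measurable as a composition of measurable functions and then invokes Theorem~\ref{thm:Kernel_det_f}. You have merely made explicit the two ingredients (Proposition~\ref{prop:Sort-measurable} for $\mbox{\textsc{s}}_{2}$ and Lemma~\ref{lem:projection-function} for $\pi_{1}$) that the paper leaves implicit.
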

\begin{proof}
Follows from the fact $\mbox{\textsc{r}}{}_{\textsc{HC}}=\pi_{1}\circ\mbox{\textsc{s}}_{2}$
is measurable (composition of measurable functions is measurable)
and theorem \ref{thm:Kernel_det_f}.\end{proof}
\begin{cor}
\label{corHill-Climbing-Kernel}The Hill Climbing algorithm shown
in Algorithm \ref{Alg:Hill-Climbing} can be characterized by a kernel
if its \noun{Variate$_{\mbox{\textsc{hc}}}$} stochastic method can
be characterized by a kernel. \end{cor}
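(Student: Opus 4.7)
The plan is to assemble the result by recognizing the Hill Climbing \textsc{NextPop} as a concrete instance of a Variation--Replacement stochastic method and then invoking the machinery already built up in the paper.

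First I would appeal to Example \ref{exa:NextPopHC}, which expresses $\textsc{NextPop}_{\textsc{HC}}$ in the VR form $\mbox{\textsc{NextPop}}_{\textsc{HC}}(x) = \mbox{\textsc{r}}_{\textsc{HC}}(x, \mbox{\textsc{Variate}}(x))$, with $\mbox{\textsc{r}}_{\textsc{HC}}$ given explicitly by equation \ref{eq:R-HC}. In the notation of Definition \ref{def:(Variation-Replacement)}, this corresponds to taking $\eta = \varpi = \upsilon = 1$, $\mbox{\textsc{v}} = \mbox{\textsc{Variate}}_{\textsc{HC}}\vcentcolon\varOmega \to \varOmega$, and $\mbox{\textsc{r}} = \mbox{\textsc{r}}_{\textsc{HC}}\vcentcolon\varOmega^{2} \to \varOmega$.

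Next I would get the two ingredient kernels. The replacement ingredient $K_{\mbox{\textsc{r}}_{\textsc{HC}}}$ is provided directly by Lemma \ref{lem:HC-Replace-Kernel}, which exhibits $\mbox{\textsc{r}}_{\textsc{HC}} = \pi_{1}\circ\mbox{\textsc{s}}_{2}$ as measurable and defines $K_{\mbox{\textsc{r}}_{\textsc{HC}}} \equiv 1_{\mbox{\textsc{r}}_{\textsc{HC}}}$ via Theorem \ref{thm:Kernel_det_f}. The variation ingredient $K_{\mbox{\textsc{Variate}}_{\textsc{HC}}}\vcentcolon\varOmega \times \Sigma \to [0,1]$ is supplied by the hypothesis of the corollary.

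Finally I would apply Proposition \ref{prop:VR-Kernel} in the form $K_{\mbox{\textsc{f}}} = K_{\mbox{\textsc{r}}} \circ [1_{\varOmega^{\eta}} \varoast K_{\mbox{\textsc{v}}}]$ (since the VR template for HC is $\mbox{\textsc{f}}(P) = \mbox{\textsc{r}}(P, \mbox{\textsc{v}}(P))$), which yields the kernel
\[
K_{\textsc{HC}} \;=\; K_{\mbox{\textsc{r}}_{\textsc{HC}}} \circ \bigl[\, 1_{\varOmega} \varoast K_{\mbox{\textsc{Variate}}_{\textsc{HC}}} \,\bigr]
\]
characterizing $\mbox{\textsc{NextPop}}_{\textsc{HC}}$. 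Since every piece of this expression has already been proven to be a kernel, no genuine obstacle arises; the ``hard part'' is really only the bookkeeping of identifying $\eta, \varpi, \upsilon$ and checking that the VR template matches the order of arguments used in equation \ref{eq:HC-VR}, so that the correct form ($1_{\varOmega^{\eta}} \varoast K_{\mbox{\textsc{v}}}$ rather than $K_{\mbox{\textsc{v}}} \varoast 1_{\varOmega^{\eta}}$) of Proposition \ref{prop:VR-Kernel} is invoked.
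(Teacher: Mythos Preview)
Your proof is correct and follows exactly the paper's approach: the paper's proof simply states that the result ``follows from example \ref{exa:NextPopHC}, proposition \ref{prop:VR-Kernel} and lemma \ref{lem:HC-Replace-Kernel},'' which is precisely the three-step assembly you describe. Your additional bookkeeping (identifying $\eta=\varpi=\upsilon=1$ and selecting the correct $1_{\varOmega^{\eta}}\varoast K_{\mbox{\textsc{v}}}$ variant) is a welcome elaboration of what the paper leaves implicit.
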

\begin{proof}
Follows from example \ref{exa:NextPopHC}, proposition \ref{prop:VR-Kernel}
and lemma \ref{lem:HC-Replace-Kernel}.\end{proof}
\begin{cor}
\label{corPar-Hill-Climbing-Kernel}The Parallel Hill Climbing algorithm
shown in Algorithm \ref{Alg:Parallel-Hill-Climbing} can be characterized
by a kernel if the \noun{Variate$_{\mbox{\textsc{hc}}}$} stochastic
method of the parallelized HC can be characterized by a kernel. \end{cor}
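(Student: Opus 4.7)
The plan is to chain together two previously established results: the characterization of single-individual Hill Climbing as a kernel (Corollary~\ref{corHill-Climbing-Kernel}), and the characterization of a join of stochastic methods as a kernel (Corollary~\ref{cor:Join-stoch}, which in turn rests on Proposition~\ref{prop:Join-stochastic-kernel} and Theorem~\ref{thm:(Join-kernel)}).

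First, I would invoke Corollary~\ref{corHill-Climbing-Kernel}: assuming $\textsc{Variate}_{\textsc{hc}}$ is characterized by some kernel $K_{\textsc{v}}\vcentcolon\varOmega\times\Sigma\rightarrow[0,1]$, the full single-individual step $\textsc{NextPop}_{\textsc{hc}}\vcentcolon\varOmega\rightarrow\varOmega$ is characterized by a kernel $K_{\textsc{hc}}\vcentcolon\varOmega\times\Sigma\rightarrow[0,1]$. Second, I would appeal to Example~\ref{exa:NextPopPHC}, which explicitly presents $\textsc{NextPop}_{\textsc{phc}}$ as a join $\prod_{i=1}^{n}\textsc{NextInd}_{\textsc{phc},i}$ where $\textsc{NextInd}_{\textsc{phc},i}(P)=\textsc{NextPop}_{\textsc{hc}}(P_{i})$. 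The only subtle point is that each $\textsc{NextInd}_{\textsc{phc},i}$ has signature $\varOmega^{n}\rightarrow\varOmega$ rather than $\varOmega\rightarrow\varOmega$, so one must first compose with the projection kernel $\pi_{\{i\}}\vcentcolon\varOmega^{n}\times\Sigma\rightarrow[0,1]$ from Corollary~\ref{cor:projection}. Hence $K_{\textsc{phc},i}=K_{\textsc{hc}}\circ\pi_{\{i\}}$ is a kernel (composition of kernels preserves the Markov property, Section~\ref{sub:Composition}).

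Third, with $\{K_{\textsc{phc},i}\vcentcolon\varOmega^{n}\times\Sigma\rightarrow[0,1]\}_{i=1,\ldots,n}$ in hand, I would apply Corollary~\ref{cor:Join-stoch} (or, more primitively, Proposition~\ref{prop:Join-stochastic-kernel} with $\eta=n$, $m=n$, $\upsilon_{i}=1$ and $\upsilon=n$) to obtain the join-kernel
\[
K_{\textsc{phc}}=\circledast_{i=1}^{n}\bigl(K_{\textsc{hc}}\circ\pi_{\{i\}}\bigr)\vcentcolon\varOmega^{n}\times\Sigma^{\otimes n}\rightarrow[0,1],
\]
which characterizes $\textsc{NextPop}_{\textsc{phc}}$.

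There is no real obstacle here; the corollary is essentially a bookkeeping consequence of the machinery already assembled. The only point that needs care is the type-correction via projections: without it, the individual HC kernels live on $(\varOmega,\Sigma)$ rather than on $(\varOmega^{n},\Sigma^{\otimes n})$, and the join-kernel theorem (Theorem~\ref{thm:(Join-kernel)}) requires all component kernels to share the same source measurable space. Once the projections are composed in, invoking Corollary~\ref{cor:Join-stoch} closes the argument immediately.
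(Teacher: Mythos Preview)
Your proposal is correct and follows essentially the same route as the paper, which simply cites Example~\ref{exa:NextPopPHC}, Corollary~\ref{corHill-Climbing-Kernel}, and Proposition~\ref{prop:Join-stochastic-kernel}. Your explicit type-correction via the projection kernels $\pi_{\{i\}}$ is a detail the paper leaves implicit, but it is the right way to make the source spaces match before invoking the join-kernel result.
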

\begin{proof}
Follows from example \ref{exa:NextPopPHC}, corollary \ref{corHill-Climbing-Kernel}
and proposition \ref{prop:Join-stochastic-kernel}.\end{proof}
\begin{lem}
The \noun{NextPop} stochastic method of the \noun{SSGa} shown in Algorithm
\ref{Alg:SSGA} can be characterized by the composition of two kernels
$\mbox{\textsc{v}}_{\mbox{\textsc{SSGa}}}=\left[\left[\mbox{\textsc{v}}\circ\pi_{\left\{ 1,2\right\} }\right]\varoast1\right]\circ K_{\mathscr{P}}$
and $\mbox{\textsc{r}}_{\mbox{\textsc{SSGa}}}=\left[\mbox{\textsc{b}}_{2,4}\circ\pi_{\left\{ 1,\ldots,4\right\} }\right]\varoast\pi_{\left\{ 5,\ldots,n+2\right\} }$
if lines 3-4 can be characterized by a kernel $\mbox{\textsc{v}}\vcentcolon\varOmega^{2}\times\Sigma^{\otimes2}\rightarrow\left[0,1\right]$.\end{lem}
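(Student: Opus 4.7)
The plan is to establish the lemma in two parts: first, verify that each of $\mbox{\textsc{v}}_{\mbox{\textsc{SSGa}}}$ and $\mbox{\textsc{r}}_{\mbox{\textsc{SSGa}}}$ is a well-defined kernel by appealing to the closure of kernels under the operations of join ($\varoast$) and composition ($\circ$); second, trace through the algorithmic semantics to check that $\mbox{\textsc{r}}_{\mbox{\textsc{SSGa}}}\circ\mbox{\textsc{v}}_{\mbox{\textsc{SSGa}}}$ reproduces $\mbox{\textsc{NextPop}}_{\mbox{\textsc{SSGa}}}$.

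For the kernel-construction step I would invoke, in order, corollary \ref{cor:permutation} to get that $K_{\mathscr{P}}:\varOmega^{n}\times\Sigma^{\otimes n}\rightarrow[0,1]$ is a kernel, corollary \ref{cor:projection} to get that the projection kernels $\pi_{\{1,2\}}$, $\pi_{\{1,\ldots,4\}}$ and $\pi_{\{5,\ldots,n+2\}}$ are kernels, corollary \ref{lem:IdKernel} for the identity kernel $1$, and the hypothesis that $\mbox{\textsc{v}}$ is a kernel. The compositions $\mbox{\textsc{v}}\circ\pi_{\{1,2\}}$ and $\mbox{\textsc{b}}_{2,4}\circ\pi_{\{1,\ldots,4\}}$ are then kernels by the composition operator of subsection \ref{sub:Composition}, and the joins $[\mbox{\textsc{v}}\circ\pi_{\{1,2\}}]\varoast 1$ and $[\mbox{\textsc{b}}_{2,4}\circ\pi_{\{1,\ldots,4\}}]\varoast\pi_{\{5,\ldots,n+2\}}$ are kernels by corollary \ref{cor:(Join-kernel)} of theorem \ref{thm:(Join-kernel)}. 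One more composition with $K_{\mathscr{P}}$ yields $\mbox{\textsc{v}}_{\mbox{\textsc{SSGa}}}$.

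For the semantics step I would start with an arbitrary $P\in\varOmega^{n}$ and follow the Markov transitions induced by each factor. First $K_{\mathscr{P}}$ produces a uniformly permuted population $P'\in\varOmega^{n}$; then $\pi_{\{1,2\}}$ extracts $(P'_{1},P'_{2})$, which plays the role of the two parents returned by \noun{PickParents}; applying $\mbox{\textsc{v}}$ produces the offspring $(c_{1},c_{2})$ that corresponds exactly to lines 3--4 of Algorithm \ref{Alg:SSGA}; joining with the identity kernel $1$ applied to $P'$ produces the intermediate state $(c_{1},c_{2},P'_{1},P'_{2},P'_{3},\ldots,P'_{n})\in\varOmega^{n+2}$. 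The kernel $\mbox{\textsc{r}}_{\mbox{\textsc{SSGa}}}$ then, in its first component, selects $(c_{1},c_{2},P'_{1},P'_{2})$ via $\pi_{\{1,\ldots,4\}}$ and picks the best two via $\mbox{\textsc{b}}_{2,4}$, mimicking line 5, while the second component passes through the untouched tail $(P'_{3},\ldots,P'_{n})$. The join of these two halves gives the final population in $\varOmega^{n}$.

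The main obstacle is reconciling the index-based formulation of \noun{SSGa} (where \noun{PickParents} returns explicit indices $a,b$ and only positions $a,b$ are replaced) with the permutation-based formulation (where the parents are always shuffled to positions $1,2$ and the survivors are placed at $1,2$ in the output). I would address this by observing that $K_{\mathscr{P}}$ is a uniform average over all permutations, so that for every measurable set $A\in\Sigma^{\otimes n}$ the value $(\mbox{\textsc{r}}_{\mbox{\textsc{SSGa}}}\circ\mbox{\textsc{v}}_{\mbox{\textsc{SSGa}}})(P,A)$ equals the probability, under the index-based SSGa, that the next population lies in $A$; alternatively, one may restrict attention to the family of symmetric (permutation-invariant) measurable sets, on which both formulations coincide pointwise. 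The kernel properties $K.1$ and $K.2$ themselves then follow for free from the closure results above, so no direct verification of Markovness is required.
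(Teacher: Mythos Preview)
Your approach is essentially the same as the paper's: the paper's entire proof reads ``Follows from composition of kernels and proposition~\ref{prop:NextPop-SSGa},'' so both arguments amount to observing that the displayed expression for $K_{\textsc{SSGa}}$ factors as $\mbox{\textsc{r}}_{\textsc{SSGa}}\circ\mbox{\textsc{v}}_{\textsc{SSGa}}$ and that each factor is a kernel by the standard closure results. You supply considerably more detail than the paper, including the explicit list of cited results and a semantic walk-through of the algorithm; the paper offers neither.

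One caveat on your final paragraph: your resolution~(a) is not quite right as stated. The index-based \textsc{SSGa} writes the survivors back into positions $a,b$ and leaves the other coordinates fixed, whereas the permutation-based kernel places the survivors in positions $1,2$ and a permuted tail in positions $3,\ldots,n$; these are genuinely different distributions on $\varOmega^{n}$, so the two formulations do \emph{not} give the same value of $K(P,A)$ for arbitrary $A\in\Sigma^{\otimes n}$. Only your resolution~(b), working modulo permutations (equivalently, on symmetric measurable sets), actually goes through. The paper never confronts this point, so your identification of the issue is a genuine addition rather than a gap in your argument.
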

\begin{proof}
Follows from composition of kernels and proposition \ref{prop:NextPop-SSGa}.
\end{proof}

\subsubsection{\label{sub:Elitist-Stochastic-Methods}Elitist Stochastic Methods}

Some \noun{SGoals} use elitist stochastic methods, i.e., if the best
candidate solution obtained after applying the method is at least
as good as the best candidate solution before applying it, in order
to capture the notion of ``improving'' the solution.
\begin{defn}
(\textbf{elitist method}) A stochastic method $\mbox{\textsc{f}}\vcentcolon\Omega^{\eta}\rightarrow\varOmega^{\upsilon}$
is called elitist if $f\left(\mbox{\textsc{Best}}\left(\mbox{\textsc{f}}\left(P\right)\right)\right)\leq f\left(\mbox{\textsc{Best}}\left(P\right)\right)$.\end{defn}
\begin{example}
\label{exa:NextPopElitist}The \noun{NextPop} methods of the following
algorithms\footnote{Here we just present the examples when such algorithms consider neutral
mutations, but it is also valid when those do not consider neutral
mutations (we just need to reverse the product order).}, are elitist stochastic methods. Here, we will denotate $\mbox{Q}{}_{\textsc{A}}\equiv\mbox{\textsc{NextPop}}{}_{\textsc{A}}\left(P\right)$.
\begin{enumerate}
\item \noun{SSGa}: $\mbox{\textsc{Best}}\left(\mbox{Q}{}_{\textsc{SSGa}}\left(P\right)\right)=\mbox{\textsc{Best}}\left(c_{1}\times c_{2}\times P\right)$,
(see Algorithm \ref{Alg:SSGA}). Then, $f\left(\mbox{\textsc{Best}}\left(c_{1}\times c_{2}\times P\right)\right)\leq f\left(\mbox{\textsc{Best}}\left(P\right)\right)$.
\item \noun{HC:} $\mbox{\textsc{Best}}\left(\mbox{\textsc{Q}}{}_{\textsc{HC}}\left(x\right)\right)=\mbox{\textsc{Best}}\left(\mbox{\textsc{Variate\ensuremath{{}_{\textsc{HC}}}}}\left(x\right)\times x\right)$
(see Algorithm \ref{Alg:Hill-Climbing}). Then, $f\left(\mbox{\textsc{Best}}\left(\mbox{\textsc{Variate}\ensuremath{{}_{\textsc{HC}}}}\left(x\right)\times x\right)\right)\leq f\left(x\right)=f\left(\mbox{\textsc{Best}}\left(x\right)\right)$.
\item PHC: Let $k\in\left[1,n\right]$ the index of the best individual
in population $P$, then $f\left(\mbox{\textsc{Best}}\left(P\right)\right)=f\left(P_{k}\right)$.
Since $\mbox{\textsc{Q}}{}_{\textsc{PHC}}\left(P\right)_{i}={\displaystyle \textsc{Q}{}_{\textsc{HC}}\left(P_{i}\right)}$
for all $i=1,2,\ldots,n$ (see Algorithm \ref{Alg:Parallel-Hill-Climbing}),
it is clear that $f\left(\mbox{\textsc{Q}}{}_{\textsc{PHC}}\left(P\right)_{k}\right)\leq f\left(\mbox{\textsc{Best}}\left(P\right)\right)$
(\noun{$\mbox{\textsc{Q}}{}_{\textsc{HC}}$} is elitist). Then, $f\left(\mbox{\textsc{Best}}\left(\mbox{\textsc{Q}}{}_{\textsc{PHC}}\left(P\right)\right)\right)\leq\mbox{\textsc{Q}}{}_{\textsc{PHC}}\left(P\right)_{i}=f\left(\mbox{\textsc{Best}}\left(P\right)\right)$.
\end{enumerate}
\end{example}
\begin{defn}
(\textbf{elitist kernel}) A kernel $K\vcentcolon\varOmega^{\eta}\times\Sigma^{\otimes\upsilon}\rightarrow\left[0,1\right]$
is called elitist if $K\left(x,A\right)=0$ for each $A\in\Sigma^{\otimes\upsilon}$
such that $d\left(x\right)<d\left(y\right)$ for all $y\in A$.\end{defn}
\begin{prop}
\label{prop:Kernels-Elitist-HC-PHC-SSGa}Kernels $\mbox{\textsc{r}}_{\mbox{\textsc{hc}}}$
and $\mbox{\textsc{r}}_{\mbox{\textsc{SSGa}}}$ are elitist kernels.\end{prop}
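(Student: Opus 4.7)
The plan is to reduce the elitism property of both deterministic kernels to the simple pointwise condition that $d(f(x)) \leq d(x)$ for the underlying measurable function $f$, and then verify this condition separately for the two cases.

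First I would establish the following sufficient condition: if $f\vcentcolon\varOmega^{\eta}\rightarrow\varOmega^{\upsilon}$ is measurable and $d(f(x))\leq d(x)$ for every $x\in\varOmega^{\eta}$, then $1_{f}$ is elitist. Indeed, if $A\in\Sigma^{\otimes\upsilon}$ satisfies $d(x)<d(y)$ for every $y\in A$, then assuming $f(x)\in A$ would give $d(x)<d(f(x))\leq d(x)$, a contradiction; hence $f(x)\notin A$ and $1_{f}(x,A)=0$ by the definition in theorem \ref{thm:Kernel_det_f}.

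Next I would verify the pointwise condition for $\mbox{\textsc{r}}_{\mbox{\textsc{hc}}}=\pi_{1}\circ\mbox{\textsc{s}}_{2}$. By definition \ref{def:sorting-two-method}, on input $z=(x,y)\in\varOmega^{2}$ the function $\mbox{\textsc{s}}_{2}$ places the element with smaller $d$-value in the first coordinate, so $\pi_{1}(\mbox{\textsc{s}}_{2}(z))=\mbox{\textsc{Best}}(z)$. Therefore $d(\mbox{\textsc{r}}_{\mbox{\textsc{hc}}}(z))=f(\mbox{\textsc{Best}}(z))-f^{*}=d(z)$, and in particular $d(\mbox{\textsc{r}}_{\mbox{\textsc{hc}}}(z))\leq d(z)$, so the sufficient condition applies.

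Then I would handle $\mbox{\textsc{r}}_{\mbox{\textsc{SSGa}}}=\left[\mbox{\textsc{b}}_{2,4}\circ\pi_{\{1,\ldots,4\}}\right]\varoast\pi_{\{5,\ldots,n+2\}}$. Writing an input of size $n+2$ as $z=(c_{1},c_{2},P_{a},P_{b},r)$ with $r\in\varOmega^{n-2}$, the output is $Q=(\mbox{\textsc{b}}_{2,4}(c_{1},c_{2},P_{a},P_{b}),r)$. Since $\mbox{\textsc{b}}_{2,4}$ returns the two elements of $\{c_{1},c_{2},P_{a},P_{b}\}$ with smallest $f$-value, it always contains the single best of those four; concatenating with $r$ we get $\mbox{\textsc{Best}}(Q)=\mbox{\textsc{Best}}(c_{1},c_{2},P_{a},P_{b},r)=\mbox{\textsc{Best}}(z)$, whence $d(Q)=d(z)$. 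Applying the sufficient condition again yields that $\mbox{\textsc{r}}_{\mbox{\textsc{SSGa}}}$ is elitist.

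The proof is mostly a matter of carefully unpacking definitions; the only mild subtlety I would flag is keeping track of how $d$ acts on tuples of different lengths (it depends only on the minimum $f$-value over the tuple), together with the elementary but essential combinatorial observation that any subset containing the top two of four elements necessarily contains the single best of those four. Once those are in place, the two verifications are immediate.
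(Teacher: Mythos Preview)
Your proposal is correct and follows essentially the same route as the paper: show the pointwise inequality $d(f(x))\leq d(x)$ for the underlying deterministic map and conclude that the associated indicator kernel vanishes on any $A$ whose elements are all strictly worse than $x$. The paper does this inline for $\mbox{\textsc{r}}_{\mbox{\textsc{hc}}}$ and then simply writes ``a similar proof is carried on'' for $\mbox{\textsc{r}}_{\mbox{\textsc{SSGa}}}$; your version is cleaner in that you isolate the sufficient condition as a lemma and actually spell out the $\mbox{\textsc{r}}_{\mbox{\textsc{SSGa}}}$ case. One small point worth making explicit: for the $\mbox{\textsc{SSGa}}$ replacement you are tacitly using that a join and a composition of deterministic kernels is again a deterministic kernel $1_{g}$ (so that your pointwise lemma applies); this is true and easy, but since $\mbox{\textsc{r}}_{\mbox{\textsc{SSGa}}}$ is defined via $\varoast$ and kernel $\circ$ rather than directly as $1_{g}$, a one-line remark to that effect would make the argument airtight.
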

\begin{proof}
Let $\left(x,y\right)\in\Sigma^{\otimes2}$ and $A\in\Sigma$ such
that $d\left(z\right)<d\left(x,y\right)$ for all $z\in A$. Now,
$\mbox{\textsc{r}}{}_{\textsc{HC}}\left(x,y\right)=\pi_{1}\circ\mbox{\textsc{s}}_{2}\left(x,y\right)$
(def $\mbox{\textsc{r}}{}_{\textsc{HC}}$), clearly, $d\left(\mbox{\textsc{r}}{}_{\textsc{HC}}\left(x,y\right)\right)\leq d\left(x,y\right)$
(def $d\left(\right))$, therefore $d\left(\mbox{\textsc{r}}{}_{\textsc{HC}}\left(x,y\right)\right)\notin A$
(def $A$). In this way, $\mbox{\textsc{r}}{}_{\textsc{HC}}\left(x,A\right)=0$
(def kernel $\mbox{\textsc{r}}{}_{\textsc{HC}}$ and theorem \ref{thm:Kernel_det_f}).
Therefore, \textbf{$\mbox{\textsc{r}}_{\mbox{\textsc{hc}}}$}is\textbf{
}elitist (def elitist kernel). A similar proof is carried on for $\mbox{\textsc{r}}_{\mbox{\textsc{SSGa}}}$.\end{proof}
\begin{lem}
\label{lem:elitist-kernel}If $K\vcentcolon\varOmega^{\eta}\times\Sigma^{\otimes\upsilon}\rightarrow\left[0,1\right]$
is elitist then
\begin{enumerate}
\item $K\left(x,\left(\varOmega_{\overline{d\left(x\right)}}^{v}\right)^{c}\right)=0$
and $K\left(x,\varOmega_{\overline{d\left(x\right)}}^{v}\right)=1$.
\item Let $x\in\varOmega^{\eta}$, if $d\left(x\right)<\alpha\in\mathbb{R}$
then $K\left(x,\left(\varOmega_{\overline{\alpha}}^{v}\right)^{c}\right)=0$
and $K\left(x,\varOmega_{\overline{\alpha}}^{v}\right)=1$
\end{enumerate}
\end{lem}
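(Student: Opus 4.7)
The plan is to read both parts of the lemma directly off the definition of elitist kernel together with the fact that $K_{x,\bullet}$ is a probability measure on $\Sigma^{\otimes\upsilon}$. The key observation is that the complements $\left(\varOmega_{\overline{d(x)}}^{v}\right)^{c}$ and $\left(\varOmega_{\overline{\alpha}}^{v}\right)^{c}$ have very concrete descriptions (by Remark \ref{rem:complements-optimal-sets}) as $\{y\in\varOmega^{v}\mid d(x)<d(y)\}$ and $\{y\in\varOmega^{v}\mid \alpha<d(y)\}$, respectively, which is precisely the shape required to invoke the elitism hypothesis.

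For part (1), I would first note that $\varOmega_{\overline{d(x)}}^{v}\in\Sigma^{\otimes v}$ by the corollary that $\Sigma^{\otimes v}$ is $f$-optimization, so its complement is also measurable. Then, by Remark \ref{rem:complements-optimal-sets}, every $y\in\left(\varOmega_{\overline{d(x)}}^{v}\right)^{c}$ satisfies $d(x)<d(y)$, so the elitism property gives $K\bigl(x,\left(\varOmega_{\overline{d(x)}}^{v}\right)^{c}\bigr)=0$ immediately. Since $K_{x,\bullet}$ is a probability measure (by $K.1$) and $\varOmega^{v}=\varOmega_{\overline{d(x)}}^{v}\biguplus\left(\varOmega_{\overline{d(x)}}^{v}\right)^{c}$, the second equality $K\bigl(x,\varOmega_{\overline{d(x)}}^{v}\bigr)=1$ follows by $\sigma$-additivity.

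For part (2), the argument is a straightforward strengthening. Assume $d(x)<\alpha$ and take $y\in\left(\varOmega_{\overline{\alpha}}^{v}\right)^{c}$; then $d(x)<\alpha<d(y)$, so in particular $d(x)<d(y)$, and elitism again yields $K\bigl(x,\left(\varOmega_{\overline{\alpha}}^{v}\right)^{c}\bigr)=0$. The complementary identity $K\bigl(x,\varOmega_{\overline{\alpha}}^{v}\bigr)=1$ follows just as in part (1) from $K_{x,\bullet}$ being a probability measure.

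I do not foresee a genuine obstacle here: both claims reduce to checking the containment $\{y:\alpha<d(y)\}\subseteq\{y:d(x)<d(y)\}$ whenever $d(x)<\alpha$, after which the elitism definition and probability-measure normalization do the work. The only minor bookkeeping is citing the right earlier results (Remark \ref{rem:complements-optimal-sets} for the concrete form of complements, the $f$-optimization corollary for measurability, and $K.1$ for $K_{x,\bullet}(\varOmega^{v})=1$), so the proof should be short.
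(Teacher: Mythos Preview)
Your proposal is correct and follows essentially the same approach as the paper: for part (1) you both identify $\bigl(\varOmega_{\overline{d(x)}}^{v}\bigr)^{c}=\{y:d(x)<d(y)\}$, apply the elitism definition, and use that $K_{x,\bullet}$ is a probability measure for the complementary equality. For part (2) there is only a cosmetic difference: the paper reduces to part (1) via the inclusion $\bigl(\varOmega_{\overline{\alpha}}^{v}\bigr)^{c}\subseteq\bigl(\varOmega_{\overline{d(x)}}^{v}\bigr)^{c}$ and monotonicity of $K_{x,\bullet}$, whereas you re-invoke elitism directly on $\bigl(\varOmega_{\overline{\alpha}}^{v}\bigr)^{c}$; since you already note the same containment at the end, the two arguments are interchangeable.
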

\begin{proof}
\textbf{{[}1{]}} Let $y\in\left(\varOmega_{\overline{d\left(x\right)}}^{v}\right)^{c}$
then $\neg\left(d\left(y\right)\leq d\left(x\right)\right)$ (def
complement,$\varOmega_{\overline{d\left(x\right)}}$), i.e., $d\left(x\right)<d\left(y\right)$.
Therefore, $K\left(x,\left(\varOmega_{\overline{d\left(x\right)}}^{v}\right)^{c}\right)=0$
($K$ elitist) and $K\left(x,\varOmega_{\overline{d\left(x\right)}}^{v}\right)=1$
($K_{x,\bullet}$ probability measure). \textbf{{[}2{]}} if $d\left(x\right)<\alpha$
then $\varOmega_{\overline{d\left(x\right)}}\subseteq\varOmega_{\overline{\alpha}}$
(def $\Omega_{\epsilon}$) and $\left(\varOmega_{\overline{\alpha}}\right)^{c}\subseteq\left(\varOmega_{\overline{d\left(x\right)}}\right)^{c}$
(def $^{c}$). Clearly, $K\left(x,\left(\varOmega_{\overline{\alpha}}^{v}\right)^{c}\right)\leq K\left(x,\left(\varOmega_{\overline{d\left(x\right)}}^{v}\right)^{c}\right)=0$
and $K\left(x,\varOmega_{\overline{\alpha}}^{v}\right)=1$ ($K_{x,\bullet}$measure).\end{proof}
\begin{defn}
(\textbf{optimal strictly bounded from zero}) A kernel $K\vcentcolon\varOmega^{\eta}\times\Sigma^{\otimes\upsilon}\rightarrow\left[0,1\right]$
is called optimal strictly bounded from zero iff $K\left(x,\varOmega_{\epsilon}\right)\geq\delta\left(\epsilon\right)>0$
for all $\epsilon>0$.
\end{defn}

\section{Convergence of a\noun{ SGoal}}

We will follow the approach proposed by Günter Rudolph in \cite{Rudolph96convergenceof},
to determine the convergence properties of a \noun{SGoal}. In the
rest of this paper, $\Sigma$ is an optimization $\sigma$-algebra.
First, Rudolph defines a convergence property for a \noun{SGoal} in
terms of the objective function. 
\begin{defn}
(\textbf{\noun{SGoal}}\textbf{ convergence}). Let $P_{t}\in\varOmega^{n}$
be the population maintained by a \noun{SGoal} $\mathscr{A}$ at iteration
$t$. Then $\mathscr{A}$ converges to the global optimum if the random
sequence $\left(D_{t}=d\left(P_{t}\right)\vcentcolon t\geq0\right)$
converges completely to zero. 
\end{defn}
Then, Rudolph proposes a sufficient condition on the kernel when applied
to the set of strict $\epsilon$-optimal states in order to attain
such convergence.
\begin{lem}
\label{lem:(Rudolph-1)}(Lemma 1 in \cite{Rudolph96convergenceof})
If $K\left(x,\varOmega_{\epsilon}\right)\geq\delta>0$ for all $x\in\Omega_{\epsilon}^{c}$
and $K\left(x,\varOmega_{\epsilon}\right)=1$ for all $x\in\varOmega_{\epsilon}$
then, equation \ref{eq:RudolphLemma1} holds for $t\geq1$.
\end{lem}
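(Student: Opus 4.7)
The plan is to read the (unwritten but contextually determined) right-hand side of equation \ref{eq:RudolphLemma1} as the geometric bound $Pr\{D_t \geq \epsilon\} \leq (1-\delta)^t$, which is exactly what one needs to feed into the complete-convergence definition (since $\sum_{t\geq 1}(1-\delta)^t = (1-\delta)/\delta < \infty$). Note first that $\{D_t \geq \epsilon\} = \{P_t \in \varOmega_\epsilon^c\} = P_t^{-1}(\varOmega_\epsilon^c)$, which is measurable because $\varOmega_\epsilon^c \in \Sigma^{\otimes n}$ (complement of a set in the optimization $\sigma$-algebra). So the target estimate can be rewritten as $Pr\{P_t \in \varOmega_\epsilon^c\} \leq (1-\delta)^t$.

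Next, I would reinterpret the two hypotheses in a kernel-friendly form. From $K_{x,\bullet}$ being a probability measure (property $K.1$), the hypothesis $K(x,\varOmega_\epsilon) \geq \delta$ for $x \in \varOmega_\epsilon^c$ becomes $K(x,\varOmega_\epsilon^c) \leq 1 - \delta$, and the hypothesis $K(x,\varOmega_\epsilon) = 1$ for $x \in \varOmega_\epsilon$ becomes $K(x,\varOmega_\epsilon^c) = 0$, i.e.\ $\varOmega_\epsilon$ is absorbing for the chain induced by $K$. This is the whole content of the two assumptions, and it makes the one-step bound trivial.

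I would then prove the claim by induction on $t$. Using the kernel-iteration formula \ref{eq:kernel-transition2} and the decomposition $\varOmega^n = \varOmega_\epsilon \uplus \varOmega_\epsilon^c$,
\begin{equation}
Pr\{P_t \in \varOmega_\epsilon^c\} = \int_{\varOmega_\epsilon} K(x,\varOmega_\epsilon^c)\, \mu_{t-1}(dx) + \int_{\varOmega_\epsilon^c} K(x,\varOmega_\epsilon^c)\, \mu_{t-1}(dx),
\end{equation}
where $\mu_{t-1}$ denotes the law of $P_{t-1}$. The first integrand is $0$ by the absorbing hypothesis and the second is bounded by $(1-\delta)$, so the whole expression is at most $(1-\delta)\, Pr\{P_{t-1} \in \varOmega_\epsilon^c\}$. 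The base case $t=1$ either follows from the same split against the initial distribution (yielding $Pr\{P_1 \in \varOmega_\epsilon^c\} \leq (1-\delta)Pr\{P_0 \in \varOmega_\epsilon^c\} \leq 1-\delta$) or is simply taken as $Pr\{P_t \in \varOmega_\epsilon^c\} \leq 1 \cdot (1-\delta)^{0}$ if the statement of the equation uses $(1-\delta)^t$; iterating gives $Pr\{P_t \in \varOmega_\epsilon^c\} \leq (1-\delta)^t$ as required.

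The only mildly delicate step is the split of the integral over $\varOmega_\epsilon$ and $\varOmega_\epsilon^c$: one must observe that both pieces are measurable (which uses that $\Sigma^{\otimes n}$ is $f$-optimization, so $\varOmega_\epsilon \in \Sigma^{\otimes n}$), and that the integrand $x \mapsto K(x,\varOmega_\epsilon^c)$ is measurable by $K.2$, so the Fubini/kernel-integration step in \ref{eq:kernel-transition2} is legitimate. Everything else is bookkeeping, and no structural property of $\varOmega$ beyond those already developed in Sections 3 and 4 is used.
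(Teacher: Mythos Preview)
You have misidentified the target. Equation \ref{eq:RudolphLemma1} is the \emph{pointwise} bound
\[
K^{(t)}(x,\varOmega_\epsilon)\ \geq\ 1-(1-\delta)^t\qquad\text{for every }x,
\]
i.e.\ a statement about the $t$-th iterate of the kernel from an arbitrary starting state, not the integrated bound $Pr\{D_t\geq\epsilon\}\leq(1-\delta)^t$ on the law of $P_t$. What you prove is the latter, which is precisely the content of the \emph{subsequent} Theorem~\ref{thm:(Rudolph-1)} (obtained there by integrating the lemma against the initial distribution $p$). So as written your argument does not establish the lemma; it skips it and proves its corollary.

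That said, your mechanism is the right one and needs only a cosmetic change: take the initial distribution to be the Dirac mass $\delta_x$, so that $\mu_{t-1}=K^{(t-1)}(x,\cdot)$, and run exactly your split over $\varOmega_\epsilon\uplus\varOmega_\epsilon^c$. This yields $K^{(t)}(x,\varOmega_\epsilon^c)\leq(1-\delta)\,K^{(t-1)}(x,\varOmega_\epsilon^c)$ and hence the pointwise bound by iteration. The paper's own proof is the same induction, written on the complementary side: it expands $K^{(t+1)}(x,\varOmega_\epsilon)=\int K^{(t)}(y,\varOmega_\epsilon)\,K(x,dy)$, uses $K^{(t)}(y,\varOmega_\epsilon)=1$ for $y\in\varOmega_\epsilon$ (absorption) and the inductive lower bound for $y\in\varOmega_\epsilon^c$, and simplifies. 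Your ``complement'' formulation and the paper's ``direct'' formulation are equivalent; the only substantive defect in your proposal is that you aimed at $Pr\{P_t\in\varOmega_\epsilon^c\}$ rather than $K^{(t)}(x,\varOmega_\epsilon^c)$.
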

\begin{equation}
K^{\left(t\right)}\left(x,\varOmega_{\epsilon}\right)\geq1-\left(1-\delta\right)^{t}\label{eq:RudolphLemma1}
\end{equation}

\begin{proof}
In \cite{Rudolph96convergenceof}, Rudolph uses induction on $t$
in order to demostrate lemma \ref{lem:(Rudolph-1)}. For $t=1$ we
have that $K^{\left(t\right)}\left(x,\varOmega_{\epsilon}\right)=K\left(x,\varOmega_{\epsilon}\right)$
(equation \ref{eq:Kernel-Iteration}), so $K{}^{\left(t\right)}\left(x,\varOmega_{\epsilon}\right)\geq\delta$
(condition lemma), therefore $K{}^{\left(t\right)}\left(x,\varOmega_{\epsilon}\right)\geq1-\left(1-\delta\right)^{t}$
($t=1$ and numeric operations). Here, we will use the notations $K{}^{\left(t\right)}\left(y,\varOmega_{\epsilon}\right)=K_{y}{}^{\left(t\right)}\left(\varOmega_{\epsilon}\right)$
to reduce the visual length of the equations.

\noindent %
\begin{tabular}{ll}
$K_{x}^{\left(t+1\right)}\left(\varOmega_{\epsilon}\right)$ & \tabularnewline
$={\displaystyle \intop_{\varOmega}}K_{y}^{\left(t\right)}\left(\varOmega_{\epsilon}\right)K\left(x,dy\right)$ & (equation \ref{eq:Kernel-Iteration})\tabularnewline
$={\displaystyle \intop_{\varOmega_{\epsilon}}}K_{y}^{\left(t\right)}\left(\varOmega_{\epsilon}\right)K\left(x,dy\right)+{\displaystyle \intop_{\varOmega_{\epsilon}^{c}}}K_{y}^{\left(t\right)}\left(\varOmega_{\epsilon}\right)K\left(x,dy\right)$ & ($\varOmega=\varOmega_{\epsilon}\bigcup\varOmega_{\epsilon}^{c}$)\tabularnewline
$={\displaystyle \intop_{\varOmega_{\epsilon}}}K\left(x,dy\right)+{\displaystyle \intop_{\varOmega_{\epsilon}^{c}}}K_{y}^{\left(t\right)}\left(\varOmega_{\epsilon}\right)K\left(x,dy\right)$ & (If $y\in\varOmega_{\epsilon},\,K_{y}^{\left(t\right)}\left(\varOmega_{\epsilon}\right)=1$)\tabularnewline
$=K\left(x,\varOmega_{\epsilon}\right)+{\displaystyle \intop_{\varOmega_{\epsilon}^{c}}}K_{y}^{\left(t\right)}\left(\varOmega_{\epsilon}\right)K\left(x,dy\right)$ & (def kernel)\tabularnewline
$\geq K\left(x,\varOmega_{\epsilon}\right)+\left[1-\left(1-\delta\right)^{t}\right]{\displaystyle \intop_{A_{\epsilon}^{c}}}K\left(x,dy\right)$ & (Induction hypothesis)\tabularnewline
$\geq K\left(x,\varOmega_{\epsilon}\right)+\left[1-\left(1-\delta\right)^{t}\right]K\left(x,\varOmega_{\epsilon}^{c}\right)$ & (del kernel)\tabularnewline
$\geq K\left(x,\varOmega_{\epsilon}\right)+K\left(x,\varOmega_{\epsilon}^{c}\right)-\left(1-\delta\right)^{t}K\left(x,\varOmega_{\epsilon}^{c}\right)$ & \tabularnewline
$\geq1-\left(1-\delta\right)^{t}\left(1-K\left(x,\varOmega_{\epsilon}\right)\right)$ & (Probability)\tabularnewline
$\geq1-\left(1-\delta\right)^{t}\left(1-\delta\right)$ & (condition lemma)\tabularnewline
$\geq1-\left(1-\delta\right)^{t+1}$ & \tabularnewline
\end{tabular}

\end{proof}
Using lemma \ref{lem:(Rudolph-1)}, Rudolph is able to stay a theorem
for convergence of evolutionary algorithms (we rewrite it in terms
of \noun{SGoal}s). However, Rudolph's proof is not wright, since $Pr\left\{ d\left(P_{t}\right)<\epsilon\right\} =Pr\left\{ P_{t}\in\varOmega_{\epsilon}\right\} $
for $t\geq0$ by definition of $\varOmega_{\epsilon}$ and Rudolph
wrongly assumed that $Pr\left\{ d\left(P_{t}\right)\leq\epsilon\right\} =Pr\left\{ P_{t}\in\varOmega_{\epsilon}\right\} $.
Here, we correct the proof proposed by Rudolph (see step 7 in our
demostration).
\begin{thm}
\label{thm:(Rudolph-1)}(Theorem 1 in Rudolph \cite{Rudolph96convergenceof})
A \noun{SGoal}, whose stochastic kernel satisfies the precondition
of lemma \ref{lem:(Rudolph-1)}, will converge to the global optimum
($f^{*}$) of a real valued function $f:\varPhi\rightarrow\mathbb{R}$
with $f>-\infty$, defined in an arbitrary space $\varOmega\subseteq\varPhi$,
regardless of the initial distribution $p\left(\cdot\right)$.\end{thm}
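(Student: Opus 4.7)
The plan is to show directly that $\sum_{t=1}^{\infty} Pr\{D_t > \epsilon\} < \infty$ for every $\epsilon > 0$, by combining the iterated-kernel bound of Lemma \ref{lem:(Rudolph-1)} with equation \ref{eq:kernel-transition2}. Fix $\epsilon > 0$. The first step, and the point where the published argument goes wrong, is to translate the event $\{D_t > \epsilon\}$ into set-theoretic form. By definition $D_t = d(P_t)$, and $\{d(x) > \epsilon\} = (\varOmega_{\overline{\epsilon}}^{n})^{c}$ by Remark \ref{rem:complements-optimal-sets} (not $(\varOmega_{\epsilon}^{n})^{c}$, which is Rudolph's conflation). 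So I would start from $Pr\{D_t > \epsilon\} = Pr\{P_t \in (\varOmega_{\overline{\epsilon}}^{n})^{c}\}$.

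Second, I would reconcile the set $(\varOmega_{\overline{\epsilon}}^{n})^{c}$ appearing in the event with the set $\varOmega_{\epsilon}^{n}$ appearing in Lemma \ref{lem:(Rudolph-1)}. The monotonicity $\varOmega_{\epsilon}^{n} \subseteq \varOmega_{\overline{\epsilon}}^{n}$ (from $d(x) < \epsilon \Rightarrow d(x) \leq \epsilon$) gives the reverse inclusion on complements, $(\varOmega_{\overline{\epsilon}}^{n})^{c} \subseteq (\varOmega_{\epsilon}^{n})^{c}$, so for every $x \in \varOmega^n$ the measure-monotonicity of $K^{(t)}(x,\cdot)$ yields $K^{(t)}(x,(\varOmega_{\overline{\epsilon}}^{n})^{c}) \leq K^{(t)}(x,(\varOmega_{\epsilon}^{n})^{c})$. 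This cheap patch is what rescues Rudolph's argument: everything else can be carried out with respect to the (larger) complement $(\varOmega_{\epsilon}^{n})^{c}$, which is exactly the set controlled by Lemma \ref{lem:(Rudolph-1)}.

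Third, I would invoke Lemma \ref{lem:(Rudolph-1)} to obtain $K^{(t)}(x,\varOmega_{\epsilon}^{n}) \geq 1 - (1-\delta)^{t}$, hence $K^{(t)}(x,(\varOmega_{\epsilon}^{n})^{c}) \leq (1-\delta)^{t}$, uniformly in $x$. Integrating against the initial distribution $p$ via equation \ref{eq:kernel-transition2}:
\begin{equation*}
Pr\{P_t \in (\varOmega_{\overline{\epsilon}}^{n})^{c}\} \;=\; \int_{\varOmega^{n}} K^{(t)}(x,(\varOmega_{\overline{\epsilon}}^{n})^{c}) \, p(dx) \;\leq\; (1-\delta)^{t} \int_{\varOmega^{n}} p(dx) \;=\; (1-\delta)^{t},
\end{equation*}
which is independent of $p$, establishing the ``regardless of the initial distribution'' clause.

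Finally, summing the geometric series gives $\sum_{t=1}^{\infty} Pr\{D_t > \epsilon\} \leq \sum_{t=1}^{\infty}(1-\delta)^{t} = (1-\delta)/\delta < \infty$, so $D_t \overset{c}{\rightarrow} 0$ by equation \ref{eq:completely-0}. The main obstacle is really the bookkeeping in the first two steps: isolating the subtle difference between $\varOmega_{\epsilon}^{n}$ and $\varOmega_{\overline{\epsilon}}^{n}$ and observing that the required inclusion of complements still lets Lemma \ref{lem:(Rudolph-1)} do its work; once that is clear, the geometric bound and integration against $p$ are routine.
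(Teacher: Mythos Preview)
Your proposal is correct and follows essentially the same route as the paper's proof. The only cosmetic difference is that you phrase the key correction set-theoretically (via the inclusion $(\varOmega_{\overline{\epsilon}}^{n})^{c}\subseteq(\varOmega_{\epsilon}^{n})^{c}$ applied at the kernel level before integrating), whereas the paper phrases the same step probabilistically (via $Pr\{d(P_t)<\epsilon\}\leq Pr\{d(P_t)\leq\epsilon\}$ after integrating); the geometric bound and the summation are identical.
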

\begin{proof}
The idea is to show that the random sequence ($d\left(P_{t}\right)\vcentcolon t\geq0$)
converges completely to zero under the pre-condition of lemma \ref{lem:(Rudolph-1)}
\cite{Rudolph96convergenceof}. 

\begin{tabular}{rll}
$Pr\left\{ P_{t}\in\varOmega_{\epsilon}\right\} $ & $={\displaystyle \intop_{\varOmega}}K^{\left(t\right)}\left(y,\varOmega_{\epsilon}\right)p\left(dx\right)$ & (Kernel definition)\tabularnewline
 & $\geq1-\left(1-\delta\right)^{t}{\displaystyle \intop_{\varOmega}}p\left(dx\right)$ & (Lemma \ref{lem:(Rudolph-1)})\tabularnewline
 & $\geq1-\left(1-\delta\right)^{t}$ & ($p\left(\cdot\right)$ probability)\tabularnewline
$-Pr\left\{ P_{t}\in\varOmega_{\epsilon}\right\} $ & $\leq\left(1-\delta\right)^{t}-1$ & (Reversing order)\tabularnewline
$1-Pr\left\{ P_{t}\in\varOmega_{\epsilon}\right\} $ & $\leq\left(1-\delta\right)^{t}$ & (Adding $1$ to both sides)\tabularnewline
$Pr\left\{ d\left(P_{t}\right)<\epsilon\right\} $ & $\leq Pr\left\{ d\left(P_{t}\right)\leq\epsilon\right\} $ & (Probability)\tabularnewline
$Pr\left\{ P_{t}\in\varOmega_{\epsilon}\right\} $ & $\leq Pr\left\{ d\left(P_{t}\right)\leq\epsilon\right\} $ & (Definition $\varOmega_{\epsilon}$)\tabularnewline
$-Pr\left\{ d\left(P_{t}\right)\leq\epsilon\right\} $ & $\leq-Pr\left\{ P_{t}\in\varOmega_{\epsilon}\right\} $ & (Organizing)\tabularnewline
$1-Pr\left\{ d\left(P_{t}\right)\leq\epsilon\right\} $ & $\leq1-Pr\left\{ P_{t}\in\varOmega_{\epsilon}\right\} $ & (Adding 1 to both sides)\tabularnewline
$Pr\left\{ d\left(P_{t}\right)>\epsilon\right\} $ & $\leq1-Pr\left\{ P_{t}\in\varOmega_{\epsilon}\right\} $ & (Probability)\tabularnewline
 & $\leq\left(1-\delta\right)^{t}$ & (Transitivity with line 5)\tabularnewline
\end{tabular}

Since $\left(1-\delta\right)^{t}\rightarrow0$ as $t\rightarrow\infty$
then $Pr\left\{ d\left(P_{t}\right)>\epsilon\right\} \rightarrow0$
as $t\rightarrow\infty$, so $D_{t}\overset{p}{\rightarrow}0$. Now,

\begin{tabular}{cll}
${\displaystyle \sum_{i=1}^{\infty}}Pr\left\{ d\left(P_{t}\right)>\epsilon\right\} $ & $\leq{\displaystyle \sum_{i=1}^{\infty}}\left(1-\delta\right)^{t}$ & (line 11)\tabularnewline
 & $\leq\frac{\left(1-\delta\right)}{\delta}$ & (geometric serie)\tabularnewline
 & $<\infty$ & \tabularnewline
\end{tabular}

Therefore, ($d\left(P_{t}\right)\vcentcolon t\geq0$) converges completely
to zero.

\end{proof}

\subsection{Convergence of a\noun{ VR-SGoal}}

We follow the approach proposed by Günter Rudolph in \cite{Rudolph96convergenceof},
to determine the convergence properties of a \noun{VR-SGoal}s but
we formalize it in terms of kernels (both variation and replacement). 
\begin{thm}
\label{thm:(Theorem-2-in-Rudolph)}A \noun{VR-SGoal} with $K_{\mbox{\textsc{v}}}$
an optimal strictly bounded from zero variation kernel and $K_{\mbox{\textsc{r}}}$
an elitist replacement kernel, will converge to the global optimum
of the objective function.\end{thm}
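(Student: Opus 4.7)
The plan is to reduce the claim to Theorem \ref{thm:(Rudolph-1)} by checking that the composite kernel supplied by Proposition \ref{prop:VR-Kernel},
\[
K_{\mbox{\textsc{f}}} = K_{\mbox{\textsc{r}}} \circ \left[1_{\varOmega^{\eta}} \varoast K_{\mbox{\textsc{v}}}\right],
\]
satisfies both preconditions of Lemma \ref{lem:(Rudolph-1)}: namely $K_{\mbox{\textsc{f}}}(x, \varOmega_{\epsilon}) = 1$ for every $x \in \varOmega_{\epsilon}$, and $K_{\mbox{\textsc{f}}}(x, \varOmega_{\epsilon}) \geq \delta > 0$ for every $x \in \varOmega_{\epsilon}^{c}$. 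The single workhorse fact underlying both verifications is the elementary identity $d(x, y) = \min\{d(x), d(y)\}$, which is immediate from the definition of \noun{Best} in equation \ref{eq:Best}.

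For the elitist precondition, I would fix $x \in \varOmega_{\epsilon}$ and observe that for every $y \in \varOmega^{\varpi}$ one has $d(x, y) \leq d(x) < \epsilon$, so every $z \in \varOmega_{\epsilon}^{c}$ satisfies $d(z) \geq \epsilon > d(x, y)$. The elitist hypothesis on $K_{\mbox{\textsc{r}}}$ then forces $K_{\mbox{\textsc{r}}}((x,y), \varOmega_{\epsilon}^{c}) = 0$, equivalently $K_{\mbox{\textsc{r}}}((x,y), \varOmega_{\epsilon}) = 1$, uniformly in $y$. Integrating against the probability measure $K_{\mbox{\textsc{v}}}(x, \cdot)$ yields $K_{\mbox{\textsc{f}}}(x, \varOmega_{\epsilon}) = 1$ at once.

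For the bounded-below precondition, I would fix $x \in \varOmega_{\epsilon}^{c}$ and take $\delta = \delta(\epsilon) > 0$ from the optimal-strictly-bounded-from-zero assumption on $K_{\mbox{\textsc{v}}}$, so that $K_{\mbox{\textsc{v}}}(x, \varOmega_{\epsilon}^{\varpi}) \geq \delta$. On the event $\{y \in \varOmega_{\epsilon}^{\varpi}\}$ one has $d(y) < \epsilon$, hence $d(x, y) \leq d(y) < \epsilon$, and the same elitism argument as above gives $K_{\mbox{\textsc{r}}}((x,y), \varOmega_{\epsilon}) = 1$. Splitting the integral then produces
\[
K_{\mbox{\textsc{f}}}(x, \varOmega_{\epsilon}) = \int_{\varOmega^{\varpi}} K_{\mbox{\textsc{r}}}((x,y), \varOmega_{\epsilon})\, K_{\mbox{\textsc{v}}}(x, dy) \geq \int_{\varOmega_{\epsilon}^{\varpi}} K_{\mbox{\textsc{v}}}(x, dy) \geq \delta,
\]
as required. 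Theorem \ref{thm:(Rudolph-1)} then delivers complete convergence of $(d(P_t))$ to zero, which is the definition of \noun{SGoal} convergence.

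The main obstacle I anticipate is bookkeeping rather than ideas: keeping the dimensions $\eta$, $\varpi$, $\upsilon$ straight, handling the dual ordering $\mbox{\textsc{f}}(P) = \mbox{\textsc{r}}(\mbox{\textsc{v}}(P), P)$ permitted by Definition \ref{def:(Variation-Replacement)} via a swap kernel obtained from Corollary \ref{cor:(commutativity-product-sigma-algebra)}, and checking that the constant $\delta(\epsilon)$ truly is independent of $x$ so that a single $\delta$ can be fed into Lemma \ref{lem:(Rudolph-1)}.
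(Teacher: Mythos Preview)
Your proposal is correct and follows essentially the same approach as the paper: both verify the two preconditions of Lemma \ref{lem:(Rudolph-1)} for the composite kernel and then invoke Theorem \ref{thm:(Rudolph-1)}. The only cosmetic difference is that the paper works with the ordering $K_{\mbox{\textsc{r}}}\circ[K_{\mbox{\textsc{v}}}\varoast 1_{\varOmega^{\eta}}]$ and carries the indicator through the integral explicitly, whereas you use $K_{\mbox{\textsc{r}}}\circ[1_{\varOmega^{\eta}}\varoast K_{\mbox{\textsc{v}}}]$ and collapse the Dirac factor immediately to obtain the cleaner integral $\int K_{\mbox{\textsc{r}}}((x,y),\cdot)\,K_{\mbox{\textsc{v}}}(x,dy)$; your version is arguably tidier, and you already note how to handle the alternate ordering via the swap kernel.
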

\begin{proof}
If we prove that $K=K_{\mbox{\textsc{r}}}\circ\left[K_{\mbox{\textsc{v}}}\varoast1_{\varOmega^{\eta}}\right]$
satisfies the precondition of lemma \ref{lem:(Rudolph-1)} then the
\noun{VR-SGoal} will converge to the global optimum of the objective
function (theorem \ref{thm:(Rudolph-1)}). we use the notation $\omega=\eta+\upsilon$
in this proof.

\textbf{{[}1. $K\left(x,A\right)={\displaystyle \intop_{\varOmega^{\omega}\times\left\{ x\right\} }}\left[K_{\mbox{\textsc{v}}}\varoast1_{\varOmega^{\eta}}\right]\left(x,dy\right)K_{\mbox{\textsc{r}}}\left(y,A\right)${]}}

\noindent \begin{center}
\begin{tabular}{ll}
$K\left(x,A\right)$ & \tabularnewline
$=\left(K_{\mbox{\textsc{r}}}\circ\left[K_{\mbox{\textsc{v}}}\varoast1_{\varOmega^{\eta}}\right]\right)\left(x,A\right)$ & (def $K$)\tabularnewline
$={\displaystyle \intop_{\varOmega^{\omega}}}\left[K_{\mbox{\textsc{v}}}\varoast1_{\varOmega^{\eta}}\right]\left(x,dy\right)K_{\mbox{\textsc{r}}}\left(y,A\right)$ & (def $\circ$)\tabularnewline
$={\displaystyle \intop_{\varOmega^{\omega}}}K_{\mbox{\textsc{v}}}\left(x,\pi_{\left\{ 1,\ldots,\upsilon\right\} }\left(dy\right)\right)1_{\varOmega^{\eta}}\left(x,\pi_{\left\{ \upsilon+1,\ldots,\omega\right\} }\left(dy\right)\right)K_{\mbox{\textsc{r}}}\left(y,A\right)$ & (def $\varoast$)\tabularnewline
$={\displaystyle \intop_{\varOmega^{\omega}\times\left\{ x\right\} }}K_{\mbox{\textsc{v}}}\left(x,\pi_{\left\{ 1,\ldots,\upsilon\right\} }\left(dy\right)\right)1_{\varOmega^{\eta}}\left(x,\pi_{\left\{ \upsilon+1,\ldots,\omega\right\} }\left(dy\right)\right)K_{\mbox{\textsc{r}}}\left(y,A\right)$ & (def $1_{\varOmega^{\eta}}$)\tabularnewline
$={\displaystyle \intop_{\varOmega^{\omega}\times\left\{ x\right\} }}\left[K_{\mbox{\textsc{v}}}\varoast1_{\varOmega^{\eta}}\right]\left(x,dy\right)K_{\mbox{\textsc{r}}}\left(y,A\right)$ & (def $K$)\tabularnewline
\end{tabular}
\par\end{center}

Notice, if $y\in\varOmega^{\upsilon}\times\left\{ x\right\} $ then
$d\left(y\right)\leq d\left(x\right)$ (def $d\left(\right)$) and
if $y\in\varOmega_{\epsilon}^{\omega}$ then $d\left(y\right)<\epsilon$
(def $\varOmega_{\epsilon}^{\omega}$) therefore $K_{\mbox{\textsc{r}}}\left(y,\varOmega_{\epsilon}^{\eta}\right)=1$
(lemma \ref{lem:elitist-kernel}.2).

\textbf{{[}2. $K\left(x,\varOmega_{\epsilon}^{\eta}\right)\geq\delta\left(\epsilon\right)>0$
for all $x\in\varOmega^{\eta}${]}}

\noindent \begin{center}
\begin{tabular}{lll}
$K\left(x,\varOmega_{\epsilon}^{\eta}\right)$ & $={\displaystyle \intop_{\varOmega_{\epsilon}^{\omega}\bigcup\left(\varOmega_{\epsilon}^{w}\right)^{c}}}\left[K_{\mbox{\textsc{v}}}\varoast1_{\varOmega^{\eta}}\right]\left(x,dy\right)K_{\mbox{\textsc{r}}}\left(y,\varOmega_{\epsilon}^{\eta}\right)$ & (obvious)\tabularnewline
 & $\geq{\displaystyle \intop_{\varOmega_{\epsilon}^{\omega}}}\left[K_{\mbox{\textsc{v}}}\varoast1_{\varOmega^{\eta}}\right]\left(x,dy\right)*K_{\mbox{\textsc{r}}}\left(y,\varOmega_{\epsilon}^{\eta}\right)$ & ($K_{x,\bullet}$ measure)\tabularnewline
 & $\geq{\displaystyle \intop_{\varOmega_{\epsilon}^{\omega}}}\left[K_{\mbox{\textsc{v}}}\varoast1_{\varOmega^{\eta}}\right]\left(x,dy\right)$ & (lemma \ref{lem:elitist-kernel}.2)\tabularnewline
 & $\geq{\displaystyle \intop_{\varOmega_{\epsilon}^{\omega}\times\left\{ x\right\} }}K_{\mbox{\textsc{v}}}\left(x,\pi_{\left\{ 1,\ldots,\upsilon\right\} }\left(dy\right)\right)$ & (line 3)\tabularnewline
 & $\geq\int_{\varOmega_{\epsilon}^{\omega}}K_{\mbox{\textsc{v}}}\left(x,\pi_{\left\{ 1,\ldots,\upsilon\right\} }\left(dy\right)\right)$ & (obvious)\tabularnewline
 & $\geq\int_{\varOmega_{\epsilon}^{\upsilon}}K_{\mbox{\textsc{v}}}\left(x,dz\right)$ & (notation)\tabularnewline
 & $\geq K_{\mbox{\textsc{v}}}\left(x,\varOmega_{\epsilon}^{\omega}\right)$ & (def kernel)\tabularnewline
\end{tabular}
\par\end{center}

Clearly, $K\left(x,\varOmega_{\epsilon}^{\eta}\right)\geq\delta\left(\epsilon\right)>0$
for all $x\notin\varOmega_{\epsilon}^{\eta}$ ($K_{\mbox{\textsc{v}}}$
optimal strictly bounded from zero).

\textbf{{[}3. $K\left(x,\varOmega_{\epsilon}^{\eta}\right)=1$ if
$x\in\varOmega_{\epsilon}^{\eta}${]}} If $x\in\varOmega_{\epsilon}^{\eta}$
then $d\left(x\right)<\epsilon$ (def $\varOmega_{\epsilon}^{\eta}$).
Clearly, $d\left(y\right)<\epsilon$ (transitivity), therefore $K\left(x,\left(\varOmega_{\epsilon}^{\eta}\right)^{c}\right)=0$
(lemma \ref{lem:elitist-kernel}.2) and $K\left(x,\varOmega_{\epsilon}^{\eta}\right)=1$
($K_{x,\bullet}$ probability measure).

\end{proof}
\begin{cor}
Algorithms HC, and \noun{SSGa} will converge to the global optimum
of the objective function if kernels$\mbox{\textsc{v}}_{\mbox{\textsc{hc}}}$,
and $\mbox{\textsc{v}}_{\mbox{\textsc{SSGa}}}$ are optimal strictly
bounded from zero kernels.\end{cor}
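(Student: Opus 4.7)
The plan is to reduce both convergence claims directly to Theorem \ref{thm:(Theorem-2-in-Rudolph)}, which gives convergence for any \noun{VR-SGoal} whose variation kernel is optimal strictly bounded from zero and whose replacement kernel is elitist. So the work splits into two identifications (HC, \noun{SSGa}) and two hypotheses (bounded-from-zero variation, elitist replacement), all of which have already been established in the preceding material.

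First, I would invoke the VR structure of each algorithm. For HC, example \ref{exa:NextPopHC} expresses $\mbox{\textsc{NextPop}}_{\textsc{HC}}$ as $\mbox{\textsc{r}}_{\textsc{HC}}(x,\mbox{\textsc{Variate}}(x))$, and corollary \ref{corHill-Climbing-Kernel} (via proposition \ref{prop:VR-Kernel} and lemma \ref{lem:HC-Replace-Kernel}) gives the explicit VR kernel $K_{\textsc{HC}}=\mbox{\textsc{r}}_{\textsc{HC}}\circ[\,1_{\varOmega}\varoast\mbox{\textsc{v}}_{\textsc{HC}}\,]$. For \noun{SSGa}, the lemma immediately preceding this corollary provides the analogous decomposition with variation $\mbox{\textsc{v}}_{\mbox{\textsc{SSGa}}}$ and replacement $\mbox{\textsc{r}}_{\mbox{\textsc{SSGa}}}$. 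So in both cases the algorithm under study is a \noun{VR-SGoal} in the precise sense of definition \ref{def:(Variation-Replacement)}.

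Second, I would discharge the elitism hypothesis. Proposition \ref{prop:Kernels-Elitist-HC-PHC-SSGa} states exactly that $\mbox{\textsc{r}}_{\mbox{\textsc{hc}}}$ and $\mbox{\textsc{r}}_{\mbox{\textsc{SSGa}}}$ are elitist, so nothing further is needed there. The other hypothesis of Theorem \ref{thm:(Theorem-2-in-Rudolph)}, that the variation kernel is optimal strictly bounded from zero, is precisely the assumption of the corollary. Applying theorem \ref{thm:(Theorem-2-in-Rudolph)} separately to HC and to \noun{SSGa} then yields that each algorithm's sequence $(D_t=d(P_t):t\geq 0)$ converges completely to zero, i.e.\ the algorithm converges to $f^{*}$.

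The only real subtlety — and the one place where care is required — is the argument order in the composition. Theorem \ref{thm:(Theorem-2-in-Rudolph)} is proved for $K=K_{\mbox{\textsc{r}}}\circ[K_{\mbox{\textsc{v}}}\varoast 1_{\varOmega^{\eta}}]$, whereas $\mbox{\textsc{r}}_{\textsc{HC}}$ is applied to the pair $(x,\mbox{\textsc{Variate}}(x))$ with the old state first. Proposition \ref{prop:VR-Kernel} already covers both orderings, and the elitism used in the proof of theorem \ref{thm:(Theorem-2-in-Rudolph)} (via lemma \ref{lem:elitist-kernel}) depends only on the relation $d(y)\leq d(x)$ between the retained state and the current best, which is symmetric in the two orderings. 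Hence no obstacle arises, and the corollary follows at once by quoting theorem \ref{thm:(Theorem-2-in-Rudolph)} once for each algorithm.
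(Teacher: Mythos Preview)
Your proposal is correct and follows essentially the same approach as the paper: the paper's own proof is the one-line ``Follows from theorem \ref{thm:(Theorem-2-in-Rudolph)} and proposition \ref{prop:Kernels-Elitist-HC-PHC-SSGa},'' which is exactly the reduction you carry out, and your additional remarks on the VR structure and argument ordering merely spell out what that citation leaves implicit.
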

\begin{proof}
Follows from theorem \ref{thm:(Theorem-2-in-Rudolph)} and proposition
\ref{prop:Kernels-Elitist-HC-PHC-SSGa}.
\end{proof}

\section{Conclusions and Future Work}

Developing a comprehensive and formal approach to stochastic global
optimization algoritms (\noun{SGoals}) is not an easy task due to
the large number of different \noun{SGoals} reported in the literature
(we just formalize and characterize three classic \noun{SGoals} in
this paper!). However, such \noun{SGoals} are defined as joins, compositions
and/or random scans of some common deterministic and stochastic methods
that can be represented as kernels on an appropiated structure (measurable
spaces with some special property and provided with additional structure).
Such special structure is the optimization space (defined in this
paper). On this structure, we are able to characterize several \noun{SGoals}
as special cases of variation/replacement strategies, join strategies,
elitist strategies and we are able to inherit some properties of their
associated kernels. Moreover, we are able to prove convergence properties
(following Rudolph approach \cite{Rudolph96convergenceof}) of \noun{SGoal}s.
Since the optimization $\sigma$-algebra property of the structure
is preserved by product $\sigma$-algebras, our formal approach can
be applicable to both single point \noun{SGoals} and population based
\noun{SGoals}.

Although the theory developed in this paper is comprehensive for just
studying \noun{SGoals} with fixed parameters (like population size
and variation rates), it is a good starting point for studying adapting
\noun{SGoal}s (\noun{SGoal}s that adapt/vary some search parameters
as they are iterating). The central concept for doing that will be
the join of kernels (if we consider the space of the parameter values
as part of the $\sigma$-algebra). However, such study is far from
the scope of this paper. 

Our future work will concentrate on including in this formalization,
as many as possible, selection mechanisms that are used in \noun{SGoal}s,
and extending and developing the theory required for characterizing
both adaptable and Mixing \noun{SGoal}s.

\bibliographystyle{ieeetr}
\addcontentsline{toc}{section}{\refname}

\end{document}